\newcommand{\pp}{\bar{p}}
\newcommand{\qq}{\bar{q}}
\begin{document}

\title{Sig-Wasserstein GANs for \\Conditional  Time Series Generation}
\author{\name Shujian Liao \email shujian.liao.18@ucl.ac.uk \\
       University College London\\
       London, UK
       \AND 
       \name Hao Ni \email h.ni@ucl.ac.uk \\
       University College London\\
       London, UK
       \AND
       \name Marc Sabate-Vidales \email m.sabate-vidales@sms.ed.ac.uk \\
       University of Edinburgh\\
       Edinburgh, Scotland
        \AND
       \name Lukasz Szpruch \email lszpruch@turing.ac.uk \\
       University of Edinburgh\\
       Edinburgh, Scotland
       \AND
       \name Magnus Wiese \email wiese@rhrk.uni-kl.de \\
       University of Kaiserslautern\\
       Kaiserslautern, Germany
       \AND
       \name Baoren Xiao \email baoren.xiao.18@ucl.ac.uk \\
       University College London\\
       London, UK
       }

\editor{}

\maketitle

\begin{abstract}
Generative adversarial networks (GANs) have been extremely successful in generating samples, from seemingly high dimensional probability measures. However, these methods struggle to capture the temporal dependence of joint probability distributions induced by time-series data. Furthermore, long time-series data streams hugely increase the dimension of the target space, which may render generative modelling infeasible. To overcome these challenges, motivated by the autoregressive models in econometric, we are interested in the conditional distribution of future time series given the past information. We propose the generic conditional Sig-WGAN framework by integrating Wasserstein-GANs (WGANs) with mathematically principled and efficient path feature extraction called the signature of a path. The signature of a path is a graded sequence of statistics that provides a universal description for a stream of data, and its expected value characterises the law of the time-series model. In particular, we develop the conditional Sig-$W_1$ metric, that captures the conditional joint law of time series models, and use it as a discriminator. The signature feature space enables the explicit representation of the proposed discriminators which alleviates the need for expensive training. We validate our method on both synthetic and empirical dataset and observe that our method consistently and significantly outperforms state-of-the-art benchmarks with respect to measures of similarity and predictive ability.
\end{abstract}

\begin{keywords}
Generative adversarial networks; Conditional generative adversarial networks; Wasserstein generative adversarial networks; Rough path theory; Time series modelling.
\end{keywords}

\section{INTRODUCTION}
Ability to generate high-fidelity synthetic time-series datasets can facilitate testing and validation of data-driven products and enable data sharing by respecting the demand for privacy constraints, \cite{bellovin2019privacy,tucker2020generating,assefagenerating}.
Until recently, time-series models were mostly conceived by handcrafting a parsimonious parametric model, which would best capture the desired statistical and structural properties or the so-called stylized facts of the time series data. Typical examples are discrete time autoregressive econometric models, \cite{tsay2005analysis}, or continuous time stochastic differential equations (SDEs), \cite{karatzas1998brownian}. In many applications, such as finance and economics, one cannot base models on well-established ``physical laws'' and the risk of handcrafting inappropriate models might be significant. It is therefore tempting to build upon success of non-parametric unsupervised learning method such as deep generative modelling (DGM) to enable data-driven model selection mechanisms for dynamically evolving data sets such as time-series.
However, off-the-shelf DGMs perform poorly on the task of learning the temporal dynamics of multivariate time series data $x_{1:T} = (x_{1}, \cdots, x_{T}) \in \mathbb{R}^{d \times T}$ due to (1) complex interaction between temporal features and spatial features, and (2) potential high dimension for the joint distribution of $x$ (e.g when $T>>1$), see e.g. \cite{mescheder2018training}. 

In this work, we are interested in developing a data-driven non-parametric model for the conditional distribution $\text{Law}(x_{\text{future}}|x_{past})$ of future time series given  $x_{\text{past}}:=x_{t-\pp+1: t}$. This setting includes classical auto-regressive processes. Learning conditional distributions is particularly important in the cases of  (1) predictive modelling: it can be directly used to forecast future time series distribution given the past information;  (2) causal modelling: conditional generator can be used to produce counterfactual statements; and (3) building the joint law through conditional laws enables to incorporate a prior into the learning process which is necessary for building high-fidelity generators.   

Learning the conditional distribution is often more desirable than learning the joint law and can lead to more efficient learning with a smaller amount of data, \cite{ng2002discriminative,blanka2020}. To see that consider the following example.
\begin{example}[Auto regressive process]
Let $Z_t\sim N(0,\Sigma)$ be $d$-dimensional Gaussian random variable. Fix $a:\mathbb R^{d\times  \pp} \times \mathbb R^{d}\rightarrow \mathbb R^d$.  Define an auto-regressive process $(X_t)_{t\geq 0}$ with the initial condition $X_{1:\pp}=x_{1:\pp}$, as
$
X_{t+1}= a(X_{t-\pp+1:t},Z_{t+1}).
$
Hence one can see that 
$$\text{Law}(X_{1:T})=\prod_t\text{Law}(X_{t+1}| X_{t-\pp+1:t})\,.$$
As a consequence, the problem of learning distribution over $\mathbb{R}^{d\times T}$ can be reduced to learning conditional distribution over  $\mathbb R^{d}$.
\end{example}
In our setting, the conditional law is time invariant and hence having only one data trajectory $x_{1:T}$ gives $T-\pp-1$ samples. This should be contrasted with having one sample when trying to learn $\text{Law}(X_{1:T})$ directly.


\paragraph{Structure}The problem of calibrating a generative model in the time series domain is formulated in \autoref{sec:problem_formulation}. There we overview the key results of this work against the work available in literature. In \autoref{sec:sigs_and_expected_sigs}, we introduce the signature of a path formally. In \autoref{sec Sig-Wasserstein metric}, we establish key theoretical results of this work. In \autoref{sec Sig-Wasserstein GANs for Conditional law}, we present the algorithm
while in Section \ref{Section_Numerical_Experiment} we present extensive numerical experiments. 

\section{PROBLEM FORMULATION}
    \label{sec:problem_formulation}
 Fix $T>0$ and $X:=(X_1, \cdots, X_T) \in \mathbb{R}^{d \times T}$ is a $d$-dimensional time series of length $T$. Let $W$ be the window size (typically $W << T$).  Suppose that we have access to one realization of $X$, i.e. $\left(x_{1}, \cdots, x_{T}\right)$ and then obtain the $N$ copies of time series segment of a window size $W$ by sliding window. We assume that for each $t$, the time series segment $(x_{t+1}: \cdots, x_{t+W})$ is sampled from the same but \textit{unknown} distribution on the time series (path) space $\mu \in \mathcal P(\mathbb{R}^{d \times W})$. The objective of the \textit{unconditional} generative model is to train a generator such as to produce a $\mathbb{R}^{d \times W}$-valued random variable whose law is close to $\mu$ using time series data $x$.\footnote{For any distribution $\mu^X \in \mathcal P(\mathbb{R}^{d \times W})$, one can construct a stochastic process $X: \Omega \rightarrow \mathbb{R}^{d \times W}$, such that $\text{Law}(X)=\mu^X$, see \cite[Proposition 9.1.2 and Theorem 13.1.1]{dudley1989real}.}

In contrast, this paper focuses on the task of the \textit{conditional} generative model of future time series when conditioning on past time series. Let $\pp, \qq$ denote the window size of the past time series $X_{\text{past}, t}:= (X_{t-\pp+1}, \cdots, X_{t}) \in \mathbb{R}^{d \times \pp}  =: \mathcal{X}$ and future time series $X_{\text{future}, t}:= (X_{t+1}, \cdots, X_{t+\qq}) \in \mathbb{R}^{d \times \qq} =: \mathcal{Y}$, respectively. Assume that the joint distribution of $(X_{\text{future}, t}, X_{\text{past}, t}) = (X_{t-\pp+1, t+\qq})$ does not depend on time $t$. Given a realization of time series $(x_1, \cdots, x_T)$, at each time $t$, the pairs of past path $x_{\text{past}, t}:= (x_{t-\pp+1}, \cdots, x_{t}) \in \mathcal{X}$ and future path $x_{\text{future}, t}:= (x_{t+1}, \cdots, x_{t+\qq}) \in \mathcal{Y}$ are sampled from the same but \textit{unknown} distribution of $\mathcal{X} \times \mathcal{Y}$-valued random variable, denoted by $\left(X_{\text{past}}, X_{\text{future}}\right)$. We aim to train a generator to produce the conditional law, denoted by $\mu_{t}(x):=\text{Law}(X_{\text{future}, t}| X_{\text{past}, t}=x)$. As $\mu_{t}(x)$ is independent with $t$ and hence we write $\mu(x)$ for simplicity. But of course, the methodology developed here all applies if one can access a collection of $(X^{{(i)}}_{\text{past}}, X^{{(i)}}_{\text{future}})_{i=1}^N$ of $N$ independent copies of the past and future time series for $N\geq 1$. 

More specifically, the aim of the conditional generative model is to map samples from some basic distribution $\mu^{Z}$ supported on $\mathcal{Z}\subseteq \mathbb R^{d_{z}}$ together with data $x_{\text{past}, t}$ into samples from the conditional law $\mu(x_{\text{past},t})$. Given latent  $(\mathcal{Z}, \mathcal{B}(\mathcal{Z}))$, conditional  $(\mathcal{X}, \mathcal{B}(\mathcal{X}))$ and target $(\mathcal{Y}, \mathcal{B}(\mathcal{Y}))$ measurable spaces, one considers a map $G: \Theta^{(g)} \times \mathcal{X} \times \mathcal{Z}  \rightarrow  \mathcal{Y}$, with $\Theta^{(g)}$ being a parameter space. Given parameters $\theta^{(g)} \in \Theta^{(g)}$ and $x_{\text{past},t}$ , $G(\theta^{(g)},x_{\text{past},t})$ transports $\mu_{z}$ into $\nu(\theta^{(g)}, x_{\text{past},t}):=G(\theta^{(g)},x_{\text{past}})_{\#}\mu_{z}= \mu_{z}(G(\theta^{(g)},x_{\text{past},t})^{-1}(B))$, $B\in\mathcal{B}(\mathcal{Y})$. The aim is to find $\theta$ such that $\nu(\theta, X_{\text{past},t})$ is a good approximation of $\mu(X_{\text{past},t})$ with respect to a suitable metric. Often the metric of choice is a Wasserstein distance which leads to
\begin{eqnarray} \label{def cw1 metric}
\text{W}_{1}\left(\mu(x_{\text{past}}), \nu(\theta^{(g)}, x_{\text{past}})\right) = \sup_{|f|_{\text{Lip}} \leq 1 } \mathbb{E}_{\mu(x_{\text{past}})}[f(X_{\text{future}}) ] - \mathbb{E}_{\nu(\theta^{(g)},x_{\text{past}})}[f(X_{\text{future}}) ]
\end{eqnarray}
The optimal transport metrics, such as Wasserstein distance, are attractive due to their ability to capture meaningful geometric features between measures even when their supports do not overlap, but are expensive to compute, \cite{genevay2018sample}. 
Furthermore, when computing Wasserstein distance for conditional laws, one needs to compute the conditional expectation $ \mathbb{E}_{\mu(x_{\text{past}})}[f(X_{\text{future}}) ]$ using input data. In the continuous setting studied in this paper, this is computationally heavy and typically will introduce additional bias (e.g. due to employing least square regression to compute an approximation to the conditional expectation).

Since our aim is to learn the conditional law for all possible conditioning random variables, we consider 
$$\mathbb E_{X_{\text{past}} \sim \mu}[ \text{W}_{1}\left(\mu(X_{\text{past}}), \nu(\theta, X_{\text{past}})\right)].$$
Note that, since $\text{W}_1$ is non-negative, $\mathbb E[ \text{W}_{1}\left(\mu(X_{\text{past}}), \nu(\theta, X_{\text{past}})\right)]=0$ implies that $\mu(x_{\text{past}}) =  \nu(\theta, x_{\text{past}})$ almost surely. 

\paragraph{Challenges in implementing $W_{1}$-GAN for conditional laws.} There are two key challenges when one aims to implement $W_{1}$-GAN for conditional laws. 

\textbf{Challenge 1: Min-Max problem}
A typical implementation of $W_{1}$-GAN would require introduction of a parametric function approximation $\Theta^{(d)} \times \mathcal Y\ni (\theta^{(d)},\omega)\mapsto f(\theta^{(d)},\omega)$ such that $\omega \mapsto  f(\theta^{(d)},\omega)$ is 1-Lip. In the case of neural network approximation, this can be achieved by clipping the weights or adding penalty that ensures $\nabla_{\omega}  f(\theta^{(d)},\omega)$ is less than $1$, see \cite{gulrajani2017improved}. Recall a  definition of $W_1$ in \eqref{def cw1 metric} and define
\[
\ell(\theta^{(g)},\theta^{(d)}):= \mathbb E_{X_{\text{past}}\sim \mu} \left[
\mathbb{E}_{\mu(X_{\text{past}})}[f(\theta^{(d)},X_{\text{future}}) ] - \mathbb{E}_{\nu(\theta^{(g)},X_{\text{past}})}[f(\theta^{(d)},X_{\text{future}}) ] \right]\,.
\]
Training conditional $W_1$-GAN constitutes solving the min-max problem  
\begin{equation}\label{eq:minmax}
\min_{\theta^{(g)}}   \max_{\theta^{(d)}}\ell(\theta^{(g)},\theta^{(d)})\,.
\end{equation}
In practice, the min-max problem is solved by iterating gradient descent-ascent algorithms and its convergence can be studied using tools from game theory \cite{mazumdar2019finding,lin2020gradient}. However, it is well known that the first order method, which is typically used in practice,  might not converge even in the  convex-concave case, \cite{daskalakis2018limit,daskalakis2017training,mertikopoulos2018cycles}. Consequently, the adversarial training is notoriously difficult to tune, \cite{farnia2020gans,mazumdar2019finding}, and generalisation error is very sensitive to the choice of discriminator and hyper-parameters, as it was demonstrated in large scale study in \cite{lucic2017gans}.

\textbf{Challenge 2: Computation of the conditional expectation} In addition to the challenge of solving a min-max for each new parameter $\theta^{(d)}$, one needs to compute the conditional expectation  $\mathbb{E}_{\mu(X_{\text{past}})}[f(\theta^{(d)},X_{\text{future}}) ]$ (or $\mathbb{E}_{\mu(X_{\text{past}})}[\nabla_{\theta^{(d)}}f(\theta^{(d)},X_{\text{future}}) ]$ if one can interchange differentiation and integration). From Doob-Dynkin Lemma we know that this conditional expectation is a measurable function of $X_{\text{past}}$ and approximation of these is computationally heavy and can be recast as a mean-square optimisation problem
\[
\mathbb E[ | f(\theta^{(d)},X_{\text{future}}) -  \mathbb{E}_{\mu(X_{\text{past}})}[f(\theta^{(d)},X_{\text{future}}) ]  |^2 ] = \inf_{h \,\, \text{measurable}}
\mathbb E[ | f(\theta^{(d)},X_{\text{future}}) -  h(X_{\text{past}})  |^2 ]\,.
\]
Practical solution of this problem requires an additional function approximation which may introduce additional bias and makes the overall algorithm much harder to tune.

\subsection{Summary of the key results} \label{sec summary}

Discrete time econometric models can be viewed as discretisation of certain SDEs type models, \cite{kluppelberg2004continuous}. The continuous time perspective by embedding discrete time series into a path space, which we follow in this paper, is particularly useful when learning from irregularly sampled data sets and designing efficient training methods that naturally scale when working with high and ultra high frequency data \cite{liu2019neural,gierjatowicz2020robust,cuchiero2020generative}. Our approach utilises the signature of a path which is a mathematical object that emerges from rough-path theory and provides a highly abstract and universal description of complex multimodal data streams that has recently demonstrated great success in several machine learning tasks \cite{xie2017learning, yang2017leveraging, kidger2019deep}. To be more precise, we add a time dimension to $\tilde{d}$ dimensional time series $(x_t)_{t = 1}^{T}$ and embed it into $X: [0, T] \rightarrow E:=\mathbb{R}^{d}$ with $d = \tilde{d} +1$. For example, this is easily done by linearly interpolating discrete time data points. We assume that $X$ is regular (c.f Section \ref{subsec:ts2path}) and denote the space of all such regular paths by $\Omega_{0}([0, T], E)$.
The signature of a path determines the path up to tree-like equivalence \cite{UniquenessOfSignature, boedihardjo2014uniqueness}. Roughly speaking, there is an almost one-to-one correspondence between the signature and the path, but when restricting the path space to $\Omega_{0}([0, T], E)$, the signature (feature) map $S: x \mapsto S(x)$, $x \in \Omega_{0}([0, T], E)$, is bijective. In other words, the signature of a path in $\Omega_{0}([0, T], E)$ determines the path completely \cite{levin2013learning}.  Let $S(\Omega_{0}([0, T], E))$ denote the range of the signature of all the possible paths in $\Omega_{0}([0, T], E)$. Note that 
the signature map $S$, defined on $\Omega_0([0, T], E)$, is continuous with respect to the 1-variation topology \cite{lyons2007differential}. A remarkable property of the signature is the following universal approximation property:

\begin{theorem}[Universality of Signature \cite{levin2013learning}]\label{th sig} Consider a compact set $\mathcal{K} \subset S(\Omega_{0}([0, T], E))$. Let $\bm{f}: \mathcal{K} \to \mathbb{R}$ be any continuous function. Then, for any $\epsilon>0$, there exists a linear functional $\bm{L} \in T((E))^*$ acting on the signature such that 
\begin{equation}
    \sup_{S \in \mathcal{K}}|\bm{f}(S) - \bm{L}(S)| < \epsilon.
\end{equation}
\end{theorem}

Theorem \ref{th sig} applies to any subspace topology on $(S(\Omega_{0}([0, T], E))$, which is inherited from the Hausdorff topology $T((E))$, that is finer than the weak topology.  The theorem tells us that any continuous functional on the signature space can be arbitrarily well approximated by a linear combination of coordinate signatures.

Since the signature $S$ is bijective and continuous when restricting the path space to $\Omega_{0}([0, T], E)$,  the pushforward of the measure on the path space, $\bm{\mu}(B):=(S_{\#}\mu)(B) = \mu(S^{-1}(B))$ for $B$ in the $\sigma$-algebra of $S(\Omega_{0}(J,E))$, induces the measure on the signature space. With this in mind, the $W_1$ on the signature space is given by 
\[
W_1^{\text{Sig}}(\mu, \nu):= \sup_{||\bm{f}||_{Lip, \mathcal{K}} \leq 1} \mathbb{E}_{S \sim \bm{\mu}}[\bm{f}(S)] - \mathbb{E}_{S \sim \bm{\nu}}[\bm{f}(S)]\,.
\]
 
Motivated by the universality of signature, we consider the following $\text{Sig-}W_{1}$ metric as the proxy for $W_1^{\text{Sig}}$ by restricting the admissible test functions to be linear functionals:
 \[
\text{Sig-}W_{1}(\mu, \nu) = \sup_{\vert\vert L \vert \vert_{Lip} \leq 1, \bm{L} \text{ is a linear functional}}  \mathbb{E}_{ S \sim{\bm{\mu}}}[\bm{L}(S)] - \mathbb{E}_{S \sim{\bm{\nu}}}[\bm{L}(S)]\,.
\]

The Sig-$W_1$ metric was initially proposed in \cite{ni2021sig}, where the Lipschitz norm of $f$ is obtained by endowing the underlying signature space equipped with the $l^2$ norm. Here, we consider a more general case, where the norm of the signature space is chosen as $l^{\mathrm{p}}$ for some $\mathrm{p} >1$.

In Lemma \ref{Lemma_Sig_W1}, we show that when 
$$
||\bm{L}||_{Lip} := \sup_{x \neq y, x, y \in \bm{T}^{\mathrm{p}(E)} } \frac{|\bm{L}(x-y)|}{ ||x-y||_{\mathrm{p}}},\text{ for some }\mathrm{p}\geq 1, 
$$
where $\bm{T}^{\mathrm{p}}(E)$ is the set of all the tensor series elements with finite $l^{\mathrm{p}}$ norm,
then Sig-$W_1$ admits analytic formula 
\begin{equation*}
	\text{Sig-}W_{1}(\mu, \nu)= \|\mathbb{E}_{S \sim \bm{\mu}}[S] - \mathbb{E}_{S \sim \bm{\nu}}[S]\|_{\mathrm{p}}\,.
\end{equation*}
The significance of this result is that Sig-$W_1$-GAN framework reduces the challenging min-max problem to supervised learning, without severing loss of accuracy when compared with Wasserstein distance on the path space. Figure~\ref{fig:VAR(1) dim2 score metrics intro} of the 2-dimensional VAR(1) dataset illustrates that the SigCWGAN helps stablize the training process and accelerate the training to converge compared with the CWGAN when keeping the same conditional generator for both methods. 

\begin{figure}
	\centering
	\begin{subfigure}{\linewidth}
		\centering
		\includegraphics[width=\textwidth]{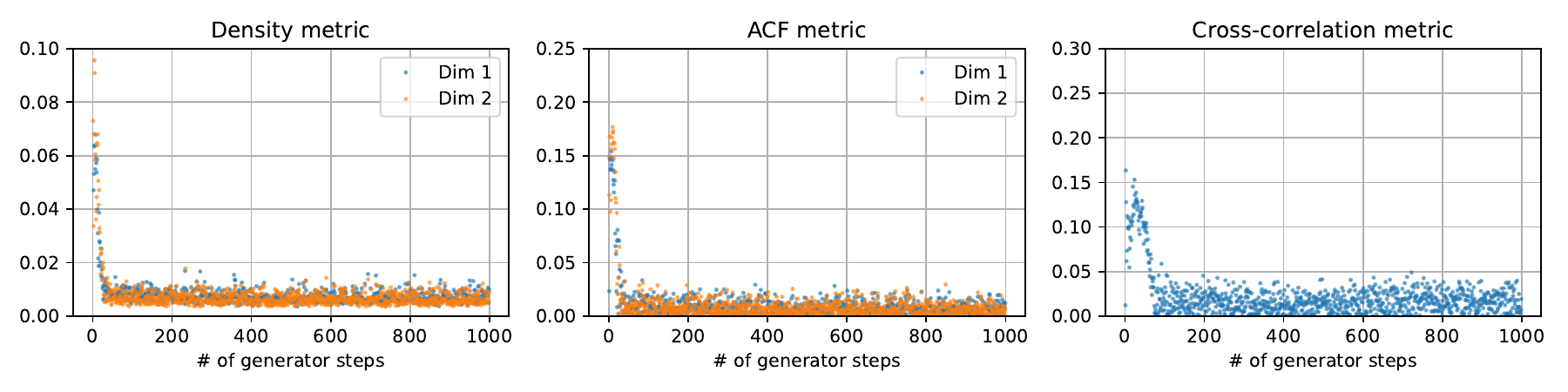}
		\caption{SigCWGAN}
	\end{subfigure}
	\begin{subfigure}{\linewidth}
		\centering
		\includegraphics[width=\textwidth]{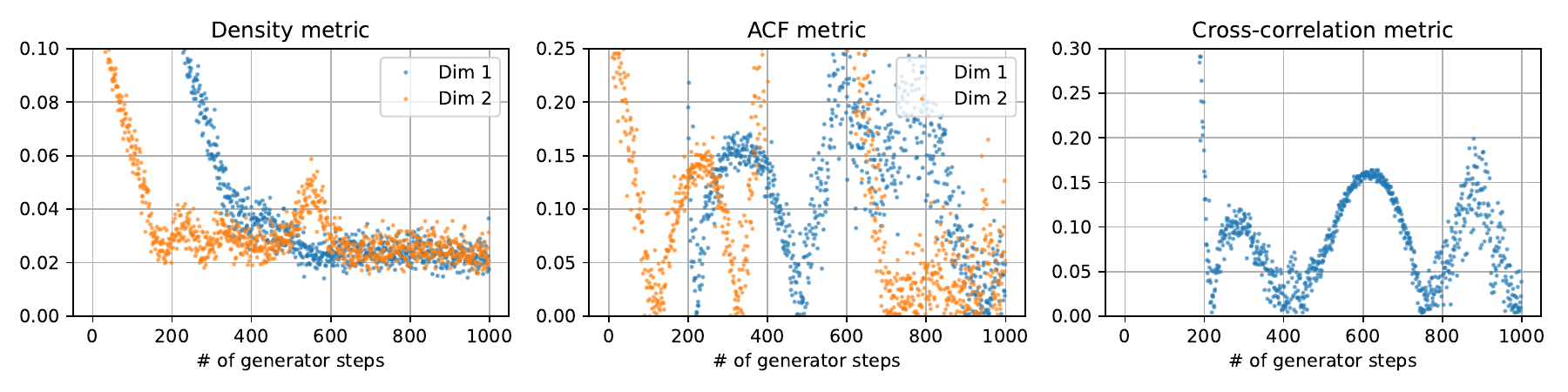}
		\caption{CWGAN}
	\end{subfigure}
\caption{Comparison across three performance metrics (see section \ref{Section_Numerical_Experiment}) of training SigCWGAN with loss function \eqref{eq sigcwgan loss} and CWGAN with loss function \eqref{eq:minmax}
for $2$-dimensional $\operatorname{VAR}(1)$, given by 
$X_{t+1} = \phi X_{t} +\epsilon_{t+1}$
with $(\epsilon_{t})_{t=1}^{T}$  iid Gaussian-distributed random variables with co-variance matrix $\sigma \mathbf{1} + (1-\sigma) \mathbf{I}$ and autocorrelation coefficient $\phi=0.8$ and co-variance parameter $\sigma=0.8$.
The explicit form of the model allows for an unbiased approximation of conditional expectation in \eqref{eq:minmax} using Monte Carlo samples.  
The colours blue and orange indicate the relevant distance/score for each dimension.}
\label{fig:VAR(1) dim2 score metrics intro}
\end{figure}

tting studied here, we lift both $(X_{\text{past}},X_{\text{future}})$ into the signature space, that is $(X_{\text{past}},X_{\text{future}})\mapsto (S_{\text{past}},S_{\text{future}}) := (S(X_{\text{past}}),S(X_{\text{future}}))$. The corresponding Sig-$W_1$ distance is given 
\begin{equation*}
\begin{split}
  	\text{Sig-}W_{1}(\mu(S_{\text{past}}), \nu(S_{\text{past}}))
  	&= \|\mathbb{E}_{S \sim \bm{\mu}(S_{\text{past}})}[S] - \mathbb{E}_{S \sim \bm{\nu}(S_{\text{past}})}[S]\|_{\mathrm{p}} \\
  	& =
  	\|\mathbb{E}_{S \sim \bm{\mu}}[S_{\text{future}} \mid S_{\text{past}} ] - \mathbb{E}_{S \sim \bm{\nu}}[S_{\text{future}}\mid S_{\text{past}}]\|_{\mathrm{p}}.
  	\,.  
\end{split}
\end{equation*}
where $S$ denotes $(S_{\text{past}}, S_{\text{future}})$.
From Doob-Dynkin Lemma we know that the conditional expectations are measurable functions of $S_{\text{past}}$. Assuming the continuity of conditional expectation, and by the universal approximation results, these can be approximated arbitrarily well by linear functional of signature. Hence we have \begin{eqnarray}\label{eqn:conditionalExpReg}
\mathbb E_{S \sim \bm{\mu}}[ |S_{\text{future}} -  \mathbb{E}_{\bm {\mu}(S_{\text{past}})}[S_{\text{future}} ]  |^2 ] \approx \inf_{L \,\, \text{linear functional}}
\mathbb E_{S \sim \bm{\mu}}  [ | S_{\text{future}} -  L(S_{\text{past}})  |^2 ]\,.
\end{eqnarray}

Due to linearity of the functional $L$, the solution of the above optimization problem can be estimated by linear regression.
 
Let $\hat{\bm{L}}$ denote the linear regression estimator of the conditional expectation  $x \mapsto \mathbb{E}_{S \sim \bm{\mu}}[S_{\text{future}} |S_{\text{past}} = x]$.

Unlike classical $W_1$-GAN described above, the conditional expectation under the data measure needs to be computed only once. Complete training is then reduced to solving following supervised learning problem 

\begin{equation} \label{eq sigcwgan loss}
    \ell(\theta^{(g)}) := \mathbb E^{X_{\text{past}}} \left[ || \hat{\bm{L}}(S_{\text{past}}) - \mathbb{E}_{S_{\text{future}} \sim \bm{\nu}(\theta^{(g)}, X_{\text{past}})}[S_{\text{future}}]||_{\mathrm{p}}  \right]\,.
\end{equation}

Note that for each $\theta^{(g)}$ one needs to approximate $\mathbb{E}_{S \sim \bm{\nu(\theta^{(g)})}}[S_{\text{future}}] $ using Monte Carlo simulations. A complete approximation algorithm also requires Monte Carlo approximation of outer expectation and truncation of the signature map (see Section \ref{subsec_CSIGWGN} for exact details). The flowchart of SigCWGAN algorithm is given in Figure \ref{fig:flowchar_sigcgn}. 
\begin{figure}[!ht]
    \centering    
    \includegraphics[width =1 \textwidth]{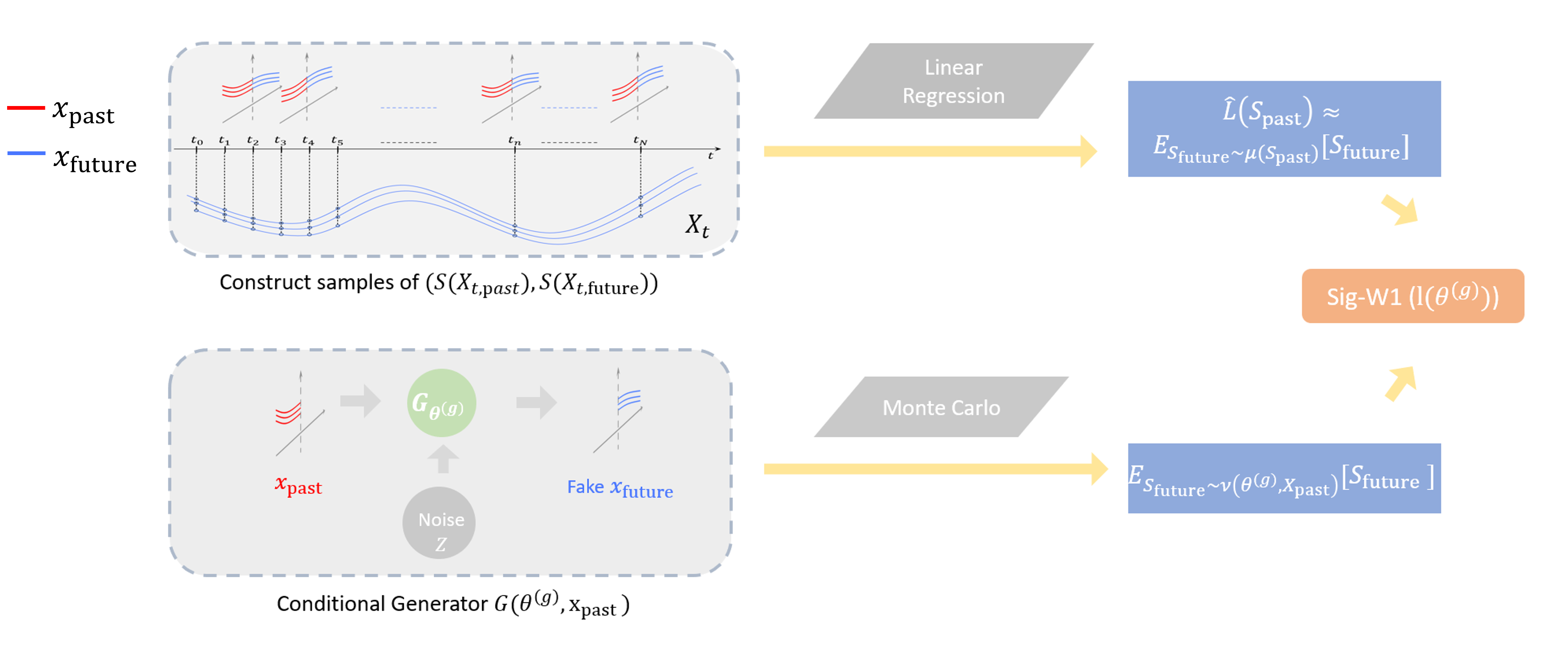}
    \caption{The illustration of the flowchart of SigCWGAN.}
    \label{fig:flowchar_sigcgn}
\end{figure}
    
\subsection{Related work}
\label{sec:related_work}

In the time series domain, the unconditional generative model was approached by various works such as \cite{koshiyama2019generative, wiese2020quant}. Among the signature-based models, \cite{kidger2019deep} used Sig-MMD, originated in \cite{chevyrev2018signature}, a version of the maximum mean discrepancy (MMD) with the signature feature, to generate the Ornstein–Uhlenbeck process. Independently, \cite{ni2021sig} proposed the Sig-Wasserstein GAN motivated by combining the Wasssertain-1 distance and the signature feature. Also the conditional generative objective was approached by various authors. \cite{esteban2017realvalued, koochali2020if, fu2019time, wiese2019DH} used FNNs / LSTMs with recurrent conditional GANs (RCGANs), \cite{donahue2019AdversarialAS, engel2019GANSynthAN} use GANs to generating log-magnitude spectrograms and phases directly for audio synthesis, and \cite{blanka2020} pair log-signatures with variational autoencoders (VAEs) and formulate a conditional generator in log-signature space. Conditional VAEs with the log-signature in \cite{blanka2020} are well adapted to small data environment, but it may require an additional step of inverting synthetic log-signature to the path for time series generation. TimeGAN \cite{yoon2019time} demonstrates the improvement by adding the supervised loss to the adversarial loss to force network to adhere to the dynamics of the training data during sampling. The supervised loss of TimeGAN is defined in terms of the sample-wise discrepancy between the true latent variable $h_{t+1}$ and the generated one-sample estimator $\hat{h}_{t+1}$ given $h_t$. However, even if the estimator $\hat{h}_{t+1}$ has the same conditional distribution as $h_{t+1}$, the supervised loss may not be equal to zeros, and hence it suggests that the proposed loss function might not be suitable to capture the conditional distribution of the latent variable $h_{t+1}$ given the $h_{t}$.

Conditional moment matching network (CMMN) introduced in \cite{ren2016conditional} derives the conditional MMD criteria based on the kernel mean embedding of conditional distributions, which avoids the approximation issues mentioned in the above conditional WGANs. However, the performance of CMMN depends on the kernel choice and it is yet unclear how to choose the kernel on the path space. While our SigWGAN method is built on the conditional WGANs and the signature features, we would like to highlight the difference of method to the conditional WGAN and its link to CMMD. SigCWGAN resolves the computational bottleneck of the conditional WGANs given the past time series by using the analytic formula for the conditional discriminator without training. Building upon \cite{ni2021sig}, our work expands the SigWGAN framework from its initial application to unconditional generative models to enable conditional generative modelling. Moreover, one can view the SigCWGAN as the combination of unnormalized Sig-MMD (\cite{chevyrev2018signature}) and CMMD, which has not been explored in the literature. It is worth noting that we also extend the definition of Sig-$W_{1}$ in \cite{ni2021sig}, from the $l^2$ norm of the signature space to the general $l^{\mathrm{p}}$ for some $\mathrm{p} >1$.

\begin{table}[!ht]
    \centering
    \begin{tabular}{l|l}
    \hline
       Symbol  & meaning \\
       \hline
       $E$ &  $E = \mathbb{R}^{d}$ \\
       $T((E))$& the tensor algebra space of $E$\\
        $\bf{T}^{\mathrm{p}}(E)$&the set of all the elements in $T((E))$ with finite $l^{\mathrm{p}}$ norm \\
        $||.||_{\mathrm{p}}$ & the $l^{\mathrm{p}}$ norm on $\mathbf{T}^{\mathrm{p}}(E)$ \\
        $||.||_{\mathrm{q}}$ & the $l^{\mathrm{q}}$ norm on $T((E))^{*}$ \\
        $X $& $E$-valued time series of length $T$, i.e. $X= (X_1, \cdots, X_T) \in \mathbb{R}^{d \times T}$\\
        $X_{t, \text{past}}$& the $\pp$ lagged values of $X_t$, i.e. $(X_{t-p+1}, \cdots, X_{t}) \in \mathbb{R}^{d \times \pp} =: \mathcal{X}$.\\
        $X_{t, \text{future}}$& the next $\qq$ step forecast of $X_t$, i.e. $(X_{t+1}, \cdots, X_{t+\qq})\in \mathbb{R}^{d \times \qq} =: \mathcal{Y}$.\\
        $\pp$& the window size of the past path $X_{t, \text{past}}$\\
       $\qq$& the window size of the future path $X_{t, \text{future}}$\\
        $S_{t, \text{past}}$& the signature of $X_{t, \text{past}}$\\
        $S_{t, \text{future}}$& the signature of $X_{t, \text{future}}$\\
         \hline
    \end{tabular}
    \caption{Notation summary table}
    \label{tab:my_label}
\end{table}    

\section{SIGNATURES and EXPECTED SIGNATURES}
\label{sec:sigs_and_expected_sigs}
In order to introduce formally the optimal conditional time series discriminator, in this section we recall basic definitions and concepts from rough path theory. 
\subsection{Tensor algebra space}
We start with introducing the tensor algebra space of $E$, where the signature of a $E$-valued path takes values. For simplicity, fix $E=\mathbb{R}^{d}$ throughout the rest of the paper. $E$ has the canonical basis $\left\{e_{1},\ldots ,e_{d}\right\} $. Consider the successive tensor powers $E^{\otimes n}$ of $E$.\footnote{The tensor power $E^{\otimes n}$ is defined based on the concept of the tensor product. Consider two vector spaces $V$ and $W$ over the same field $F$ with basis $B_V$ and $B_W$, respectively. The tensor product of $V$ and $W$, denoted by $V \otimes W$, is a vector space consisting of basis $b \otimes b'$, where $b \in B_V$ and $b' \in B_W$ that is equipped with a bilinear map $\otimes$. Here $b \otimes b'$ can be regarded as a function $V \times W \rightarrow \mathbb{R}$, which maps every $(v, w)$ to $\mathbf{1}_{v=b, w=b'}$. For any two elements $v = \sum_{b \in B_V} v_b ~ b \in V$ and $ w = \sum_{b' \in B_W}  w_{b'}  ~ b' \in W$, then $v \otimes w = \sum_{b\in B_V, b'\in B_W} (v_b w_b) ~ b \otimes b'$.} If one thinks of the elements $e_{i}$ as letters, then $E^{\otimes n}$ is spanned by the words of length $n$ in the letters $\left\{ e_{1},\ldots,e_{d}\right\} $, and can be identified with the space of real homogeneous non-commuting polynomials of degree $n$ in $d$ variables, i.e., $(e_{I}:= e_{i_{1}} \otimes \cdots \otimes e_{i_n})_{I = (i_1, \cdots, i_n) \in \{1, \cdots, d\}^{n}}$. We give the formal definition of the tensor algebra series as follows.


\begin{definition}
The space of all formal $E$-tensors series, denoted by $T\left(\left( E\right) \right)$ is defined to be the following space of infinite series:
\begin{eqnarray*}
T((E)) = \left\{\mathbf{a} = (a_0, a_1, \cdots) \Big \vert a_n \in E^{\otimes n}, \forall n \geq 0 \right\}.
\end{eqnarray*}
It is equipped with two operations, an addition and a product defined as follows: $\forall \mathbf{a} = (a_0, a_1, \cdots ), \mathbf{b} = (b_0, b_1, \cdots) \in T((E))$, it holds that
\begin{eqnarray*}
\mathbf{a}+\mathbf{b} &=& (a_0+b_0, a_1+b_1, \cdots );\\
\mathbf{a} \otimes \mathbf{b} &=&  (c_0, c_1, \cdots).
\end{eqnarray*}
where $c_n = \sum_{j=0}^{n} a_j \otimes b_{n-j}$.
\end{definition}

We endow the space $T((E))$ with the action of $\mathbb{R}$ by $\lambda \mathbf{a} = (\lambda a_0, \lambda a_1, \cdots)$ is a real non-communtative untial algebra with the unit $\mathbf{1} = (1, 0, 0, \cdots)$ \cite{lyons2007differential}. 


Let us first introduce the function $|| \cdot ||_{\mathrm{p}}: T((E)) \rightarrow [0, +\infty]$ for some $\mathrm{p}\geq 1$. For any element $\mathbf{a}:= \sum_{n \in \mathbb{N}} \sum_{I \in \{1, \cdots, d\}^{n}} a_{I} e_I\in T((E))$,
\begin{eqnarray}\label{def_a_lq}
||a||_\mathrm{p} = \left(\sum_{n \in \mathbb{N}} |a_n|_{\mathrm{p}}^{\mathrm{p}}\right)^{1/\mathrm{p}},
\end{eqnarray}
where $|a_n|_{\mathrm{p}} = \left(\sum_{I \in \{1, \cdots, d\}^{n}} |a_{I}|^{\mathrm{p}}\right)^{\frac{1}{\mathrm{p}}}$.

Similarly, we define the map $||\cdot ||_{\mathrm{q}}: T((E))^{*} \rightarrow [0, + \infty]$. Define the canonical basis of the dual space $T((E))^{*}$, i.e. $(e_{I}^{*})_{I = (i_1, \cdots, i_n) \in  \{1, \cdots, d\}^{n}, n \in \mathbb{N}}$
by $\langle e_{I_1}^{*}, e_{I_{2}} \rangle  = \mathbf{1}_{I_1 = I_2}$. For any $L \in T((E))^{*}$, one can write 
\begin{eqnarray*}L = \sum_{n \in \mathbb{N}} \sum_{I \in \{1, \cdots, d\}^{n}} L_I e^{*}_{I}.
\end{eqnarray*} 
Then $||L||_{\mathrm{q}}$ is defined as 
\begin{eqnarray}\label{def_L_lq}
||L||_\mathrm{q} = \left(\sum_{n \in \mathbb{N}} \sum_{I = (i_1, \cdots, i_n) \in \{1, \cdots, d\}^{n}} |L_{I}|^{\mathrm{q}}\right)^{1/\mathrm{q}}.
\end{eqnarray}

In particular, we consider the subspace $\mathbf{T}^{\mathrm{p}}(E)$, consisting with all the elements $a \in T((E))$ with finite $||a||_{\mathrm{p}}$. In this case, $|| \cdot ||_{\mathrm{p}}$ becomes the $l^{\mathrm{p}}$ norm of $\mathbf{T}^{\mathrm{p}}((E))$. 

\begin{definition}\label{def_lp_tensor_space}
Fix some $\mathrm{p} \geq 1$. We denote by $\mathbf{T}^{\mathrm{p}}(E)$ the following space equipped with the $l^{\mathrm{p}}$ topology:
\begin{eqnarray*}
\mathbf{T}^{\mathrm{p}}(E):=\left\{ \mathbf{a} \in T((E)) \big \vert \quad ||\mathbf{a}||_{\mathrm{p}} < + \infty \right\}.
\end{eqnarray*}
Furthermore, we define
\begin{eqnarray*}
\tilde{T}^{\mathrm{p}}(E) = \{\mathbf{a} \in \mathbf{T}^{\mathrm{p}}(E) \big \vert a_0 = 1 \}.
\end{eqnarray*}
\end{definition}
In practice, instead of the signature (an infinite series of $E$-tensors), we often work with the truncated signature. Hence we introduce the corresponding truncated tensor algebra space.

\begin{definition}
Let $n\geq 1$ be an integer. Let $B_{n}=\{\mathbf{a}%
=(a_{0},a_{1},...)|a_{0}=...=a_{n}=0\}.$The truncated tensor algebra $%
T^{(n)}(E)$ of order $n$ over $E$ is defined as the quotient algebra
\begin{equation}
T^{(n)}(E)=T\left( \left( E\right) \right) / B_{n}.
\end{equation}%
The canonical homomorphism $T\left( \left( E\right) \right) \longrightarrow
T^{(n)}(E)$ is denoted by $\pi _{n}$.
\end{definition}

\subsection{Signature of time series}
\paragraph{Embed time series in the path space}\label{subsec:ts2path}
The signature feature takes a continuous function perspective on discrete time series. It allows the unified treatment on irregular time series (e.g. variable length, missing data, uneven spacing, asynchronous multi-dimensional data) to the path space [\cite{chevyrev2016primer}]. To embed time series to the signature space, we first lift discrete time series to a continuous path of bounded $1$-variation. 

Let $\bar{x} = (x_{t})_{t = 1}^{T} \in \mathbb{R}^{\tilde{d} \times T}$ be a $\tilde{d}$-dimensional time series of length $T$. We embed $\bar{x}$ to $X: [0, T] \rightarrow \mathbb{R}^{d}$ with $d = \tilde{d}+1$ as follows: (1) Interpolate the cumulative sum process of $\bar{x}$ to get the $d$-dimensional piecewise linear path; (2) Add the time dimension to the $0^{th}$ coordinate of $X$. 

Let $\Omega_{0}([0, T], \mathbb{R}^{d})$ denote the space of continuous $d$ dimensional paths of finite $1$-variation starting from the origin, with the $0^{th}$ coordinate being the time dimension. We endow $\Omega_{0}([0, T], \mathbb{R}^{d})$ with the $1$-variation metric.\footnote{One may refer Definition \ref{p_variation}, Appendix \ref{SupplementaryMaterial} for the $p$-variation metric of a path.}
For any $\tilde{d}$-dimensional time series, its embedded path $X$ lives in $\Omega_{0}([0, T], \mathbb{R}^{d})$. Figure (\ref{fig:timeSeries2Path}) gives one concrete example to illustrate the time series embedding. 
\begin{figure}[t]
    \centering
    \includegraphics[width=1\textwidth]{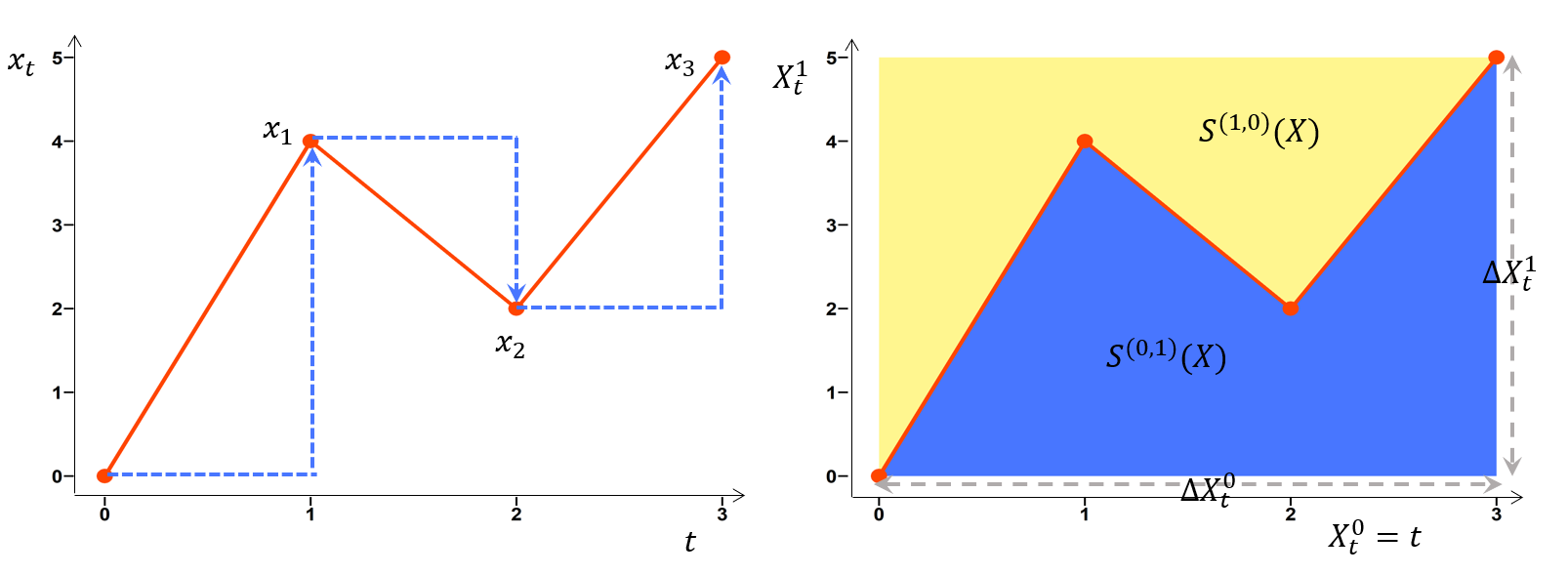}
    \caption{(Left) Embed one dimensional time series $\bar{x} = (x_{1}, x_{2}, x_{3})$ to $X$ (in blue) in the path space. First we compute $(X_{t})_{t = 0}^{3}$, which is the cumulative sum of $(x_{t})_{t  = 1}^{3}$, i.e. $X_{0} = 0$ and $X_{t}^{(1)} = \sum_{i = 1}^{t}x_{i}$, $X_{t}^{(0)} = t$ for $t = 1, 2, 3$. Then we linearly interpolate $X$ to a continuous path in $\Omega_{0}([0, 3], \mathbb{R}^{2})$; (right) Embed the time series to the path space and visualize the low order signature.}
    \label{fig:timeSeries2Path}
\end{figure}

Throughout the rest of the paper, we restrict our discussion on the path space $\Omega_{0}(J, E)$. However, our methodology discussed later can be applied to other methods of transforming discrete time series to the path space provided that the embedding ensures the uniqueness of the signature. The commonly-used path transformations with such uniqueness property are listed in Subsection \ref{subsec:path transformation}.

\paragraph{The signature of the path}
We first introduce the $k$-fold iterated integral of a path $X \in \Omega_{0}([0, T], \mathbb{R}^{d})$. Let $I = (i_{1}, \ldots, i_{k})$ be a multi-index of length $k$, where $i_1, \ldots, i_{n} \in \{0, 1, 2, \cdots, d-1\}$. Let $X^{(i)}$ denote the $i^{th}$ coordinate of $X$, which is a real-valued function. The iterated integral of $x$ indexed by $I$ is defined as 
\begin{eqnarray*}
S^{(i_{1}, i_{2}, \ldots, i_{k})}(X) = \int_{0<t_{1} <t_{2}<\cdots <t_{k}<T} dX_{t_{1}}^{(i_{1})}  dX_{t_{2}}^{(i_{2})} \cdots  dX_{t_{k}}^{(i_{k})}.  
\end{eqnarray*}

Collecting the iterated integrals of $X$ with all possible indices of length $k$ gives rise to the $k^{th}$ fold iterated integral of $X$. It can be also written in the tensor form, i.e.,
\begin{eqnarray*}
\int_{0 \leq t_1 \leq t_2 \leq \cdots \leq t} dX_{t_1} \otimes dX_{t_2} \otimes \cdots \otimes dX_{t_n} \in E^{\otimes n}.
\end{eqnarray*}
Figure \ref{fig:timeSeries2Path} (Left) shows the $1$-fold iterated integral of $X$ which is the increment of $X$, i.e. $X_{3} - X_{0}$, and the $2$-fold iterated integral of $X$ which is given by 
$$
\mathbf{X}^{(2)} = (S^{(0,0)}(X), S^{(0,1)}(X),  S^{(1,0)}(X), S^{(1,1)}(X)), 
$$ where $S^{(i, i)}(X) = \frac{1}{2}(\Delta X^{i})^{2}$ and $S^{(0,1)}(X),  S^{(1,0)}(X)$ are blue and yellow area in Figure \ref{fig:timeSeries2Path} (Right) respectively.\\

Now we are ready to introduce the \emph{signature of a path} $X$.

\begin{definition}[Signature of a path]
Let $X \in \Omega_{0}(J, E)$. The signature of the path $X$ is defined as 
\begin{eqnarray}
S(X) = (1, \mathbf{X}^{(1)}, \mathbf{X}^{(2)}, \ldots) \in T((E)),
\end{eqnarray}
where $\mathbf{X}^{(n)} = \int_{0 \leq t_1 \leq t_2 \leq \cdots \leq t} dX_{t_1} \otimes dX_{t_2} \otimes \cdots \otimes dX_{t_n}$.

The truncated signature of the path $X$ of degree $M$, denoted by $S_{M}(X)$ and defined by
\begin{eqnarray*}
S_{M}(X) :=\pi_{M}(S(X)) =  (1, \mathbf{X}^{(1)}, \cdots, \mathbf{X}^{(M)}).
\end{eqnarray*}
\end{definition}

\begin{lemma}
Fix some $\mathrm{p} \geq 1$. For any $X \in \Omega_0([0, T], \mathbb{R}^d)$, the signature of $X$ is an element of $\mathbf{T}^{\mathrm{p}}(E)$.
\end{lemma}
\begin{proof}
It is a consequence of the factorial decay of the signature of a path of bounded $1-$variation (c.f., Lemma \ref{Lemma_factorial_decay} in Appendix \ref{SupplementaryMaterial}).
\end{proof}

\begin{lemma}[Uniqueness of signature]
For any $X \in \Omega_0([0, T], E)$, the signature of $X$ uniquely determines $X$.    
\end{lemma}
\begin{proof}
We refer the proof to that of Lemma 2.14 in \cite{levin2013learning}. 
\end{proof}

The \textit{universality} and \textit{uniqueness} of signature, described in Section \ref{sec summary}, make it an excellent candidate as a feature extractor of time series. 

As we mainly work on the signature space in the later section, we provide a remark on the structure of the range of $S$ on $\Omega_0([0, T], E)$.
\begin{remark}
Let $\mathcal{K}$ denote a compact set of $\Omega_0([0, T], E)$. Then the range $S(\mathcal{K})$ is a compact set of $\mathbf{T}^{\mathrm{p}}(E)$ endowed with $l^{\mathrm{p}}$ topology. We defer the proof to that of Lemma \ref{Lemma_continuity_sig} at Appendix \ref{SupplementaryMaterial}.
\end{remark}

\subsection{Expected signature}
\paragraph{Expected signature}

Since the signature $S$ is a bijective and continuous map when restricting the path space to $\Omega_{0}([0, T], E)$,  the pushforward of the measure on the path space,  $\bm{\mu}(B) :=(S_{\#}\mu)(B) = \mu(S^{-1}(B))$ for $B$ in the $\sigma$-algebra of $S(\Omega_{0}([0, T], E))$, induces the measure on the signature space.

\begin{lemma}
Let $\mu, \nu$ be two measures defined on the path space $\Omega_{0}(J, E)$. Then for $\bm{\mu}(B) :=(S_{\#}\mu)(B) $ and $\bm{\nu}(B) :=(S_{\#}\nu)(B) $ with $B$ in the $\sigma$-algebra of $S(\Omega_{0}(J,E))$ we have
\begin{eqnarray*}
\bm{\mu} = \bm{\nu} \Longleftrightarrow \mu = \nu.
\end{eqnarray*}
\end{lemma}
\begin{proof}
This is an immediate result of the bijective property of the signature map $S$, when $S$ is restricted to $\Omega_{0}(J, E)$.
\end{proof}
 
By Proposition 6.1 in \cite{chevyrev2016characteristic}, we have the following result:

\begin{theorem}
Let $\mu$ and $\nu$ be two measures on the path space $\Omega_{0}(J, E)$. Let $\bm{\mu}(B) :=(S_{\#}\mu)(B) $ and $\bm{\nu}(B) :=(S_{\#}\nu)(B) $ for $B$ is in the $\sigma$-algebra of $S(\Omega_{0}(J,E))$. Suppose that $\mathbb{E}_{S \sim \bm{\mu}}[S]$ exists and has infinite radius of convergence\footnote{The definition of infinite radius of convergence of expected signature can be found in Definition \ref{Def_ROC_ES} of Appendix A.}. If $\mathbb{E}_{S \sim \bm{\mu}}[S] = \mathbb{E}_{S \sim \bm{\nu}}[S]$, then $\mu = \nu$.
\end{theorem}

In other words, under the regularity condition, the distribution $\mu$ on the path space is characterized by $\mathbb{E}_{X \sim \mu }[S(X)]$. We call $\mathbb{E}_{X \sim \mu }[S(X)]$ the expected signature of the stochastic process $X$ under measure $\mu$. Intuitively, the signature of a path plays a role of a non-commutative polynomial on the path space. Therefore the expected signature of a random process can be viewed as an analogy of the moment generating function of a $d$-dimensional random variable. For example, the expected Stratonovich signature of Brownian motion determines the law of the Brownian motion in \cite{lyons2015expected}. However, it is challenging to establish a general condition to guarantee the infinite radius of convergence (ROC). In fact, the study of the expected signature of stochastic processes is an active area of research. For example, the expected signature of fractional Brownian motion for the Hurst parameter $H \geq 1/2$ is shown to have the infinite ROC \cite{PASSEGGERI20201226, fawcett2002problems}, whereas the ROC of the expected signature of stopped Brownian motion up to the first exit domain is finite \cite{boedihardjo2021expected,li2022expected}. Theorem 6.3, \cite{chevyrev2016characteristic} provides a sufficient condition for the infinite ROC of the expected signature, potentially offering an alternative way to show the infinite ROC without directly examining the decay rate of the expected signature.

\section{SIG-WASSERSTEIN METRIC} \label{sec Sig-Wasserstein metric}
In this section, we formalise the derivation of \textit{Signature Wasserstein-1} (Sig-$W_1$) metric introduced in Section \ref{sec summary}. The Sig-$W_1$ is a generalisation of the one proposed in \cite{ni2021sig} by considering the general $l^{\mathrm{p}}$ metric of the signature space. 

Let $f: \Omega \rightarrow \mathbb{R}$, where $\Omega$ is a generic metric space. Define 
\begin{eqnarray}\label{eqn_lip}
 ||f||_{Lip, \Omega}:= \sup_{x \neq y, x,y \in \Omega} \frac{|f(x) - f(y)|}{D(x, y)},
\end{eqnarray}
where $D$ is a metric defined on $\Omega$. Let $\mu$ and $\nu$ be two compactly supported measures on the path space $\Omega_{0}([0, T], E)$ such that the corresponding induced measures on the signature space $\bm{\mu}$ and $\bm{\nu}$ respectively have a compact support $\mathcal{K} \subset S(\Omega_{0}([0, T], E)) \subset \mathbf{T}^{\mathrm{p}}(E)$. Recall that 
\begin{eqnarray*}
W_1^{\text{Sig}}(\mu, \nu)&:=&W_1(\bm{\mu}, \bm{\nu})= \sup_{||\bm{f}||_{Lip, \mathcal{K}} \leq 1} \mathbb{E}_{S \sim \bm{\mu}}[\bm{f}(S)] - \mathbb{E}_{S \sim \bm{\nu}}[\bm{f}(S)]\,. 
 \end{eqnarray*}
From the definition of the supremum, there exists a sequence of $\bm{f}_{n}: \mathcal{K} \rightarrow \mathbb{R}$ with bounded Lipschitz norm along which the supremum $W^{\text{Sig}}_{1}(\mu, \nu)$ is attained. By the universality of the signature, it implies that for any $\epsilon>0$, for each $\bm{f}_{n}$, there exists a linear functional $\bm{L}_{n}: \mathcal{K}\rightarrow \mathbb{R}$ to approximate $\bm{f}_{n}$ uniformly, i.e.
\begin{eqnarray*}
\vert  \int_{\mathcal{K}} \bm{f}_{n}(S) \bm{\mu}(dS) - \bm{f}_{n}(S)\bm{\nu}(dS) - \left(\int_{\mathcal{K}} \bm{L}_{n}(S) \bm{\mu}(dS) - \bm{L}_{n}(S)\bm{\nu}(dS))\right)\vert \leq 2 \epsilon.
\end{eqnarray*}
As $\bm{L}_{n}: \mathcal{K} \rightarrow \mathbb{R}$ is linear, there is a natural extension of $\bm{L}_n$ mapping from $\mathbf{T}^{\mathrm{p}}(E)$ to $\mathbb{R}$.

Motivated by the above observation, to approximate $W_1^{\text{Sig}}(\mu, \nu)$, we restrict the admissible set of $\bm{f}$ to be \emph{linear} functionals $\bm{L}: T((E))\rightarrow \mathbb{R}$, which leads to the following definition:
\begin{definition}[$\text{Sig-}W_1$ metric]
For two measures $\mu, \nu$ on the path space $\Omega_{0}([0, T], E))$ such that their induced measures $\bm{\mu}$ and $\bm{\nu}$ respectively has  a compact support $\mathcal{K} \subset \mathcal{S}(\Omega_{0}([0, T], E))$,
\begin{eqnarray*}
\text{Sig-}W_{1}(\mu, \nu) = \sup_{\vert\vert \bm{L} \vert \vert_{Lip} \leq 1, L \text{ is a linear functional:}}  \left(\mathbb{E}_{ S \sim{\bm{\mu}}}[\bm{L}(S)] - \mathbb{E}_{S \sim \bm{\nu}}[\bm{L}(S)]\right). 
\end{eqnarray*}
\end{definition}
Here we skip the domain $\mathbf{T}^{\mathrm{p}}(E)$ in the Lip norm of $|| \mathbf{L}||_{Lip}$ for the simplicity of the notation. 

\begin{remark}
Despite the motivation of $\text{Sig-}W_1$ from the approximation of $W^{Sig}_1$, it is hard to establish the theoretical results on the link between these two metrics. The main difficulty comes from that the uniform approximation of the continuous function $f$ by a linear map $L$ on $\mathcal{K}$ does not guarantee the closeness of their Lipschitz norms. We conjecture that in general $W_{1}^{Sig}(\mu, \nu)$ is not equal to $\text{Sig-}W_1(\mu, \nu)$. However, it would be interesting but technically challenging to find out the sufficient conditions such that these two metrics coincide. 
\end{remark}

To derive the analytic formulae for the Sig-$W_1$ metric, we shall introduce the following auxiliary lemma on the $l^{\mathrm{p}}$ norm of the tensor space $\mathbf{T}^{\mathrm{p}}(E)$ and its dual space. 

\begin{lemma}\label{Lemma_lp}
Fix $\mathrm{p}, \mathrm{q} > 1$ such that $\frac{1}{\mathrm{p}} + \frac{1}{\mathrm{q}} = 1$ (i.e. $\mathrm{q} = \frac{\mathrm{p}}{\mathrm{p}-1}$).\\
For any linear functional $\bm{L} \in\mathbf{T}^{\mathrm{p}}(E)^{*}$, it holds that
\begin{eqnarray}\label{Eqn_T(E)*_q_norm}
\sup_{||a||_\mathrm{p} = 1}|\bm{L}a| = ||\bm{L}||_\mathrm{q}, 
\end{eqnarray}
Similarly, for any $a \in \mathbf{T}^{\mathrm{p}}(E)$, it holds that
\begin{eqnarray}\label{Eqn_T(E)_p_norm}
\sup_{||\bm{L}||_\mathrm{q} \leq 1}|\bm{L}a| = \sup_{||\bm{L}||_\mathrm{q} = 1}|\bm{L}a| = ||a||_\mathrm{p}.
\end{eqnarray}
\end{lemma}
We refer to the proof of Lemma \ref{Lemma_lp} in Appendix \ref{subsec: sigw1}.
\begin{remark}
The sequence space $L_\mathrm{p}(\mathcal{I})$ is defined as 
\begin{eqnarray*}
L_\mathrm{p}(\mathcal{I}) =\left\{ (a_I)_{I \in \mathcal{I}} \big \vert \sum_{I \in \mathcal{I}} |a_I|^\mathrm{p} < \infty\right\},
\end{eqnarray*}
where $\mathcal{I}$ is a general index set and $\mathrm{p} \geq 1$.
It is well known that the dual space of $L_\mathrm{p}(\mathcal{I})$ for $ \mathrm{p} \geq 1$ has naturally isomorphic to $L_\mathrm{q}(\mathcal{I})$. This isomorphism is exactly the same as the map $L^{*}: \mathbf{T}^{\mathrm{p}}(E) \setminus \{0\} \rightarrow \mathbf{T}^{\mathrm{p}}(E)^{*} \setminus \{0\}: a \to L^{*}(a)$ used in our proof. Similarly, the dual space of $L_\mathrm{p}(\mathcal{I})^{*}$ has a natural isomorphism with $L_\mathrm{p}(\mathcal{I})$ for any $\mathrm{p}>1$.
\end{remark}

By exploiting the linearity of the functional $\bm{L} \in \mathbf{T}^{\mathrm{p}}(E)^{*}$, we can compute the Lip norm of $L$ analytically for $D$ being the $l^\mathrm{p}$ norm of $\mathbf{T}^{\mathrm{p}}(E)$ without the need of numerical optimization. By Lemma \ref{Lemma_lp}, the Lip norm of $\bm{L}$ is the $L_\mathrm{p}$ norm of $\bm{L}$, given as
\begin{eqnarray*}
||\bm{L}||_{Lip} := \sup_{x \neq y, x, y \in \mathbf{T}^{\mathrm{p}}(E)} \frac{|\bm{L}(x-y)|}{ ||x-y||_{\mathrm{p}}}= \sup_{||a||_\mathrm{p} = 1}|\bm{L}a| = ||\bm{L}||_\mathrm{q},
\end{eqnarray*}
where $\frac{1}{\mathrm{p}} + \frac{1}{\mathrm{q}} = 1$ and $D(x, y) = ||x-y||_\mathrm{p}$ with some $\mathrm{p}  > 1$.

The simplification of the Lip norm enables us to derive an analytic formula of the corresponding Sig-$W_1$ metric.

\begin{lemma}\label{Lemma_Sig_W1} 
For two measures $\mu, \nu$ on the path space $\Omega_{0}([0, T], E)$ such that their induced measures $\bm{\mu}$ and $\bm{\nu}$ have a compact support $\mathcal{K} \subset S(\Omega_{0}([0, T], E))$. Then it holds that
\begin{equation}
	\label{eq:sigw1}
	\text{Sig-}W_{1}(\mu, \nu)= \|\mathbb{E}_{S \sim \bm{\mu}}[S] - \mathbb{E}_{S \sim \bm{\nu}}[S]\|_\mathrm{p} = \|\mathbb{E}_{X \sim \mu}[S(X)] - \mathbb{E}_{X \sim \nu}[S(X)]\|_\mathrm{p}. 
	\end{equation}
\end{lemma}
\begin{proof}
Let a linear functional $\bm{L}: \mathcal{K} \rightarrow \mathbb{R}$ endowed with the Lip norm when $D(x, y) = ||x-y||_{\mathrm{p}}$.  In this case, the Lip norm coincides with  $l^{\mathrm{q}}$ norm. The compact support $\mathcal{K}$ of $\bm{\mu}$ and $\bm{\nu}$ ensures that $\mathbb{E}_{S \sim \bm{\mu}}(S)$ and $\mathbb{E}_{S \sim \bm{\mu}}(S)$ are in $\mathbf{T}^{\mathrm{p}}(E)$. Let $a:= \mathbb{E}_{S \sim \bm{\mu}}(S) - \mathbb{E}_{S \sim \bm{\nu}}(X))$ and $a = (a_I)_{I}$. Then by Lemma \ref{Lemma_lp}, one can derive the analytic formula of Sig-$W_1$ metric as follows: 
\begin{eqnarray*}
\text{Sig-}W_1(\mu, \nu) = \sup_{||\bm{L}||_{\mathrm{q}} \leq 1} \bm{L}(\mathbb{E}_{S \sim \bm{\mu}}(S)) -  \bm{L}(\mathbb{E}_{S \sim \bm{\nu}}(S)) = \sup_{||\bm{L}||_{\mathrm{q}} \leq 1} \bm{L}(a) = ||a||_{\mathrm{p}}.
\end{eqnarray*}
\end{proof}

\begin{remark}
When $\mathrm{p}=2$, $\text{Sig-}W_{1}$ is the same as the unnormalised Sig-MMD metric proposed in \cite{chevyrev2018signature}. The theoretical results in \cite{chevyrev2018signature} might be useful for studying the properties of Sig-Wasserstein metric. 
\end{remark}

Throughout the rest of the paper, by default, we use $\text{Sig-}W_{1}$ metric when $D$ is $l^2$ norm on $T((E))$, i.e. $\text{Sig-}W_1(\mu, \nu) = \|\mathbb{E}_{X \sim \mu}[S(X)] - \mathbb{E}_{X \sim \nu}[S(X)]\|_2$. In practice, we truncate the $\text{Sig-}W_1(\mu, \nu)$ up to degree $M$, i.e.
\begin{eqnarray*}
\text{Sig-}W_1^{M}(\mu, \nu) = \|\mathbb{E}_{X \sim \mu}[S_{M}(X)] - \mathbb{E}_{X \sim \nu}[S_{M}(X)]\|_2.
\end{eqnarray*}

\section{SIG-WASSERSTEIN GANS FOR CONDITIONAL LAW } \label{sec Sig-Wasserstein GANs for Conditional law}

In this section, we introduce a general framework, so-called Conditional Sig-Wasserstein GAN (SigCWGAN) based on Sig-$W_1$ metric to learn the conditional distribution $ \mu(X_{\text{future}}| X_{\text{past}})$ from data $x$. The C-SigWGAN algorithm is mainly composed of two steps: 
\begin{enumerate}
    \item We apply a one-off linear regression to learn the conditional expected signature under true measure $\mathbb{E}_{X_{\text{future}} \sim \mu(X_{\text{past}})}[S(X_{\text{future}})]$ (see Section \ref{subsec_SuperLearnES});
    \item We solve an optimization problem to find optimal parameters $\theta^{(g)}$ of the conditional generator, when using loss \eqref{eq sigcwgan loss} (see Section \ref{subsec_CSIGWGN}).
\end{enumerate}
In the last subsection of this section, we propose a conditional generator, i.e. AR-FNN generator, which is a non-linear generalization of the classical autoregressive models by using a feed-forward neural network. It can generate the future time series of arbitrary length.

\subsection{Learning the conditional expected signature under the true measure}\label{subsec_SuperLearnES}

The problem of estimating the conditional expected signature under the true measure $\bm{\mu}(S_{\text{past}})$, by Eqn. \eqref{eqn:conditionalExpReg} and the universality of the signature (Theorem \ref{th sig}), can be viewed as a linear regression task, with the signature of the past path and future path respectively (\cite{levin2013learning}). 

More specifically, given a long realization of $x:=(x_1, \cdots, x_T) \in \mathbb{R}^{d \times T}$ and fixed window size of the past and future path $\pp, \qq > 0$, we construct the samples of past / future path pairs $(X_{\text{past}}, X_{\text{future}})$ in a rolling window fashion, where the $i^{th}$ sample is given by
\begin{eqnarray*}
\left(x_{\text{past}}^{(i)}, x_{\text{future}}^{(i)}\right) = (x_{\text{past}, i+\pp-1}, x_{\text{future}, i+\qq-1} ).
\end{eqnarray*}

Assuming stationarity of the time series, the samples of past and future signature pairs are identically distributed
\begin{eqnarray*}
\left(S_{M_{1}}(x^{(i)}_{\text{past}}), S_{M_2}(x^{(i)}_{\text{future}})\right)_{i}\overset{d}{\sim}(S_{M_1}(X_{\text{past}}), S_{M_2}(X_{ \text{future}})),
\end{eqnarray*}
where  $M_{1},M_2$ are the degrees of the signature of the past and future paths, which can be chosen by cross-validation in terms of fitting result. One may refer to \cite{fermanian2022functional} for further discussion on the choice of the degree of the signature truncation.

In principle, linear regression methods on the signature space could be applied to solve this problem using the above constructed data. When we further assume that under the true measure, 
\begin{eqnarray*}
S_{M_2}(X^{(i)}_{\text{future}}) = L(S_{M_1}(X^{(i)}_{\text{past}})) + \epsilon_i,
\end{eqnarray*}
where $\epsilon_i \overset{iid}{\sim } \epsilon$ and $\mathbb{E}[\epsilon_i | X^{(i)}] = 0$, then an ordinary least squares regression (OLS) can be directly used. This simple linear regression model on the signature space achieves satisfactory results on the numerical examples of this paper. But it could be potentially replaced by other sophisticated regression models when dealing with other datasets. 

We highlight that this supervised learning module to learn $\mathbb{E}_{\mu(X_{\text{past}})}[S_{M}(X_{\text{future}})]$ is one-off and can be done prior to the generative learning. It is in striking contrast to the conditional WGAN learning, which requires to learn $\mathbb{E}_{\mu(X_{\text{past}})}[f_{\alpha}(X_{\text{future}})]$ every time the discriminator $f_{\alpha}$ is updated, and hence saves significant computational cost. 

\subsection{Sig-Wasserstein GAN algorithm for conditional law}\label{subsec_CSIGWGN}
We recall that in order to quantify the goodness of the conditional generator $\nu(\theta^{(g)}, x_{\text{past},t}):=G(\theta^{(g)},x_{\text{past}})_{\#}\mu_{z}$, we defined the loss 
\begin{equation*}
    \ell(\theta^{(g)}) := \mathbb E^{X_{\text{past}}} \left[ || \hat{\bm{L}}(S_{\text{past}}) - \mathbb{E}_{S_{\text{future}} \sim \bm{\nu}(\theta^{(g)}, X_{\text{past}})}[S_{\text{future}}]||_{\mathrm{p}}  \right]\,,
\end{equation*}
where $\hat{\bm{L}}$ denotes the linear regression estimator for the conditional expectation  $\hat{\bm{L}}: x \mapsto \mathbb{E}_{S \sim \bm{\mu}}[S_{\text{future}} |S_{\text{past}} = x]$. Given the conditional generator $G(\theta^{(g)},\cdot)$, the conditional expected signature $\mathbb{E}_{X \sim \nu(X_{\text{past}},\theta^{(g)} )}[S(X)]$ can be estimated by Monte Carlo method. We denote by  $\hat{\nu}_i$ the empirical approximation of $\nu(\theta, x_{\text{past}}^{(i)})$,  computed by sampling the future trajectory  $\hat{X}^{(t)}_{t+1: t+\qq}$ using $G(\theta^{(g)},\cdot)$ and a conditioning variable $x_{\text{past}, t}$.  This leads to the following empirical loss function:
\begin{eqnarray}\label{eqn_Loss}
\ell^{(N)}(\theta^{(g)}):= \frac{1}{N}\sum_{i = 1}^{N} || \hat{L}_\mu(S_{M_1}(x_{\text{past}}^{(i)}))) - \mathbb{E}_{\hat{\nu}_i}[S_{M_2}(x_{\text{future}}^{(i)})]||_{\mathrm{p}}\,.
\end{eqnarray}

Using empirical loss function  \eqref{eqn_Loss}, one  updates the  generator parameters $\theta^{(g)}$ with stochastic gradient descent algorithm until it converges or achieves the maximum number of epochs. 
See Algorithm \ref{Algorithm_list} for pseudocode. 

\begin{algorithm}[t]
    \caption{Pseudocode of SigCWGAN}\label{Algorithm_list}
    \hspace*{\algorithmicindent} \textbf{Input}: $(x_{t})_{t = 1}^{T}$, the signature degree of future path $M_2$, the signature degree of past path $M_1$, the length of future path $\qq$, the length of past path $\pp$, learning rate $\eta$, batch size $B$, the number of epochs $N$,  number of Monte Carlo samples $N_{\text{MC}}$.  \\
    \hspace*{\algorithmicindent} \textbf{Output}: $\theta$ - the optimal parameter of the generator $G(\theta,\cdot)$.
    \begin{algorithmic}[1]
\State Compute truncated signature of the past and future paths: $(S_{M_1}(x_{t-\pp+1:t}), S_{M_2}(x_{t+1:t+\qq}))_{t}$. 
\State Compute linear regression coefficient  $\hat{L}$ using $(S_{M_1}(x_{t-\pp+1:t}), S_{M_2}(x_{t+1:t+\qq}))_{t}$. (See Section \ref{subsec_SuperLearnES}.)

\State Initialise the parameters $\theta$ of the generator. 
\For{$i = 1:N$}
\State{\Comment{Denote the set of time index of the batch as $\mathcal{T}_B$.}}
\For{$j = 1:\#$ of batches} 
\State We randomly select the set of time index of batch size $B$, denoted by $\mathcal{T}$.
\State Initialize $l^{(B)}(\theta)  \gets 0$.
\For{$t \in \mathcal{T}_{B}$}
\State Simulate $n_{\text{MC}}$ samples of the simulated future path segments $(\hat{x}^{(j)})_{j = 1}^{n_{\text{MC}}}$ by the generator $G(\theta,\cdot)$ given the past path $x_{t-\pp+1: t}$. 
\State Compute \begin{eqnarray*}
\hat{\mathbb{E}}_{X \sim \nu(\theta, x_{t-\pp+1:t})}[S_{M_2}(X)] \gets \frac{1}{n_{\text{MC}}}\sum_{j = 1}^{n_{\text{MC}}} S_{M_2}(\hat{x}^{(j)}).
\end{eqnarray*}


\State Update $l^{(B)}(\theta) \gets  l^{(B)}(\theta) + \Vert \hat{L}(S_{M_1}(x_{t-\pp+1:t})) - \hat{\mathbb{E}}_{X \sim \nu(\theta, x_{t-\pp+1:t})}[S_{M_2}(X)]\Vert_{2}$.
\EndFor
\EndFor
    \State $\theta \gets \theta - \eta \frac{dl^{(B)}(\theta)}{d\theta}$.
    \EndFor
    \Return $\theta$.
    \end{algorithmic}\label{Algorithm_list}
    \end{algorithm}

\subsection{The Conditional AR-FNN Generator}\label{subsec_generator}
In this subsection, we further assume that the target time series $X$ is stationary and satisfies the following autoregressive structure, i.e.
\begin{eqnarray}\label{eqn_AR_relation}
X_{t + 1} = g(X_{t, \text{past}}, \varepsilon_{t+1}),
\end{eqnarray}
where $f: \mathcal{X} \times \mathcal{Z} \rightarrow \mathbb{R}^{d}$ is continuous and $(\varepsilon_t)_{t}$ are i.i.d. random variables and $\epsilon_t$ and $X_{t, \text{past}}$ are independent. Time series of such kind include the autoregressive model (AR) and the Autoregressive conditional heteroskedasticity (ARCH) model. 

The proposed conditional AR-FNN generator is designed to capture the autoregressive structure of the target time series by using the past path $X_{\text{past}, t}$ as additional input for the AR-FNN generator. The function $f$ in Eqn. \eqref{eqn_AR_relation} is represented by forward neural network with residual connections \cite{he2016deep} and parametric ReLUs as activation functions \cite{he2015delving} (see \autoref{sec:arfnn_architecture} for a detailed description).


We first consider a step-1 conditional generator $G_1(\theta^{(g)},\cdot): \mathbb{R}^{d \times \pp} \times \mathcal{Z} \rightarrow \mathbb R^d$, which takes the past path $x$ and the noise vector $Z_{1}$ to generate a random variable to mimic the conditional distribution of step-1 forecast $\mu(X_{t+1}| X_{\text{past}, t} = x)$. Here the noise vector $Z_{1}$ has the standard normal distribution in $\mathcal{Z} = \mathbb{R}^{d_{Z}}$. 

One can generate the future time series of arbitrary length $\qq \geq 1$ given $x_{\text{past}}$ by applying $G_1(\theta^{(g)},\cdot)$ in a rolling window fashion with i.i.d. noise vector $(Z_{t})_{t}$ as follows. 
Given $x_{\text{past}} = (\mathrm{x}_1, \cdots, \mathrm{x}_{\pp}) \in \mathbb{R}^{d \times \pp}$, we define time series $(\hat{x}_{t})_{t}$ inductively; we first initialize the first $\pp$ term $\hat{x}$ as $x_{\text{past}}$, and then for $t > \pp$, use $G_1(\theta^{(g)},\cdot)$ with the $\pp$-lagged value of $\hat{x}_{t}$ conditioning variable and the noise $Z_{t}$ to generate $\hat{x}_{t+1}$; in formula, 
\begin{eqnarray}\label{eqn_G_theta}
\hat{x}_t =
\begin{cases}
\mathrm{x}_{t}, & \text{ if }t \leq \pp;\\
G_1(\theta^{(g)},\underbrace{\hat{x}_{t-\pp}, \cdots, \hat{x}_{t-1}}_{\pp\text{ lagged values of } \hat{x}_{t}}, Z_{t}),  & \text{ if }t > \pp.
\end{cases}
\end{eqnarray}

Therefore, we obtain the step-$\qq$ conditional generator, denoted by $G_q(\theta^{(g)},\cdot): \mathbb{R}^{d \times \pp} \rightarrow \mathbb{R}^{d \times \qq}$ and defined by 
$x_{\text{past}} \mapsto \left(\hat{x}_{\pp+1}, \cdots, \hat{x}_{\pp+\qq}\right)$,
where $\left(\hat{x}_{\pp+1}, \cdots, \hat{x}_{\pp+\qq}\right)$ is defined in Eqn. \eqref{eqn_G_theta}. We omit $\qq$ in $G_{\qq}$ for simplicity. (See Algorithm \ref{alg_generator} in Supplementary Material.)


\section{NUMERICAL EXPERIMENTS }\label{Section_Numerical_Experiment}
To benchmark with SigCWGAN, we consider the baseline conditional WGAN (CWGAN) to compare the performance and training time. Besides, we benchmark SigCWGAN with three representative generative models for the time-series generation, i.e. (1) TimeGAN \cite{yoon2019time}, (2) RCGAN \cite{hyland2018real} - a conditional GAN and (3) GMMN \cite{li2015generative} - an unconditional maximum mean discrepancy (MMD) with Gaussian kernel. For a fair comparison, we use the same neural network generator architecture, namely the 3-layer AR-FNN described in \autoref{sec:arfnn_architecture}, for all the above generative models. Furthermore, we compare the proposed SigCWGAN with Generalized autoregressive conditional heteroskedasticity model (GARCH), which is a popular econometric time series model.

To demonstrate the model’s ability to generate realistic multi-dimensional time series in a controlled environment, we consider synthetic data generated by the Vector Autoregressive (VAR) model, which is a key illustrative example in TimeGAN \cite{yoon2019time}. We also provide two financial datasets, i.e. the SPX/DJI index data and Bitcoin-USD data to validate the efficacy of the proposed SigCWGAN model on empirical applications. The additional example of synthetic data generated by ARCH model is provided in the appendix.

To assess the goodness of the fitting of a generative model, we consider three main criteria (a) the marginal distribution of time series; (b) the temporal and feature dependence; (c) the usefulness \cite{yoon2019time} -  synthetic data should be as useful as the real data when used for the same predictive purposes (i.e. train-on-synthetic, test-on-real).\footnote{To solely focus on the fitting of the conditional law of $X_{\text{future}}$, we use real past paths as the input data of train-on-synthetic experiment. In contrast, the input of train-on-synthetic in  \cite{yoon2019time} are synthetic past path with the goal of assessing the unconditional generation of a long synthetic sequence in terms of its auto-regressive structure. } In the following, we give the precise definition of the test metrics. More specially, we use $\mathcal{D}_{\text{real}}:=(x_{\text{future}}^{(i)})_{i =1}^{N}$ and $\mathcal{D}_{\text{fake}}: = (\hat{x}_{ \text{future}}^{(i)})_{i=1}^{N}$ to compute the test metrics, where $\hat{x}_{\text{future}}^{(i)}$ is a simulated future trajectory sampling by the conditional generator $G(\theta^{(g)}, x^{(i)}_{\text{past}})$. $\mathcal{D}_{\text{real}}$ and $\mathcal{D}_{\text{fake}}$ are the samples of the $\mathbb{R}^{d \times \bar{q}}$-valued random variable $X_{\text{future}}$ under real measure and synthetic measure resp. The test metrics are defined below.
\begin{itemize}
\item \textbf{Metric on marginal distribution}: 
For each feature dimension $i \in \{1, \cdots, d\}$, we compute two empirical density functions (epdfs) based on the histograms of the real data and synthetic data resp. denoted by $\hat{df}_{r}^{i}$ and $\hat{df}_{G}^{i}$. We take the absolute difference of those two epdfs as the metric on marginal distribution averaged over feature dimension, i.e., 
\begin{eqnarray*}
\frac{1}{d}\sum_{i = 1}^{d} \vert \hat{df}_{r}^{i} - \hat{df}_{G}^{i} \vert_{1}. 
\end{eqnarray*}

\item \textbf{Metric on dependency}: \paragraph{(1) Temporal dependency}We use the absolute error of the auto-correlation estimator by real data and synthetic data as the metric to assess the temporal dependency. For each feature dimension $i \in \{1, \cdots, d\}$, we compute the auto-covariance of the $i^{th}$ coordinate of time series data $X$ with lag value $k$ under the real measure and the synthetic measure resp., denoted by $\rho_r^{i}(k)$ and $\rho_{G}^{i}(k)$.  Then the estimator of the lag-1 auto-correlation of the real/synthetic data is given by $\frac{\rho_{r}^{i}(1)}{\rho_{r}^{i}(0)}$/ $\frac{\rho_{G}^{i}(1)}{\rho_{G}^{i}(0)}$. 
The ACF score is defined to be the absolute difference of lag-1 auto-correlation given as follows:
\begin{eqnarray*}
\frac{1}{d}\sum_{i = 1}^{d}\left \vert \frac{\rho_{r}^{i}(1)}{\rho_{r}^{i}(0)} -  \frac{\rho_{G}^{i}(1)}{\rho_{G}^{i}(0)}\right \vert.
\end{eqnarray*}
Note $\rho_r^{i}(k)$ and $\rho_r^{i}(k)$ can be estimated empirically by Eqn. \eqref{eqn: auto_cov_real} and Eqn. \eqref{eqn: auto_cov_sync} in Appendix \ref{sec numerical implementations} resp, which allows us to compute the ACF score on the dataset. In addition, we present the ACF plot, which illustrates the autocorrelation of each coordinate of the time series with different lag values. The synthetic data's quality is evaluated by how closely its ACF plot resembles that of the real data, as it indicates the synthetic data's ability to capture long-term temporal dependencies.

\paragraph{(2) Feature dependency}For $d>1$, we use the $l^1$ norm of the difference between cross correlation matrices. Let $\tau^{i, j}_{r}$ and $\tau^{i, j}_{G}$ denote the correlation of the $i^{th}$ and $j^{th}$ feature of time series under real measure and synthetic measure resp.
The metric on the correlation between the real data and synthetic data is given by $l^{1}$ norm of the difference of two correlation matrices, i.e.
\begin{eqnarray*}
\sum_{i = 1}^{d}\sum_{j = 1}^{d}|\tau^{i, j}_{r} - \tau^{i, j}_{G}|.
\end{eqnarray*}

We defer the estimation of the correlation matrix  $\tau^{i, j}_{r} $ and $\tau^{i, j}_{G}$ from the true data and fake data to Appendix \ref{sec numerical implementations}. 

\item \textbf{$R^2$ comparison}: Following \cite{esteban2017realvalued} and \cite{yoon2019time},  we consider the problem of predicting next-step temporal vectors using the lagged values of time series using the real data and synthetic data. First, we train a supervised learning model on real data and evaluate it in terms of $R^{2}$(TRTR). Then we train the same supervised learning model on synthetic data and evaluate it on the real data in terms of $R^{2}$ (TSTR). The closer two $R^{2}$ are, the better the generative model it is. To assess the performance of the proposed SigCWGAN to generate the longer time series, we consider the $R^2$ score for the regression task to predict the next $q$-step, where $q$ can be even larger than $\qq$.
\end{itemize}
The train and test split is 80\% and 20\% respectively in all the numerical examples. We conduct the hyper-parameter tuning for the signature truncation level. We set $\mathrm{p} = 2$ in the $l^{\mathrm{p}}$ norm used in the $\text{Sig-}W_1$ metric. Appendix \ref{sec:ImplementationCSigWGAN} contains the additional information on implementation details of SigCWGAN, including path transformations and network architecture of the generator. We refer the Appendix \ref{sec numerical implementations} for more details on the evaluation metrics. We also provide the extensive supplementary numerical results of $\operatorname{VAR}(1)$ data, $\operatorname{ARCH}(1)$ data and empirical data in Appendix \ref{sec numerical results}. Implementation of SigCWGAN can be found in \url{https://github.com/SigCGANs/Conditional-Sig-Wasserstein-GANs}. 

\subsection{Synthetic data generated by Vector Autoregressive Model}
In the $d$-dimensional $\operatorname{VAR}(1)$ model, time series $(X_t)_{t = 1}^{T}$ are defined recursively for $t \in \lbrace 1, \dots, T-1\rbrace$ through 
\begin{eqnarray}\label{eqn:var_model}
X_{t+1} = \phi X_{t} +\epsilon_{t+1},
\end{eqnarray}
where $(\epsilon_{t})_{t=1}^{T}$ are iid Gaussian-distributed random variables with co-variance matrix $\sigma \mathbf{1} + (1-\sigma) \mathbf{I}$; $\mathbf{I}$ is a $d \times d$ identity matrix. Here, the coefficient $\phi \in [-1, 1]$ controls the auto-correlation of the time series and $\sigma \in [0, 1]$ the correlation of the $d$ features. In our benchmark, we investigate the dimensions $d = 1, 2, 3$ and various $(\sigma, \phi)$. We set $T = 40000$ and $\bar{p} = \bar{q} = 3$. In this example, the optimal degree of signature of both past paths and future paths is 2.

First, we empirically prove that the proposed SigCWGAN can serve as an enhancement of CGWAN model.  One can see from Figure~\ref{fig:CWGAN_mc}, when the CWGAN training is fed into a more reliable estimator of the conditional mean under real measure $\mathbb{E}_{\mu(x_{\text{past}})}[f(X_{\text{future}})]$, the training tends to converge faster. However, the commonly-used one-sample estimator ($x_{\text{future}}$) in the CWGAN training may suffer from large variance, leading to inefficiency of training. In contrast to it,  the SigCWGAN may alleviate this problem by its supervised learning module. Additionally, the simplification of the min-max game to optimization via SigCWGAN leads to further acceleration and stabliziation of training SigCWGANs, and hence brings the performance boost, as shown in Table \ref{table_Var_dim=3_time}. Figure \ref{fig:CWGAN_mc} illustrates that the SigCWGAN has a better fitting than CWGAN in terms of conditional law as the estimated mean (and standard deviation) is closer to that of the true model compared with CWGAN. Moreover, Table \ref{table_Var_dim=1}- \ref{table_Var_dim=3} show that the SigCWGAN consistently beats the CWGAN in terms of performance for varying $d$, $\phi$ and $\sigma$. 

\begin{figure}[!ht]
    \centering
    \begin{subfigure}{0.4\linewidth}
    \includegraphics[width=1\textwidth]{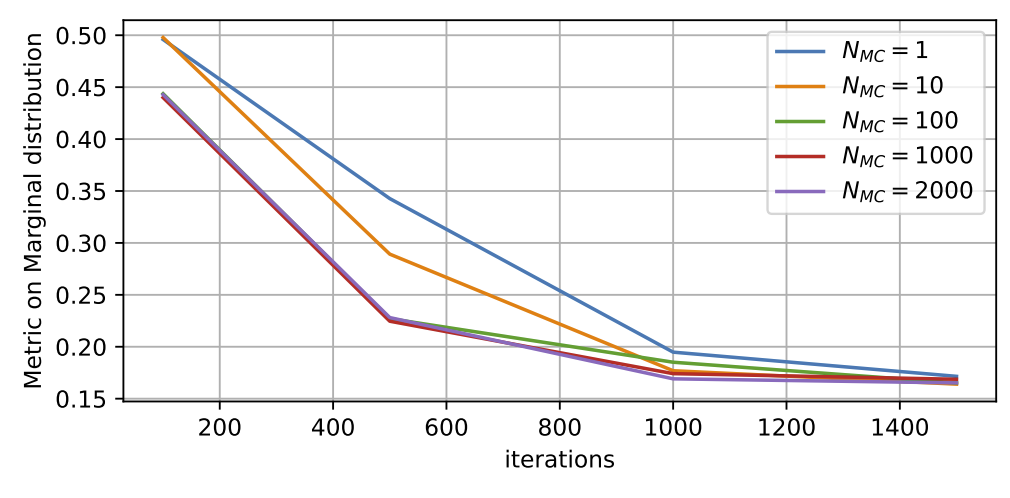}
        \end{subfigure}
            \quad
            \begin{subfigure}{0.49\linewidth}
    \includegraphics[width=1\textwidth]{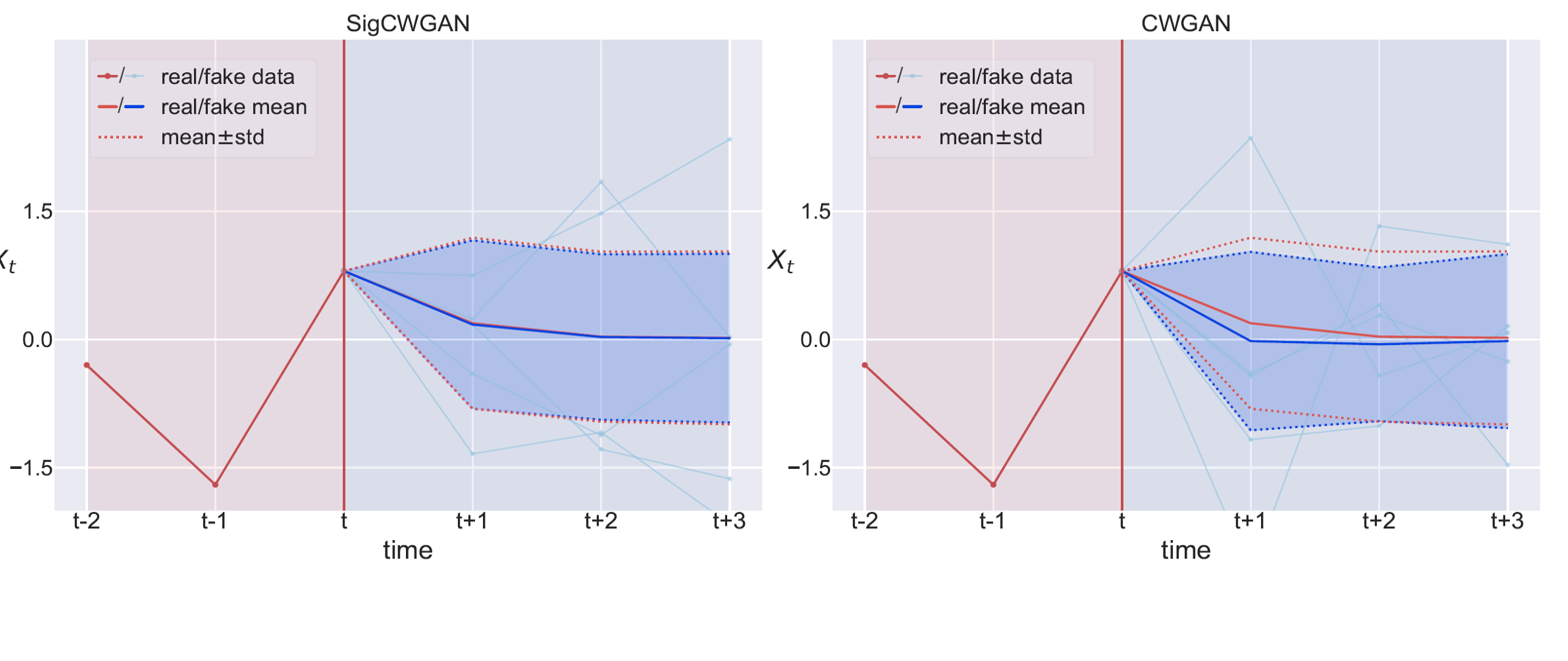}
    \end{subfigure}
    \caption{(Left) Comparison of the performance on CWGAN in terms of metric on the marginal distribution for varying $N_\text{MC}$. This experiment is conducted on a 3-dimensional VAR(1) dataset generated by Eqn.\eqref{eqn:var_model} with $\phi=\sigma=0.8$. We use the Monte-Carlo estimator of the conditional mean ($\mathbb{E}_{\mu(x_{\text{past}})}[f(\theta^{(d)})(X_{\text{future}})]$) generated by the ground truth model for the CWGAN training. The larger number $N_\text{MC}$ of Monte Carlo approximation, the better conditional mean under the true measure. (Right) Comparison of the performance of SigCWGAN and CWGAN in terms of fitting the conditional distribution of future time series given one past path sample on 1-dimensional VAR(1) dataset. 
    }
    \label{fig:CWGAN_mc}
\end{figure}

\newcolumntype{Y}{>{\centering\arraybackslash}X}
\begin{table}[!ht]
\centering
\resizebox{\columnwidth}{!}{%
\begin{tabularx}{1.4\textwidth}{l| Y || Y || Y  || Y}
\toprule
\multicolumn{1}{l|}{Metrics} & \multicolumn{1}{c||}{marginal distribution} &   \multicolumn{1}{c||}{auto-correlation}& 
 \multicolumn{1}{c||}{$R^2(\%)$} & \multicolumn{1}{c}{Sig-$W_1$}    
\\\midrule
SigCWGAN   & 0.0314  &\textbf{0.0085} & \textbf{0.0394} & \textbf{0.4286}  \\\midrule
CWGAN   &0.0086  & 0.0110  & 0.0350  & 0.4384  \\\midrule
TimeGAN   & 0.0243   &  0.0320   & 0.0229  & 0.4680 \\\midrule
RCGAN     & 0.0095   & 0.0332   & 0.0214    & 0.4466  \\\midrule
GMMN      & \textbf{0.0084}   & 0.0298   & 0.0026    & 0.4499   \\\bottomrule
\end{tabularx}}
\caption{Numerical results of $\operatorname{VAR}(1)$ for $d = 3$ with fixed training time of 2 minutes}\label{table_Var_dim=3_time}
\end{table}

We then proceed with the comparison of CSigWGN with the other state-of-the-art baseline models.
Across all dimensions, we observe that the CSigWGAN has a comparable performance or outperforms the baseline models in terms of the metrics defined above. In particular, we find that as the dimension increases the performance of SigCWGANs exceeds baselines. We illustrate this finding in Figure \ref{Fig_VAR_R2} which shows the relative error of TSTR $R^2$ when varying the dimensionality of $\operatorname{VAR}(1)$. Observe that the SigCWGAN remains a very low relative error, but the performance of the other models deteriorates significantly, especially the GMMN. 
\begin{figure}[!ht]
   \centering
\includegraphics[width=1\textwidth]{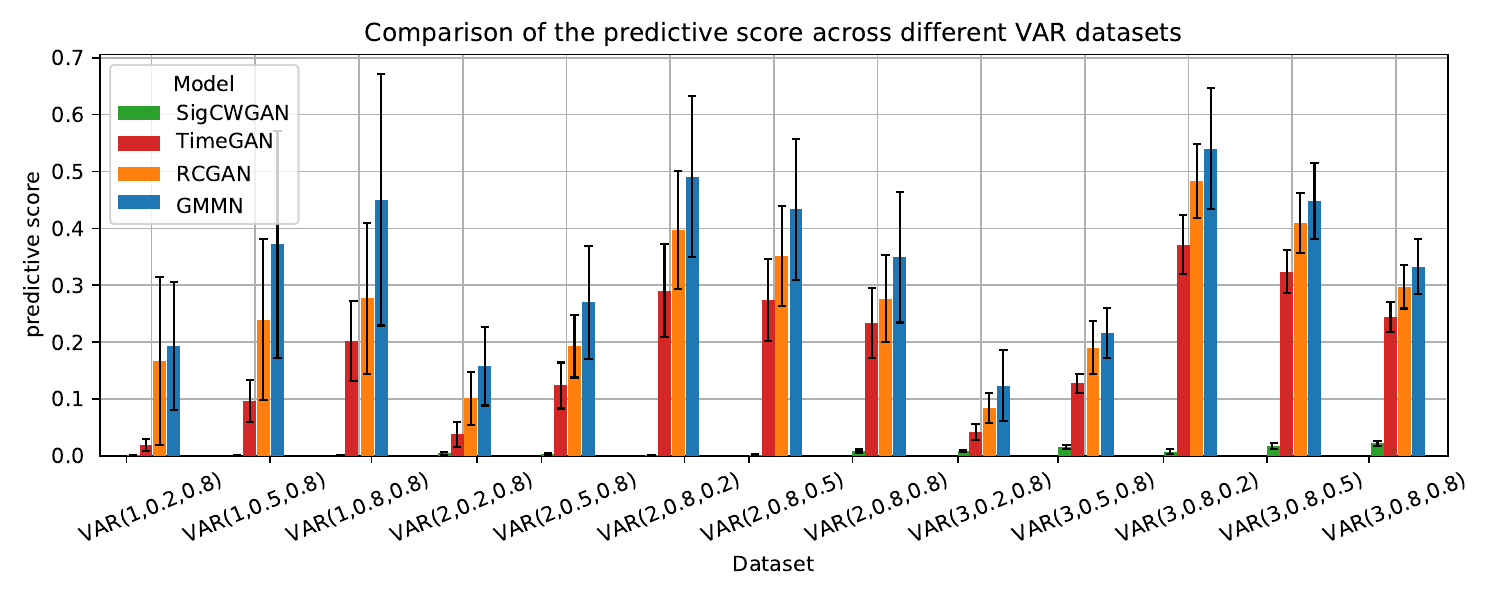}
         \caption{Comparison of predictive score across the VAR(1) datasets. The three numbers in the bracket indicate the hyperparameters $d, \phi, \sigma$ used to generate the corresponding VAR dataset. The predictive score was computed by taking the absolute difference of the $R^2$ obtained from TSTR and TRTR.}\label{Fig_VAR_R2}
\end{figure}

\begin{figure}[!ht]
    \begin{subfigure}{1\linewidth}
    \centering
    \includegraphics[width=\textwidth]{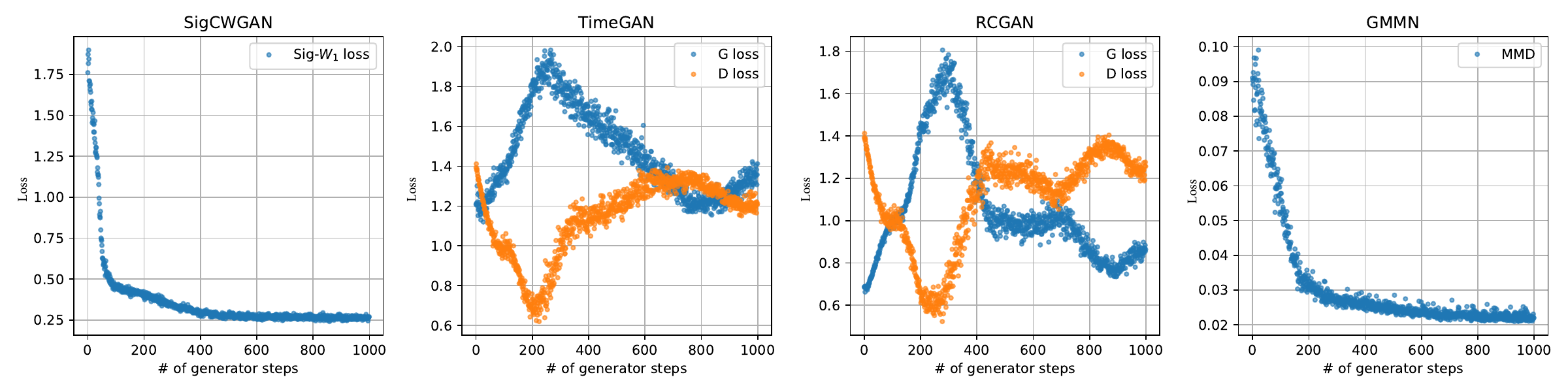}
    \end{subfigure}
    \quad
        \begin{subfigure}{1\linewidth}
            \centering
   \includegraphics[width=\textwidth]{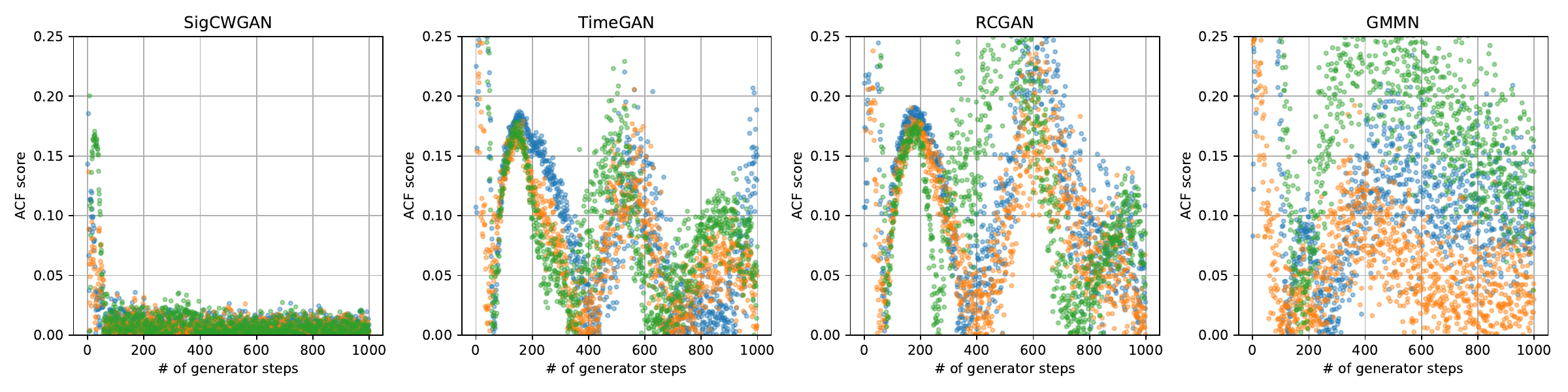}
    \end{subfigure}

    \caption{(Upper panel)  Evolution of the training loss functions. (Lower panel) Evolution of the ACF scores. Each colour represents the ACF score of each feature dimension. Results are for the 3-dimensional $\operatorname{VAR}(1)$ model based on Eqn. \eqref{eqn:var_model} for $\phi=0.8$ and $\sigma=0.8$. } \label{fig:GAN_acf_scores_phi=0.8_sigma=0.8}
\end{figure}

Moreover, we validate the training stability of different methods. Figure \ref{fig:GAN_acf_scores_phi=0.8_sigma=0.8} shows the development of the loss function and ACF scores through the course of training for the 3-dimensional $\operatorname{VAR}(1)$ model. It indicates the stability of the SigCWGAN optimisation in terms of training iterations, in contrast to all the other algorithms, especially RCGAN and TimeGAN that involve a min-max optimisation, as identified in the 1st Challenge in Section~\ref{sec:problem_formulation}. While the ACF scores of the baseline models oscillate heavily, the SigCWGAN ACF score and Sig-$W_1$ distance converge nicely towards zero. Also, although the MMD loss converges nicely towards zero, in contrast, the ACF scores do not converge. This highlights the stability and usefulness of the Sig-$W_1$ distance as a loss function. 

To assess the efficiency of different algorithms, we train all the algorithms for the same amount of time (2 minutes) and compare the test metrics of each method. Table \ref{table_Var_dim=3_time} show a higher efficiency of SigCWGAN, which yields the best performance in terms of all the metrics except for the metric on the marginal distribution. 
 
Furthermore, the SigCWGAN has the advantage of generating the realistic long time series over the other models, which is reflected by that the marginal density function of a synthetic sampled path of 80,000 steps is much closer to that of real data than baselines in Figure \ref{fig:marginal_comparison_var_1__dim3_long}. 
\begin{figure}[!ht]
    \centering
    \includegraphics[width=\textwidth]{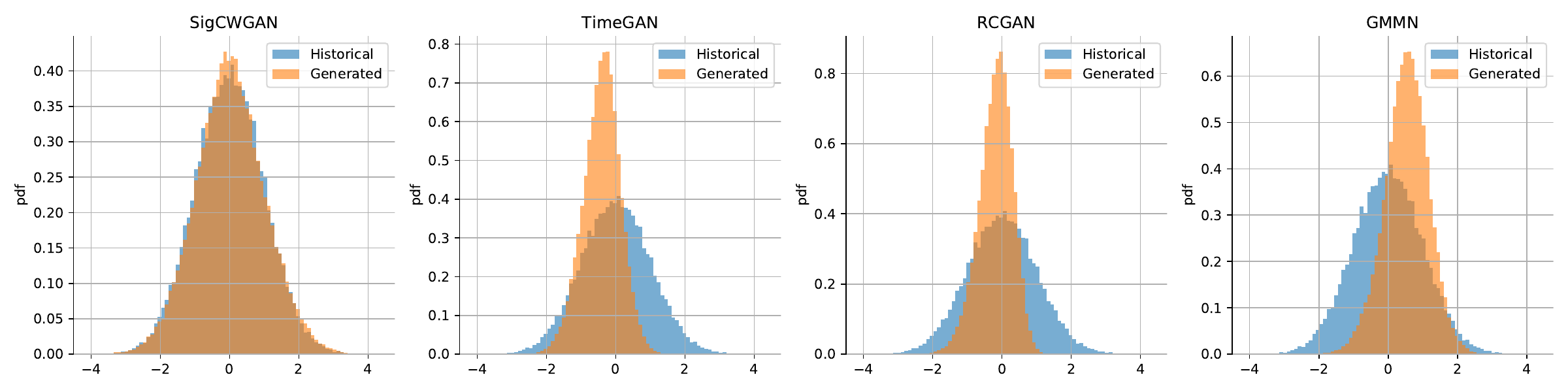}
    \caption{Comparison of the marginal distributions of one long sampled path (80,000 steps) with the real distribution.}
    \label{fig:marginal_comparison_var_1__dim3_long}
\end{figure}

\subsection{SPX and DJI index dataset}

The dataset of the S$\&$P 500 index (SPX) and Dow Jones index (DJI) consists time series of indices and their realized volatility, which is retrieved from the Oxford-Man Institute's "realised library"\cite{heber2009oxford}. We aim to generate a time series of both the log return of the close prices and the log of median realised volatility of (a) the SPX only; (b) the SPX and DJI. Here we choose the length of past and future path to be $3$. By cross-validation, the optimal degree of signature ($M_1=M_2$) is 3 and 2 for the SPX dataset and SPX/DJI dataset, respectively.

Table \ref{table_stocks_reshape} shows that SigCWGAN achieves the superior or comparable performance to the other baselines. The SigCWGAN generates the realistic synthetic data of the SPX and DJI data shown by the marginal distribution comparison with that of real data in Figure \ref{fig:marginal_comparison_SPX_DJI_SigCWGAN}. For the SPX only data, GMMN performs slightly better than our model in terms of the fitting of lag-1 auto-correlation and marginal distribution ($\leq 0.0013$), but it suffers from the poor predictive performance and feature correlation in Table \ref{table_stocks_reshape} and Figure \ref{Fig:Correlation_Comparison_SPX_DJI}. When the SigCWGAN is outperformed, the difference is negligible. Furthermore, the test metrics, i.e. the ACF loss and density metric, of our model are evolving much smoother than the test metrics of the other baseline models shown in Figure \ref{Fig:loss_SPX}. Moreover, the ACF plot shown in Figure \ref{Fig:Correlation_Comparison_SPX_DJI} shows that SigCWGAN has the better fitting for the auto-correlation for various lag values, which indicates the superior performance in terms of capturing long temporal dependency.

It is worth noting that our SigCWGAN model outperforms GARCH, the classical and widely used time series model in econometrics, on both the SPX and SPX/DJI data, as shown in Table \ref{table_stocks_reshape}. The poor performance of the GARCH model could be attributed to its parametric nature and the potential issues of model mis-specification when applied to empirical data. 

\begin{figure}[!ht]
    \centering
    \includegraphics[width=\textwidth]{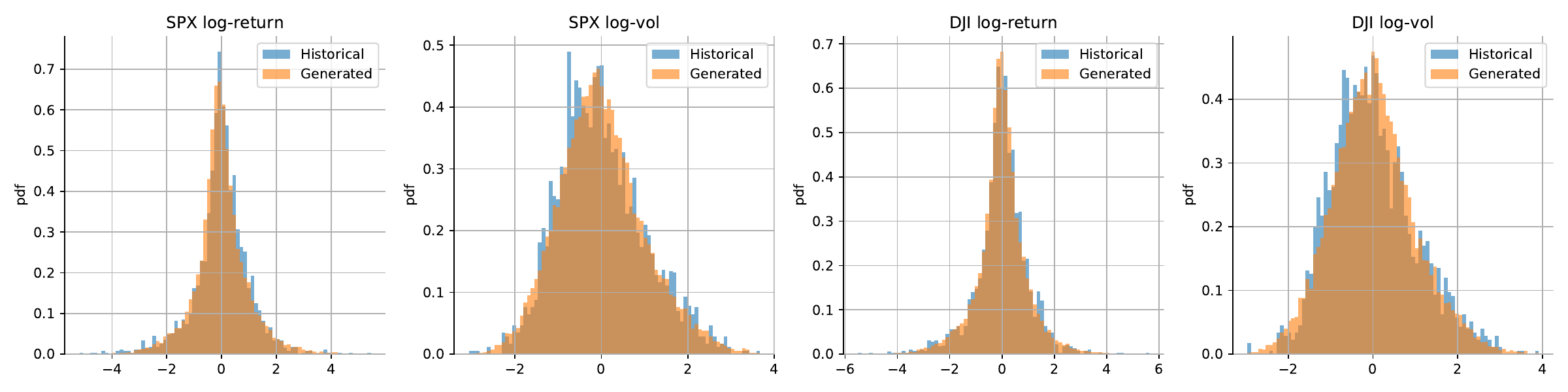}
    \caption{Comparison of the marginal distributions of the generated SigCWGAN paths and the SPX and DJI data.}
    \label{fig:marginal_comparison_SPX_DJI_SigCWGAN}
\end{figure}

\newcolumntype{Y}{>{\centering\arraybackslash}X}
\begin{table}[!ht]
\centering
\resizebox{\columnwidth}{!}{%
\begin{tabularx}{1.4\textwidth}{l| Y || Y || Y || Y || Y}
\toprule
\multicolumn{1}{l|}{Metrics} & \multicolumn{1}{c||}{marginal distribution} &   \multicolumn{1}{c||}{auto-correlation}& 
 \multicolumn{1}{c||}{correlation} &
 \multicolumn{1}{c||}{$R^2(\%)$} & \multicolumn{1}{c}{Sig-$W_1$}    
\\\midrule
SigCWGAN   & 0.01730, {0.01674}  & 0.01342, \textbf{0.01192}     & \textbf{0.01079}, \textbf{0.07435} &2.996, {7.948}   & \textbf{0.18448}, \textbf{4.36744}                \\\midrule
TimeGAN   & 0.02155, 0.02127    & 0.05792, 0.03035     & {0.12363}, {0.61488} & 5.955, 8.586 & 0.58541, 5.99482\\\midrule
RCGAN     & 0.02094, \textbf{0.01655}   & 0.03362, 0.04075     & 0.04606, 0.15353     & \textbf{2.788}, \textbf{7.190}  & 0.47107, 5.43254                   \\\midrule
GMMN      & 0.01608, 0.02387   & \textbf{0,01283}, 0.02676  & 0.04651, 0.22380   & 9.049, 7.384 & 0.59073, 6.23777   \\\midrule
GARCH&\textbf{0.01583},0.01670 &0.13392, 0.11337&0.15791, 0.7290 & 12.1253, 12.5686&0.64825, 6.15344 \\\bottomrule
\end{tabularx}}
\caption{Numerical results of the stock datasets. In each cell, the left/right number are the result for the SPX data/ the SPX and DJI data respectively. We use the relative error of TSTR $R^{2}$ against TRTR  $R^{2}$ as the $R^2$ metric.}
\label{table_stocks_reshape}
\end{table}

\begin{figure}[H]
\centering
\includegraphics[width=\textwidth]{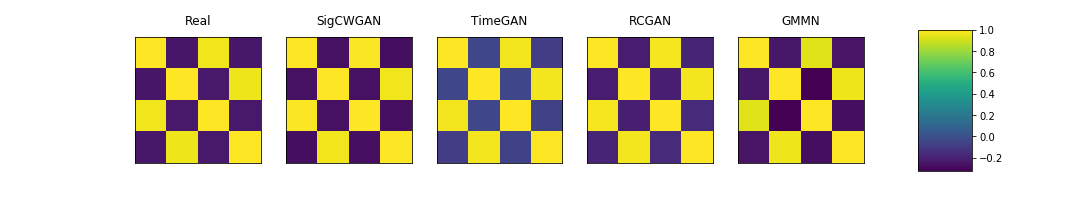}
\caption{Comparison of real and synthetic cross-correlation matrices for SPX and DJI log-return and log-volatility data. On the far left the real cross-correlation matrix from SPX and DJI data is shown.  $x$/$y$-axis represents the feature dimension while the color of the $(i, j)^{th}$ block represents the correlation of $X_{t}^{(i)}$ and $X_{t}^{(j)}$. The color bar on the far right indicates the range of values taken. }\label{Fig:Correlation_Comparison_SPX_DJI}
\end{figure}

 \begin{figure}[t]
        \centering
        \includegraphics[width=0.8\textwidth]{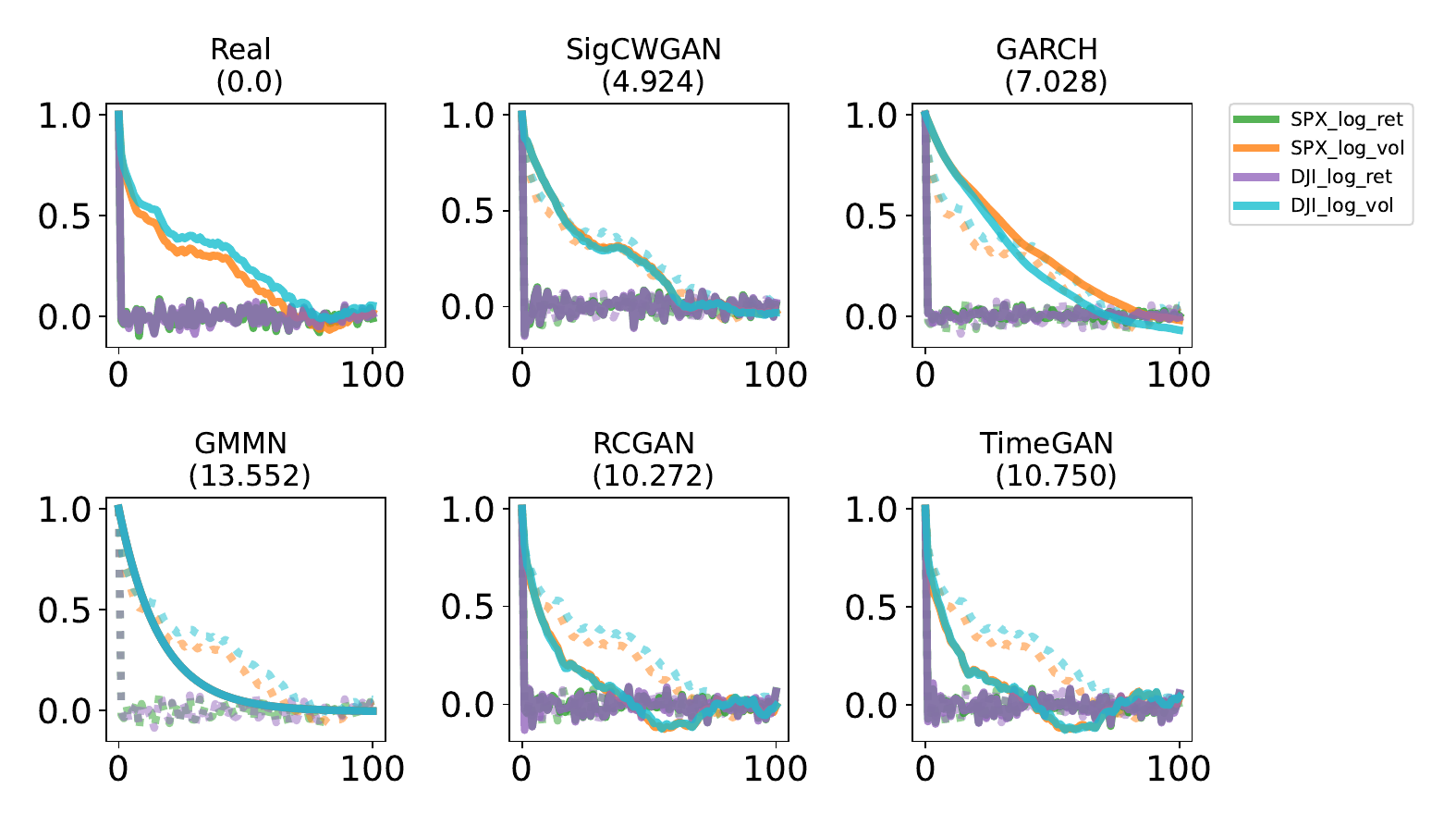}
        \caption{ACF plot for each channel on the SPX/DJI dataset. Here x-axis represents the lag value ( with maximum lag equal to 100) and y-axis represent the corresponding auto-correlation. The length of real/generated time series used to compute the ACF is 1000. The number in the bracket under each model is the sum of the absolute difference between the correlation coefficients computed from real (dashed line) and generated (solid line) samples.}
        \label{fig:long_acf}
    \end{figure}
\subsection{Bitcoin-USD dataset}
The Bitcoin-USD dataset contains hourly data of Bitcoin price in USD from 2021 to 2022. We use the data in 2021 (2022) for the training (testing), respectively, which are illustrated in Figure \ref{Fig:btc-usd}.  We apply our method to learn the future log-return of the future 6 hours given the past 24 hours. We encode the future and past paths with their signatures of depth 4.
Table \ref{table:btc-usd} demonstrates that our proposed SigCWGAN outperforms the other baselines in terms of almost all the test metrics. The $R^{2}$ score of the RCGAN (0.3165) is slightly better than that of the SigCWGAN by 0.0155, whilst SigCWGAN achieves superior performance than the RCGAN in terms of other metrics, especially marginal distribution (2.0532 v.s. 2.803). The better performance of the SigCWGAN to capture the temporal dependency is also verified by the additional results of the autocorrelation metric and the $R^2$-metric for different lag values are provided in tables \ref{table:btc autocorrelation lag} and \ref{table:btc r2 lag}.
\begin{figure}[t]
\centering
\includegraphics[width=\textwidth]{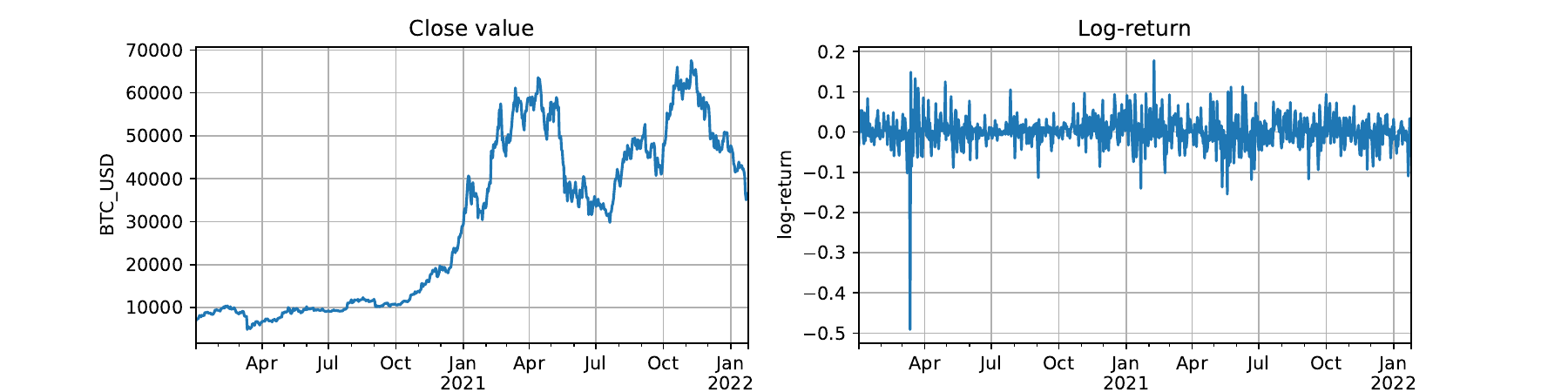}
\caption{The evolution of the close value (Left) and log return of BTC-USD from January 2021 to January 2023.}\label{Fig:btc-usd}
\end{figure}

\newcolumntype{Y}{>{\centering\arraybackslash}X}
\begin{table}[!ht]
\centering
\resizebox{0.9\columnwidth}{!}{%
\begin{tabularx}{1.2\textwidth}{l| Y || Y || Y  || Y}
\toprule
\multicolumn{1}{l|}{Metrics} & \multicolumn{1}{c||}{marginal distribution} &   \multicolumn{1}{c||}{auto-correlation}& 
 \multicolumn{1}{c||}{$R^2(\%)$} & \multicolumn{1}{c}{Sig-$W_1$}    
\\\midrule
SigCWGAN   & \textbf{2.0532}  & 0.091 & 0.3320 & \textbf{0.0829}  \\\midrule
TimeGAN   & 2.8037   & 0.1203   & 0.7582 & 0.1675 \\\midrule
RCGAN     & 2.8603   & \textbf{0.0532}   & \textbf{0.3165}    & 0.0994  \\\midrule
GMMN      & 2.8212   & 0.2093   & 0.3904    & 0.0846   \\\midrule
GARCH      & 4.5063   & 0.0872   & 123.77    & 2.73   \\\bottomrule
\end{tabularx}}
\caption{Numerical results of BTC-USD data experiment. We use the relative error of TSTR $R^2$ against TRTR $R^2$ as the $R^2$ metric. }\label{table:btc-usd}
\end{table}

\section{Conclusion}\label{Section_conclusion}

In this paper, we developed the conditional Sig-Wasserstein GAN for time series generation based on the explicit approximation of $W_{1}$ metric using the signature features space. This eliminates the problem of having to approximate a costly critic / discriminator and, as a consequence, dramatically simplifies training. Our method achieves state-of-the-art results on both synthetic and empirical dataset.

Our proposed conditional Sig-Wasserstein GAN is proved to be effective for generating time series of a moderate dimension. However, it may suffer the curse of dimensionality caused by high path dimension. It might be interesting to explore how to combine SigCWGAN with the implicit generative model to learn the low dimensional latent embedding and hence cope with the high dimensional path case. Moreover, on the theoretical level, it is worthy of investigating the conditions, under which the $W_1$ metric on the signature space coincides with the Sig-$W_1$ metric. 

\acks{HN is supported by the EPSRC under the program grant EP/S026347/1. HN and LS are supported by the Alan Turing Institute under the EPSRC grant EP/N510129/1. All authors thank the anonymous referees for constructive feedback, which greatly improves the paper. Moreover, HN extends her gratitude to Siran Li, Terry Lyons, Chong Lou, Jiajie Tao, and Hang Lou for their helpful discussion.} 

\paragraph{Data Availability Statement}
The data that support the findings of this study are openly available in Conditional-Sig-Wasserstein-GANs repository at \url{https://github.com/SigCGANs/Conditional-Sig-Wasserstein-GANs}.  These empirical data were derived from the following resources available in the public domain: (1) the Oxford-Man Institute's "realised library"  \url{https://realized.oxford-man.ox.ac.uk/data};(2) \url{https://github.com/David-Woroniuk/Historic_Crypto}.

\bibliography{myref}

\newpage
\appendix
\section{PRELIMINARY}\label{SupplementaryMaterial}

For the sake of precision, we start by introducing basic concepts around the signature of a path, which lays the foundation for our analysis on the signature approximation for Wasserstein-1 Distance. We complete this section by providing the proof of Lemma \ref{Lemma_lp}, which is essential for the derivation of the proposed Sig-W$_1$ metric. 

\subsection{Signature of a path}
We introduce the $p$-variation as a measure of the roughness of the path. For ease of notation, let $J$ denote a compact time interval.
\begin{definition} [$p$-Variation]\label{p_variation}
Let $p\geq 1$ be a real number. Let $X: J \rightarrow E$ be a continuous path. The $p$-variation of $X$ on the interval $J$ is defined by 
\begin{equation}
    \vert\vert X\vert\vert_{p,J}=\left[ \sup_{\mathcal{D}\subset J}\sum_{j=0}^{r-1}\left\vert X_{t_{j+1}}-X_{t_j}\right\vert^p\right]^{\frac{1}{p}},
\end{equation}
where the supremum is taken over any time partition of $J$, i.e. $\mathcal{D} = (t_{1}, t_{2}, \cdots, t_{r})$.  \footnote{
Let $J = [s, t]$ be a closed bounded interval. A time partition of $J$ is an increasing sequence of real numbers $\mathcal{D} = (t_{0}, t_{1}, \cdots, t_{r})$ such that $s = t_{0} < t_{1}< \cdots < t_{r} = t$. Let $\vert \mathcal{D} \vert$ denote the number of time points in $\mathcal{D}$, i.e. $\vert \mathcal{D} \vert = r+1$. $\Delta \mathcal{D}$ denotes the time mesh of $\mathcal{D}$, i.e. $\Delta  \mathcal{D}:=\overset{r-1}{\underset{i = 0}{\max}} (t_{i+1} - t_{i})$.}
\end{definition}
Let $\mathcal{C}^{p}(J,E)$ denote the set of all continuous paths mapping from $J$ to $E$ of finite $p$-variation. The larger the $p$-variation is, the rougher a path is. The compactness of the time interval $J$ cannot ensure the finite $1$-variation of a continuous path in general. For example, Brownian motion has $(2+\varepsilon)$-variation a.s $\forall \varepsilon >0$, but it has infinite $p$-variation a.s.$\forall p \in [1, 2]$.

For each $p \geq 1$, the $p$-variation norm of a path $X \in \mathcal{C}_{p}(J, E)$ is denoted by $\vert\vert X \vert\vert_{p-var}$ and defined as follows:
\begin{eqnarray*}
	\vert \vert X \vert\vert_{p-var} = \vert\vert X \vert\vert_{p, J} + \sup_{t \in J} \vert\vert X_{t}\vert\vert.
\end{eqnarray*}

Recall that $\pi_n$ is the projection map from the tensor algebra element to its truncation up to level $n$. To differentiate with $\pi_n$, we also introduce another projection map $\Pi_n: T((E)) \rightarrow E^{\otimes n}$, which maps any $\mathbf{a} = (a_0, a_1, \cdots, a_n, \cdots)$ to its $n^{th}$ term $a_n$.

For concreteness, we state the decay rate of the signature for paths of finite $1$-variation. However, there is a similar statement of the factorial decay for the case of paths of finite $p$-variation \cite{lyons2007differential}. 
\begin{lemma}[Factorial Decay of the Signature]\label{Lemma_factorial_decay}
Let $X \in \mathcal{C}_{1}(J, E)$. Then there exists a constant $C>0$, such that for all $m \geq 0$,
\begin{eqnarray*}
\vert \Pi_{m}(S(X)) \vert \leq  C \frac{\vert X \vert_{1-var}^{m}}{m!}.
\end{eqnarray*}
\end{lemma}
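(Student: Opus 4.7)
The plan is to bound the iterated integral $X^m_J = \int_{t_1 < \cdots < t_m \in J} dX_{t_1} \otimes \cdots \otimes dX_{t_m}$ componentwise, exploiting (i) the admissibility of the tensor norm and (ii) the fact that the ordered simplex has volume $1/m!$ times the full product cube under the total-variation measure. Note that the lemma as stated omits the constant $C$; the natural bound is simply $|X|_{1-\mathrm{var}}^m/m!$, so I will take $C=1$.

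First I would recall that since $X \in \mathcal{C}_1(J,E)$, each single integral $\int dX_t$ is a Riemann--Stieltjes (equivalently Young) integral against a path of bounded variation, and the iterated integrals in \eqref{eqn_sig} exist pointwise. Then, for a fixed partition $\mathcal{D}$ of $J$ and the associated Riemann sums approximating $X^m_J$, I would apply the admissibility property $|v\otimes w|\leq |v|\,|w|$ repeatedly: each Riemann-sum tensor product $\Delta X_{t_{i_1}}\otimes \cdots \otimes \Delta X_{t_{i_m}}$ satisfies $|\Delta X_{t_{i_1}}\otimes \cdots \otimes \Delta X_{t_{i_m}}|\leq \prod_{k=1}^m |\Delta X_{t_{i_k}}|$. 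Passing to the limit $|\mathcal{D}|\to 0$, this yields the bound
\begin{equation*}
|X^m_J| \;\leq\; \int_{t_1<\cdots<t_m,\; t_k\in J} |dX_{t_1}|\cdots |dX_{t_m}|,
\end{equation*}
where $|dX_t|$ denotes the total-variation measure induced by $X$ on $J$.

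Next I would note that by the very definition of $p$-variation with $p=1$ (Definition \ref{p_variation}), the total mass of this measure is $|dX|(J) = \|X\|_{1,J}\leq |X|_{1-\mathrm{var}}$. The key combinatorial step is then to observe that the ordered simplex $\{(t_1,\dots,t_m)\in J^m : t_1<\cdots<t_m\}$ is, up to a null set for the product measure $|dX|^{\otimes m}$ (which has no atoms on intervals of positive measure — one handles any atoms by a standard symmetrisation/limit argument), one of $m!$ congruent pieces partitioning $J^m$. Hence
\begin{equation*}
\int_{t_1<\cdots<t_m} |dX_{t_1}|\cdots |dX_{t_m}| \;=\; \frac{1}{m!}\left(\int_J |dX_t|\right)^{\!m} \;\leq\; \frac{|X|_{1-\mathrm{var}}^{m}}{m!},
\end{equation*}
which combined with the previous bound yields the claim.

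I expect the only subtlety to be the atomic-measure case in the symmetry argument: if $|dX|$ has atoms, the diagonal sets $\{t_i=t_j\}$ may carry positive $|dX|^{\otimes m}$-mass, so the $m!$ simplices are not disjoint. This is handled either by approximating $X$ by smooth paths (for which $|dX|$ has no atoms) and using continuity of the iterated integral in $1$-variation, or by noting that in the original Riemann sum one can restrict to strictly ordered indices $i_1<\cdots<i_m$ and then a direct counting argument gives the factor $1/m!$ before passing to the limit. All other steps are routine manipulations of admissible tensor norms and Riemann--Stieltjes integrals.
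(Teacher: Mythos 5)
Your proposal is correct and is exactly the classical argument that the paper defers to by citation: the lemma is stated without proof in the supplementary material and attributed to \cite{lyons2007differential}, where the proof proceeds precisely as you describe (bound the iterated integral by the total-variation measure of the ordered simplex via the admissible-norm inequality $|v\otimes w|\leq|v||w|$, then use the $m!$-fold symmetry of the product measure on the cube). One small remark: the atomicity worry you raise at the end is vacuous here, since $X\in\mathcal{C}_{1}(J,E)$ is continuous and hence the Lebesgue--Stieltjes measure $|dX|$ has no atoms; and even in general only the inequality $\int_{t_1<\cdots<t_m}|dX_{t_1}|\cdots|dX_{t_m}|\leq \frac{1}{m!}\bigl(\int_J|dX_t|\bigr)^{m}$ is needed, which holds because the $m!$ strictly ordered simplices are pairwise disjoint regardless of whether their union exhausts the cube. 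Your reading of the spurious constant $C$ (taking $C=1$) is also the right one.
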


\begin{lemma}\label{Lemma_continuity_sig}
Let $\mathcal{K}$ denote a compact set of $\Omega_0([0, T], E)$. Then the range  $S(\Omega_0([0, T], E))$ is a compact set on $T^{\mathrm{p}}(E)$ endowed with $l^{\mathrm{p}}$ topology.
\end{lemma}
\begin{proof}
The proof boils down to showing the continuity of the signature map $S$ from $\Omega_0([0, T], E)$ with $1-$variation norm to $\mathbf{T}^{\mathrm{p}}(E)$ with $l^{\mathrm{p}}$ topology. Let $X, Y \in \Omega_0([0, T], E)$, which are controlled by the control function $\omega$, e.g., $\omega(s,t):=\max(|X_{s,t}|_{1-var}, |Y_{s, t}|_{1-var})$ for all $0<s<t<T$. Let $|X_{[s, t]}-Y_{[s,t]}|_{1-var} \leq \epsilon \omega(s,t)$ for some $\epsilon \in \mathbb{R}^{+}$. Then by the continuity of the signature map in Theorem 3.10, \cite{lyons2007differential} and the admissible norm $l^1$, it holds that for an integer $n \geq 1$,
\begin{eqnarray*}
| \Pi_m(S(X)) - \Pi_m(S(Y)) |_{1} \leq \epsilon \frac{\omega(0,T)^{m}}{\beta n!},
\end{eqnarray*}
where $\beta = 2\left(1+\sum_{r=3}^{\infty}\left(\frac{2}{r-2}\right)^{2}\right)$. The direct calculation leads to that
\begin{eqnarray*}
||S(X) - S(Y)||_{\mathrm{p}} \leq ||S(X) - S(Y)||_{\mathrm{1}} \leq \sum_{m = 1}^{\infty} \left(\epsilon \frac{\omega(0,T)^{m}}{\beta m!}\right) =  \frac{\epsilon}{\beta} \sum_{m =1}^{\infty}\left(\frac{\omega(0,T)^m}{m!}\right) < +\infty.
\end{eqnarray*}
\end{proof}
\subsection{Expected Signature of stochastic processes}
\begin{definition}\label{Def_ROC_ES}
Let $X$ denote a stochastic process, whose signature is well defined almost surely. Assume that $\mathbb{E}[S(X)]$ is well defined and finite. We say that $\mathbb{E}[S(X)]$ has infinite radius of convergence, if and only if for every $\lambda \geq 0$,
\begin{eqnarray*}
\sum_{n \geq 0} \lambda^{n} |\Pi_{n}(\mathbb{E}[S(X)])| < \infty.
\end{eqnarray*}
\end{definition}

\subsection{The Signature Wasserstein-1 metric (Sig-$W_{1}$)}\label{subsec: sigw1}
In the following, we provide the proof of Lemma \ref{Lemma_lp}.
\begin{proof}
Let $\left(e_{I} = e_{i_{1}} \otimes \cdots e_{i_{n}}\right)_{I}$ be the canonical basis of $T((E))$. For any $a \in T((E))$, we write $a = (a_{I})$, i.e. $a = \sum_{I} a_I e_I$. Then $\left(e_{I}^{*} = e^{*}_{i_{1}} \otimes \cdots e^{*}_{i_{n}}\right)_{I = (i_1, \cdots, i_n) }$ is the basis of $T((E))^{*}$ and we can write $L = \sum_{I} l_I e^{*}_{I}$.

To prove Eq. \eqref{Eqn_T(E)*_q_norm}, we solve the constraint optimization of maximising $\bm{L}a$ with the constraint $||a||_p = 1$ by the Lagrange multiplier method. W.l.o.g, we only prove the case for $L\neq 0$ as it is trivial for $\bm{L}=0$, (when $L = 0$, $La \equiv 0$, $||\bm{L}||_q = 0$, and Eq. \eqref{Eqn_T(E)*_q_norm} holds). More specifically, we solve the unconstrained optimisation 
\begin{eqnarray*}
\mathcal{L}(a, \lambda):=\sup_{a} |\bm{L}a|+\lambda \left(\sum_{I}|a_I|^{\mathrm{p}} - 1\right),
\end{eqnarray*}
where $\bm{L} \neq 0$.\\
The optimal $(a^{*}, \lambda^{*})$ is a solution to the below equations:
\begin{eqnarray*}
&&\frac{\partial{\mathcal{L}}}{\partial a_I} = (\text{sign}(a_I l_I)l_I + (\lambda (\mathrm{p}|a_I|^{\mathrm{p}-1} \text{sign}(a_I))) ) = 0, \forall I\\
&& \frac{\partial{\mathcal{L}}}{\partial \lambda} = \sum_I |a_I|^{\mathrm{p}} - 1 = 0.
\end{eqnarray*}
Then we obtain that $a^{*} = (a_I^{*})_{I}$ with $a^{*}_I =\text{sign}(l_I) \frac{|l_I|^{\frac{1}{\mathrm{p}-1}}}{(\sum_I |l_I|^{\mathrm{p}/(\mathrm{p}-1)})^{1/\mathrm{p}}} = \text{sign}(l_I) \frac{|l_I|^{\frac{1}{\mathrm{p}-1}}}{(\sum_I |l_I|^{\mathrm{q}})^{1/\mathrm{p}}}$. Then it follows that
\begin{eqnarray*}
&&l_I a^{*}_I = |l_I|\cdot \frac{|l_I|^{\frac{1}{\mathrm{p}-1}}}{(\sum_I |l_I|^{\mathrm{q}})^{1/\mathrm{p}}} = \frac{|l_I|^{\mathrm{q}}}{(\sum_I |l_I|^{\mathrm{q}})^{1/\mathrm{p}}} \geq 0;\\
&&|La^{*}| = La^* = \sum_I l_I a^{*}_I =  \frac{\sum_I |l_I|^{\mathrm{q}}}{ \left(\sum_I |l_I|^{\mathrm{q}}\right)^{1/\mathrm{p}}} =  (\sum_I |l_I|^{\mathrm{q}})^{1-1/\mathrm{p}} = (\sum_I |l_I|^{\mathrm{q}})^{1/\mathrm{q}} = ||L||_\mathrm{q}.
\end{eqnarray*}
By H\^older's inequality,
\begin{eqnarray*}
\sup_{||a||_\mathrm{p} = 1} |\bm{L}a| \leq \sup_{||a||_\mathrm{p} = 1} ||a||_\mathrm{\mathrm{p}} ||\bm{L}||_\mathrm{q} = ||L||_\mathrm{q},
\end{eqnarray*}
and the superum $||\bm{L}||_\mathrm{q}$ is obtained when $a = a^{*}$. We complete the proof of Eq. \eqref{Eqn_T(E)*_q_norm}.

The proof of Eq. \eqref{Eqn_T(E)_p_norm} is similar to the above. We only need to show the supremum taken over $||L|| = 1$ is the same as that $||L|| \leq 1$. Again we only prove for $a \neq 0$ as the $a = 0$ case is trivial. 
Similarly to the above, when $L^{*}(a) := (l_{I}^{*})$ with
\begin{eqnarray}
l^{*}_I = \text{sign}(a_I) \frac{|a_l|^{\frac{1}{\mathrm{p}-1}}}{(\sum_I |a_I|^{\mathrm{q}})^{1/\mathrm{p}}},
\end{eqnarray}
$L^{*}(a)$ attains the supremum $\sup_{||L||_\mathrm{q} = 1} L(a) = ||a||_\mathrm{p}$ and $||L^{*}||_\mathrm{q} = 1$. 
By H\^older's inequality,
\begin{eqnarray}
\sup_{||L||_\mathrm{q} \leq 1} |\bm{L}a| \leq \sup_{||\bm{L}||_\mathrm{q} \leq 1} ||a||_\mathrm{p} ||\bm{L}||_\mathrm{q} \leq ||a||_\mathrm{p}.\label{super_la}
\end{eqnarray}
As $\sup_{||\bm{L}||_\mathrm{q} \leq 1} La$ can not exceed $||a||_\mathrm{p}$ and $\bm{L}^{*}(a) = ||a||_\mathrm{p}$, it follows 
\begin{eqnarray*}
\sup_{||L||_\mathrm{q} \leq 1} |\bm{L}(a)| = ||a||_\mathrm{p}.
\end{eqnarray*}
\end{proof}
\section{CONDITIONAL SIGNATURE WASSERSTEIN GANS }\label{sec:ImplementationCSigWGAN}
In this section, we provide the algorithmic details of the Conditional Signature Wasserstein GANs for practical applications. 
\subsection{Path transformations}\label{subsec:path transformation}
The core idea of SigCWGAN is to lift the time series to the signature feature as a principled and more effective feature extraction. In practice, the signature feature may often be accompanied with several of the following path transformations:
\begin{itemize}
    \item Time jointed transformation ( Definition 4.3, \cite{levin2013learning} );
    \item Cumulative sum transformation: it is defined to map every $(X_{t})_{t = 1}^{T}$ to $CS_{t}:= \sum_{i = 1}^{t} X_{i}, \forall t \in \{1, \cdots, T\}$ and $CS_{0} = \mathbf{0}$ ( Equation (2.20) in \cite{chevyrev2016primer} ).  
    \item Lead-Lag transformation ( Equation (2.8) in \cite{chevyrev2016primer} ).
    \item Lag added transformation: The $m$-lag added transformation of $(X_{t})_{t=1}^{T}$ is defined as follows: $\text{Lag}_{m}(X) = (Y_{t})_{t=1}^{T-m}$, such that
\begin{eqnarray*}
Y_{t} = (X_{t}, \cdots, X_{t+m}).
\end{eqnarray*}
\end{itemize}

Although in our analysis on the Sig-$W_{1}$ metric, we use the time augmented path to embed the discrete time series $X$ to a continuous path for the ease of the discussion. However, to use Sig-$W_1$ metric to differentiate two measures on the path space, the only requirement for the way of embeddings a discrete time series to a continuous path is that this embedding needs to ensure the bijection between the time series and its signature. Therefore, in practice we can choose other embedding to achieve that; for example, by applying the lead-lag transformation to time series, one can ensure the one-to-one correspondence between the time series and the signature.

\subsection{AR-FNN Architecture}
\label{sec:arfnn_architecture}
We give a detailed description of the AR-FNN architecture below. For this purpose let us begin by defining the employed transformations, namely the parametric rectifier linear unit and the residual layer. 

\begin{definition}[Parametric rectifier linear unit]
The parametrised function $\phi_\alpha \in C(\mathbb{R}, \mathbb{R}), \alpha \geq 0$ defined as 
\begin{equation*}
	\phi_\alpha(x) = \max(0, x) + \alpha \min(0, x)
\end{equation*}
is called \emph{parametric rectifier linear unit} (\emph{PReLU}). 
\end{definition}
	
\begin{definition}[Residual layer]
Let $F: \mathbb{R}^n \to \mathbb{R}^n$ be an affine transformation and $\phi_\alpha, \alpha \geq 0$ a PReLU. The function $R: \mathbb{R}^n \to \mathbb{R}^n$ defined as 
\begin{align*}
R(x) = x+\phi_\alpha\circ F(x)
\end{align*}
where $\phi_\alpha$ is applied component-wise, is called \emph{residual layer}. 
\end{definition}

The AR-FNN is defined as a composition of PReLUs, residual layers and affine transformations. Its inputs are the past $p$-lags of the $d$-dimensional process we want to generate as well as the $d$-dimensional noise vector. A formal definition is given below. 

\begin{definition}[AR-FNN]
\label{def:ar-fnn}
	Let $d, \bar{p} \in \mathbb{N}$, $A_1: \mathbb{R}^{d(\bar{p}+1)} \to \mathbb{R}^{50}$, $A_4: \mathbb{R}^{50} \to \mathbb{R}^{d}$ be affine transformations, $\phi_\alpha, \alpha \geq 0$ a PReLU and $R_2, R_3: \mathbb{R}^{50}\to\mathbb{R}^{50}$ two residual layers. Then the function $\textrm{ArFNN}: \mathbb{R}^{d\bar{p}} \times \mathbb{R}^d \to \mathbb{R}^{d}$ defined as 
	\begin{equation*}
		\textrm{ArFNN}(x, z) = A_4 \circ R_3 \circ R_2 \circ \phi_\alpha \circ A_1(xz)
	\end{equation*}
	where $xz$ denotes the concatenated vectors $x $ and $ z$, is called \emph{autoregressive feedforward neural network} (\emph{AR-FNN}). 
\end{definition} 
The pseudocode of generating the next $\bar{q}$-step forecast using $G^{\theta}$ is given in Algorithm \ref{alg_generator}.

\begin{algorithm}[H]
    \caption{Pseudocode of Generating the next $q$-step forecast using $G^{\theta}$}\label{alg_generator}
        \hspace*{\algorithmicindent} \textbf{Input:  $x_{t-\bar{p}+1:t},  G_{\theta}$ } \\
    \hspace*{\algorithmicindent} \textbf{Output: $\hat{x}_{t+1:t+\bar{q}}$} 
   \begin{algorithmic}[1]
   \State $\hat{x}_{\text{future}} \leftarrow$ a matrix of zeros of dimension $d \times \bar{q}$.
   \State $\hat{x} \leftarrow $the concatenation of $x_{t-\bar{p}+1:t}$ and $\hat{x}_{\text{future}}$.
    \For{$i = 1: \bar{q}$ }
    \State We sample $Z_{i}$ from the iid standard normal distribution.
    \State $\hat{x}_{t+i} = G(\hat{x}_{t+i-\bar{p}: t+i-1}, Z_{i})$.
    \EndFor
    \Return  $\hat{x}_{t+1:t+\bar{q}}$.
    \end{algorithmic}
\end{algorithm}

\section{NUMERICAL IMPLEMENTATIONS} \label{sec numerical implementations}
We use the following public codes for implementing the below three baselines:
\begin{itemize}
    \item RCGAN: \url{https://github.com/ratschlab/RGAN}
    \item Time-GAN: \url{https://github.com/jsyoon0823/TimeGAN}
    \item GMMN: \url{https://github.com/yujiali/gmmn}
\end{itemize}
Additionally, we implement a Conditional Wasserstein GAN (CWGAN) in the VAR(1) example: we perform the min-max optimisation~\eqref{eq:minmax} where the discriminator is parametrised by the same neural network architecture as the generator, i.e. a 3-layer FNN. We ensure that the discriminator is 1-Lipschitz by adding a gradient penalty term introduced by \cite{gulrajani2017improved}. 


For a fair comparison, we use the same neural network generator architecture, namely the 3-layer AR-FNN described in \autoref{sec:arfnn_architecture}, for the SigCWGAN, TimeGAN, RCGAN and GMMN. The TimeGAN and RCGAN discriminators take as inputs the conditioning time series $X_{1:\bar{p}}$ concatenated with the synthetic time series $X_{\bar{p}+1:\bar{p}+\bar{q}}$. Both discriminators use the AR-FNN as the underlying architecture. However, the first affine layer is adjusted such that the AR-FNN is defined as a function of the concatenated time series, i.e. $\bar{p}+\bar{q}$ lags and not $\bar{p}$-lags as for the generator. Similarly, the MMD is computed by concatenating the conditioning and synthetic time series. In order to obtain the bandwidth parameter for computing the MMD of the GMMN we benchmarked the median heuristic against using a mixture of bandwidths spanning multiple ranges as proposed in \cite{li2015generative} and found latter to work best. In our experiments we used three kernels with bandwidths $0.1, 1, 5$. 

All algorithms were optimised for a total of 1,000 generator weight updates. The neural network weights were optimised by using the Adam optimiser \cite{kingma2014adam} and learning rates for the generators were set to 0.001. For the RCGAN and TimeGAN we applied two time-scale updates (TTUR) \cite{heusel2017gans} and set the learning rate to 0.003. Furthermore, we updated the discriminator’s weights two times per generator weight update in order to improve convergence of the GAN. 

In our numerical experiments, to compute the signature for the SigCWGAN method, we choose to apply the following path transformations on the time series before computing the signatures: (1) we combine the path $x_{\text{past}}$ with its cumulative sum transformed path, denoted by $y_{\text{past}}$, which is a $2d$-dimensional path; (2) we apply 1-lag added transformation on $y_{past}$; (3) it follows with the Lead-Lag transformation. The signature of such transformed path can well capture the marginal distributions, auto-correlations and other temporal characteristics of the time-series data.

In the following, we describe the calculation of the test metrics precisely. Let $(X_{t})_{t=1}^{T}$ denote a $d$-dimensional time series sampled from the real target distribution. We first extract the input-out pairs $(X_{t-\bar{p}+1:t}, X_{t+1:t+\bar{q}})_{t \in \mathcal{T}}$, where $\mathcal{T}$ is the set of time indexes. Given the generator $G$, for each input sample $(X_{t-\bar{p}+1: t})$, we generate one sample of the $\bar{q}$-step forecast $\hat{X}^{(t)}_{t+1, t+\bar{q}}$ (if $G$ is not a conditional generator, we generate a sample of $\bar{q}$-step forecast $\hat{X}^{(t)}_{t+1, t+\bar{q}}$ without any conditioning variable.).  The synthetic data generated by $G$ is given by $(\hat{X}^{(t)}_{t+1, t+\bar{q}})_t$, which we use to compute the test metrics. 
\paragraph{Metric on marginal distribution} Following \cite{wiese2019DH}, we use $(X_{t+1:t+\bar{q}})_{t \in \mathcal{T}}$ and $(\hat{X}^{(t)}_{t+1:t+\bar{q}})_{t\in \mathcal{T}}$ as the samples of the marginal distribution of the real data and synthetic data per each time step. For each feature dimension $i \in \{1, \cdots, d\}$, we compute two empirical density functions based on the histograms of the real data and synthetic data resp. denoted by $\hat{df}_{r}^{i}$ and $\hat{df}_{G}^{i}$. Then the metric on marginal distribution of the true and synthetic data is given by 
\begin{eqnarray*}
\frac{1}{d}\sum_{i = 1}^{d} \vert \hat{df}_{r}^{i} - \hat{df}_{G}^{i} \vert_{1}. 
\end{eqnarray*}
\paragraph{Absolute difference of lag-1 auto-correlation}
The auto-covariance of $i^{th}$ feature of the real data with lag value $k$ is computed by 
\begin{eqnarray}\label{eqn: auto_cov_real}
\rho_{r}^{i}(k):= \frac{1}{T-k}\sum_{t=1}^{T-k}(X_{t}^{i} - \bar{X}^{i})(X_{t+k}^{i} - \bar{X}^{i}),
\end{eqnarray}
where $\bar{X}^{i}$ is the average of $(X_{t}^{i})_{t=1}^{T}$. \\
For the synthetic data, we estimate the auto-covariance of $i^{th}$ feature with lag value $k$ is computed by
\begin{eqnarray}\label{eqn: auto_cov_sync}
\rho_{G}^{i}(k):= \frac{1}{|\mathcal{T}|} \sum_{t=1}^{|\mathcal{T}|} \hat{X}_{t+1}^{(t),i} \hat{X}_{t+k+1}^{(t), i}  - \left(\frac{1}{|\mathcal{T}|} \sum_{t = 1}^{|\mathcal{T}|}\hat{X}_{t+1}^{(t), i}\right)\left(\frac{1}{|\mathcal{T}|}\sum_{t=1}^{|\mathcal{T}|} \hat{X}_{t+k+1}^{(t),i}\right).
\end{eqnarray}
The estimator of the lag-1 auto-correlation of the real/synthetic data is given by $\frac{\rho_{r}^{i}(1)}{\rho_{r}^{i}(0)}$/ $\frac{\rho_{G}^{i}(1)}{\rho_{G}^{i}(0)}$. 
The ACF score is defined to be the absolute difference of lag-1 auto-correlation given as follows:
\begin{eqnarray*}
\frac{1}{d}\sum_{i = 1}^{d}\left \vert \frac{\rho_{r}^{i}(1)}{\rho_{r}^{i}(0)} -  \frac{\rho_{G}^{i}(1)}{\rho_{G}^{i}(0)}\right \vert.
\end{eqnarray*}
\paragraph{Metric on the correlation}
We estimate the covariance of the $i^{th}$ and $j^{th}$ feature of time series from the true data as follows:
\begin{eqnarray}\label{eqn:cov_real_est}
\text{cov}^{i, j}_{r}=\frac{1}{T} \sum_{t=1}^{T} X^{i}_{t} X^{j}_{t} - \left(\frac{1}{T} \sum_{t=1}^{T} X^{i}_{t}\right)\left(\frac{1}{T} \sum_{t=1}^{T} X^{j}_{t}\right).
\end{eqnarray}
Similarly, we estimate the covariance of the $i^{th}$ and $j^{th}$ feature of time series from the synthetic data by 
\begin{eqnarray}\label{eqn:cov_sync_est}
\text{cov}^{i, j}_{G}=\frac{1}{|\mathcal{T}|}\frac{1}{q} \sum_{t=1}^{|\mathcal{T}|}\sum_{s=1}^{q} \hat{X}^{(t), i}_{t+s} X^{(t),j}_{t+s} - \left(\frac{1}{|\mathcal{T}|} \sum_{t\in \mathcal{T}} \hat{X}^{(t), i}_{t+s}\right)\left(\frac{1}{|\mathcal{T}|} \sum_{t \in |\mathcal{T}|} X^{(t),j}_{t+s}\right).
\end{eqnarray}
Thus the estimator of the correlation of the $i^{th}$ and $j^{th}$ feature of time series from the real/synthetic data are given by $\tau_{r}^{i, j}:=\frac{\text{cov}^{i, j}_{r}}{\sqrt{\text{cov}^{i, i}_{r} \text{cov}^{j, j}_{r}}}$ and $\tau_{G}^{i, j}:=\frac{\text{cov}^{i, j}_{G}}{\sqrt{\text{cov}^{i, i}_{G} \text{cov}^{j, j}_{G}}}$. 
Then the metric on the correlation between the real data and synthetic data is given by $l_{1}$ norm of the difference of two correlation matrices $\left(\tau_{r}^{i, j}\right)_{i, j \in \{1, \cdots, d\}}$ and $\left(\tau_{G}^{i, j}\right)_{i, j \in \{1, \cdots, d\}}$.
\paragraph{TRTR/TSTR $R^{2}$} We split the input-output pairs $(X_{t-\bar{p}+1: t}, X_{t+1})$ from the real data into the train set and test set. We apply the linear signature model on real training data $(X_{t-\bar{p}+1: t}, X_{t+1})$, validate it and compute the corresponding $R^{2}$ on the on the real test data (TRTR $R^{2}$). Then we apply the same linear signature model on the synthetic data $(X_{t-\bar{p}+1: t}, \hat{X}_{t+1})$, where $\hat{X}_{t+1}$ is simulated by the generator conditioning on the $X_{t-\bar{p}+1: t}$. We evaluate the trained model on the real test data and corresponding $R^{2}$ is called (TSTR $R^{2}$).
\section{SUPPLEMENTARY NUMERICAL RESULTS} \label{sec numerical results}
 \renewcommand{\thetable}{D.\arabic{table}}
 \renewcommand{\thefigure}{D.\arabic{figure}}
 \setcounter{figure}{0}
\setcounter{table}{0}

\subsection{VAR(1) dataset}
We conduct the extensive experiments on VAR(1) with different hyper-parameter settings, i.e. $d \in \{1, 2, 3\}$, $\sigma, \phi \in \{0.2, 0.5, 0.8\}$. 

\paragraph{Test metrics of different models} We apply SigCWGAN, CWGAN and the other above-mentioned methods on VAR(1) differentset with various hyper-parameter settings. The summary of the test metrics of all models on $d$ dimensional VAR(1) data for $d = 1, 2, 3$ can be found in Table \ref{table_Var_dim=1}, \ref{table_Var_dim=2} and \ref{table_Var_dim=3} respectively.

\newcolumntype{Y}{>{\centering\arraybackslash}X}
\begin{table}[!ht]
\caption{Numerical results of $\operatorname{VAR}(1)$ for $d = 1$}\label{table_Var_dim=1}
\centering 
\begin{tabularx}{0.6\textwidth}{l*3{|Y}}
\toprule
\multicolumn{1}{l|}{}     & \multicolumn{3}{c}{Temporal Correlations}                                                       \\ \midrule
Settings                  & \multicolumn{1}{c}{$\phi=0.2$} & \multicolumn{1}{c}{$\phi=0.5$} & \multicolumn{1}{c}{$\phi=0.8$} \\ \midrule
\multicolumn{4}{c}{Metric on marginal distribution}                                                                                                   \\ \midrule
SigCWGAN  & 0.0124 & 0.0100 & 0.0069 \\
CWGAN  & \textbf{0.0070} & 0.0085 & 0.0110 \\
TimeGAN  & 0.0304 & 0.0307 & 0.0194 \\
RCGAN  & 0.0187 & \textbf{0.0065} & \textbf{0.0054} \\
GMMN  & 0.0096 & 0.0087 & 0.0073 \\ \midrule
\multicolumn{4}{c}{Absolute difference of lag-1 autocorrelation}                                                                                                      \\ \midrule
SigCWGAN  & \textbf{0.0124} & \textbf{0.0039} & 0.0044 \\
CWGAN  & 0.0614 & 0.0179 & 0.0109 \\
TimeGAN  & 0.0495 & 0.0787 & 0.0100 \\
RCGAN  & 0.0429 & 0.0124 & \textbf{0.0029} \\
GMMN  & 0.0219 & 0.0248 & 0.0118 \\ \midrule
\multicolumn{4}{c}{$R^2$ obtained from TSTR. (TRTR first row.)}                                                                                               \\ \midrule
\multicolumn{1}{l|}{TRTR} & 0.0457 & 0.2568 & 0.6434 \\ \midrule
SigCWGAN  & 0.0451 & \textbf{0.2562} & \textbf{0.6431} \\
CWGAN  & 0.0338 & 0.2406 & 0.6269 \\
TimeGAN  & 0.0432 & 0.2506 & 0.6365 \\
RCGAN  & 0.0437 & \textbf{0.2562} & 0.6429 \\
GMMN  & \textbf{0.0452} & 0.2539 & 0.6317 \\ \midrule
\multicolumn{4}{c}{Sig-$W_1$ distance}                                                                                                    \\ \midrule
SigCWGAN  & \textbf{0.0524} & \textbf{0.0476} & 0.0393 \\
CWGAN  & 0.0560 & 0.0584 & 0.0528 \\
TimeGAN  & 0.0648 & 0.0641 & 0.0660 \\
RCGAN  & 0.0546 & 0.0505 & 0.0437 \\
GMMN  & 0.0540 & 0.0482 & \textbf{0.0378} \\ \bottomrule

\end{tabularx}
\end{table}

\newcolumntype{Y}{>{\centering\arraybackslash}X}
\begin{table}[!ht]
\caption{Numerical results of $\operatorname{VAR}(1)$ for $d = 2$}\label{table_Var_dim=2}
\begin{tabularx}{1.1\textwidth}{l|Y|Y|Y||Y|Y|Y}
\toprule
\multicolumn{1}{c|}{}         & \multicolumn{3}{c||}{Temporal Correlations (fixing $\sigma=0.8$)}                                                   & \multicolumn{3}{c}{Feature Correlations (fixing $\phi=0.8$)}                                 \\ \midrule
\multicolumn{1}{c|}{Settings} & \multicolumn{1}{c|}{$\phi=0.2$}      & \multicolumn{1}{c|}{$\phi=0.5$}      & \multicolumn{1}{c||}{$\phi=0.8$}      & \multicolumn{1}{c|}{$\sigma=0.2$}    & \multicolumn{1}{c|}{$\sigma=0.5$}    & $\sigma=0.8$    \\ \midrule
\multicolumn{7}{c}{Metric on marginal distribution}                                                                                                                                                                                                 \\ \midrule
\multicolumn{1}{l|}{SigCWGAN}  & 0.0270 & 0.0122 & \textbf{0.0084} & \textbf{0.0089} & \textbf{0.0088} & \textbf{0.0084} \\
\multicolumn{1}{l|}{CWGAN}  & 0.0197 & 0.0134 & 0.0105 & 0.0154 & 0.0131 & 0.0105 \\
\multicolumn{1}{l|}{TimeGAN}  & 0.0270 & 0.0270 & 0.0173 & 0.0197 & 0.0164 & 0.0173 \\
\multicolumn{1}{l|}{RCGAN}  & 0.0120 & 0.0115 & 0.0091 & 0.0092 & 0.0104 & 0.0091 \\
\multicolumn{1}{l|}{GMMN}  & \textbf{0.0098} & \textbf{0.0110} & 0.0101 & 0.0104 & 0.0106 & 0.0101 \\ \midrule
\multicolumn{7}{c}{Absolute difference of lag-1 autocorrelation}                                                                                                                                                                                   \\ \midrule
\multicolumn{1}{l|}{SigCWGAN}  & \textbf{0.0069} & \textbf{0.0035} & \textbf{0.0054} & \textbf{0.0070} & \textbf{0.0062} & \textbf{0.0054} \\
\multicolumn{1}{l|}{CWGAN}  & 0.0523 & 0.0198 & 0.0415 & 0.1138 & 0.0219 & 0.0415 \\
\multicolumn{1}{l|}{TimeGAN}  & 0.0484 & 0.0589 & 0.0110 & 0.0300 & 0.0345 & 0.0110 \\
\multicolumn{1}{l|}{RCGAN}  & 0.0401 & 0.0057 & 0.0294 & 0.0308 & 0.0373 & 0.0294 \\
\multicolumn{1}{l|}{GMMN}  & 0.0318 & 0.0505 & 0.0537 & 0.0868 & 0.0679 & 0.0537 \\ \midrule
\multicolumn{7}{c}{$L_1$-norm of real and generated cross correlation matrices}                                                                                                                                                                    \\ \midrule
\multicolumn{1}{l|}{SigCWGAN}  & \textbf{0.0060} & 0.0097 & 0.0122 & \textbf{0.0040} & \textbf{0.0054} & 0.0122 \\
\multicolumn{1}{l|}{CWGAN}  & 0.0820 & 0.1909 & 0.0048 & 0.0254 & 0.1592 & 0.0048 \\
\multicolumn{1}{l|}{TimeGAN}  & 0.0435 & 0.0243 & 0.0134 & 0.0401 & 0.0441 & 0.0134 \\
\multicolumn{1}{l|}{RCGAN}  & 0.0669 & 0.0286 & 0.0160 & 0.1614 & 0.1551 & 0.0160 \\
\multicolumn{1}{l|}{GMMN}  & 0.0066 & \textbf{0.0006} & \textbf{0.0014} & 0.0110 & 0.0103 & \textbf{0.0014} \\  \midrule
\multicolumn{7}{c}{$R^2$ obtained from TSTR. (TRTR first row.)}                                                                                                                                                                                    \\ \midrule
\multicolumn{1}{l|}{TRTR}     & 0.0420 & 0.2563 & 0.6467 & 0.6421 & 0.6444 & 0.6467 \\ \midrule
\multicolumn{1}{l|}{SigCWGAN}  & \textbf{0.0406} & \textbf{0.2552} & \textbf{0.6458} & \textbf{0.6416} & \textbf{0.6438} & \textbf{0.6458} \\
\multicolumn{1}{l|}{CWGAN}  & -0.0019 & 0.1573 & 0.5901 & 0.5850 & 0.6038 & 0.5901 \\
\multicolumn{1}{l|}{TimeGAN}  & 0.0337 & 0.2327 & 0.6298 & 0.6239 & 0.6344 & 0.6298 \\
\multicolumn{1}{l|}{RCGAN}  & 0.0295 & 0.2130 & 0.6166 & 0.5997 & 0.5984 & 0.6166 \\
\multicolumn{1}{l|}{GMMN}  & 0.0291 & 0.2296 & 0.6156 & 0.5823 & 0.5943 & 0.6156 \\ \midrule
\multicolumn{7}{c}{Sig-$W_1$ distance}                                                                                                                                                                                                               \\ \midrule
\multicolumn{1}{l|}{SigCWGAN}  & \textbf{0.1913} & \textbf{0.1590} & \textbf{0.1190} & \textbf{0.2535} & \textbf{0.1235} & \textbf{0.1190} \\
\multicolumn{1}{l|}{CWGAN}  & 0.2684 & 0.2702 & 0.2244 & 0.3487 & 0.2158 & 0.2244 \\
\multicolumn{1}{l|}{TimeGAN}  & 0.2057 & 0.2036 & 0.1372 & 0.2719 & 0.1445 & 0.1372 \\
\multicolumn{1}{l|}{RCGAN}  & 0.2116 & 0.2165 & 0.1657 & 0.3292 & 0.2386 & 0.1657 \\
\multicolumn{1}{l|}{GMMN}  & 0.2118 & 0.1831 & 0.1508 & 0.2977 & 0.1761 & 0.1508 \\ 
\bottomrule
\end{tabularx}
\end{table}

\begin{table}[]
\caption{Numerical results of $\operatorname{VAR}(1)$ for $d = 3$}\label{table_Var_dim=3}
\centering
\begin{tabularx}{1.1\textwidth}{l|Y|Y|Y||Y|Y|Y}
\toprule
\multicolumn{1}{c|}{}         & \multicolumn{3}{c||}{Temporal Correlations (fixing $\sigma=0.8$)} & \multicolumn{3}{c}{Feature Correlations (fixing $\phi=0.8$)} \\ \midrule
\multicolumn{1}{c|}{Settings} & $\phi=0.2$          & $\phi=0.5$          & $\phi=0.8$          & $\sigma=0.2$       & $\sigma=0.5$       & $\sigma=0.8$       \\ \midrule
\multicolumn{7}{c}{Metric on marginal distribution}                                                                                                            \\ \midrule
\multicolumn{1}{l|}{SigCWGAN}  & 0.0254 & 0.0112 & \textbf{0.0077} & \textbf{0.0085} & \textbf{0.0076} & \textbf{0.0077} \\
\multicolumn{1}{l|}{CWGAN}  & 0.0142 & 0.0148 & 0.0194 & 0.0210 & 0.0113 & 0.0194 \\
\multicolumn{1}{l|}{TimeGAN}  & 0.0222 & 0.0218 & 0.0193 & 0.0188 & 0.0110 & 0.0193 \\
\multicolumn{1}{l|}{RCGAN}  & 0.0112 & 0.0157 & 0.0121 & 0.0156 & 0.0159 & 0.0121 \\
\multicolumn{1}{l|}{GMMN}  & \textbf{0.0098} & \textbf{0.0092} & 0.0101 & 0.0176 & 0.0162 & 0.0101 \\ \midrule
\multicolumn{7}{c}{Absolute difference of lag-1 autocorrelation}                                                                                              \\ \midrule
\multicolumn{1}{l|}{SigCWGAN}  & \textbf{0.0137} & \textbf{0.0066} & \textbf{0.0054} & \textbf{0.0045} & \textbf{0.0025} & \textbf{0.0054} \\
\multicolumn{1}{l|}{CWGAN}  & 0.0590 & 0.0242 & 0.1325 & 0.0864 & 0.0785 & 0.1325 \\
\multicolumn{1}{l|}{TimeGAN}  & 0.0554 & 0.0385 & 0.0374 & 0.1219 & 0.0879 & 0.0374 \\
\multicolumn{1}{l|}{RCGAN}  & 0.0864 & 0.0532 & 0.0217 & 0.1434 & 0.1303 & 0.0217 \\
\multicolumn{1}{l|}{GMMN}  & 0.0315 & 0.0584 & 0.0968 & 0.1183 & 0.1348 & 0.0968 \\ \midrule
\multicolumn{7}{c}{$L_1$-norm of real and generated cross correlation matrices}                                                                               \\ \midrule
\multicolumn{1}{l|}{SigCWGAN}  & \textbf{0.0331} & \textbf{0.0498} & \textbf{0.0055} & \textbf{0.0532} & \textbf{0.0401} & \textbf{0.0055} \\
\multicolumn{1}{l|}{CWGAN}  & 0.0628 & 0.2067 & 0.2812 & 0.4365 & 0.2071 & 0.2812 \\
\multicolumn{1}{l|}{TimeGAN}  & 0.6549 & 0.3619 & 0.1542 & 0.2644 & 0.3153 & 0.1542 \\
\multicolumn{1}{l|}{RCGAN}  & 0.4552 & 0.3441 & 0.0500 & 0.1448 & 0.4355 & 0.0500 \\
\multicolumn{1}{l|}{GMMN}  & 0.0811 & 0.1225 & 0.2405 & 0.3018 & 0.3883 & 0.2405 \\ \midrule
\multicolumn{7}{c}{$R^2$ obtained from TSTR. (TRTR first row.)}                                                                                               \\ \midrule
TRTR                         & 0.0420 & 0.2532 & 0.6509 & 0.6459 & 0.6485 & 0.6509 \\ \midrule
\multicolumn{1}{l|}{SigCWGAN}  & \textbf{0.0388} & \textbf{0.2490} & \textbf{0.6492} & \textbf{0.6446} & \textbf{0.6469} & \textbf{0.6492} \\
\multicolumn{1}{l|}{CWGAN}  & -0.0150 & 0.1770 & 0.5928 & 0.5462 & 0.5676 & 0.5928 \\
\multicolumn{1}{l|}{TimeGAN}  & -0.0088 & 0.2039 & 0.6045 & 0.5600 & 0.6026 & 0.6045 \\
\multicolumn{1}{l|}{RCGAN}  & 0.0092 & 0.1994 & 0.5921 & 0.5064 & 0.5456 & 0.5921 \\
\multicolumn{1}{l|}{GMMN}  & -0.0115 & 0.1683 & 0.5388 & 0.4899 & 0.4920 & 0.5388 \\ \midrule
\multicolumn{7}{c}{Sig-$W_1$ distance}                                                                                                                        \\ \midrule
\multicolumn{1}{l|}{SigCWGAN}  & \textbf{0.4289} & \textbf{0.3817} & \textbf{0.2374} & \textbf{0.2648} & \textbf{0.3999} & \textbf{0.2374} \\
\multicolumn{1}{l|}{CWGAN}  & 0.4653 & 0.4173 & 0.3226 & 0.3875 & 0.4692 & 0.3226 \\
\multicolumn{1}{l|}{TimeGAN}  & 0.5030 & 0.4321 & 0.3087 & 0.3753 & 0.4415 & 0.3087 \\
\multicolumn{1}{l|}{RCGAN}  & 0.4751 & 0.4418 & 0.3034 & 0.4334 & 0.4859 & 0.3034 \\
\multicolumn{1}{l|}{GMMN}  & 0.4621 & 0.4159 & 0.3151 & 0.3946 & 0.4939 & 0.3151 \\ \midrule
\bottomrule

\end{tabularx}
\end{table}
Additionally apart from Figure \ref{Fig_VAR_R2} the $R^2$ comparison, we provide the bar charts to compare the performance of different methods on the VAR data in terms of other test metrics in Figure \ref{fig:var_only_comparison_density} to $\sim$ \ref{fig:var_only_comparison_cross_correl}.
\paragraph{Training stability}
Figures~\ref{fig:VAR(1) dim2 score metrics} and~\ref{fig:VAR(1) dim3 score metrics} demonstrate the stability of the SigCWGAN optimisation in terms of training iterations in contrast to other baselines, in particular two baselines involving the min-max game optimization. 

\begin{figure}[b]
    \centering
    \includegraphics[width=0.9\textwidth]{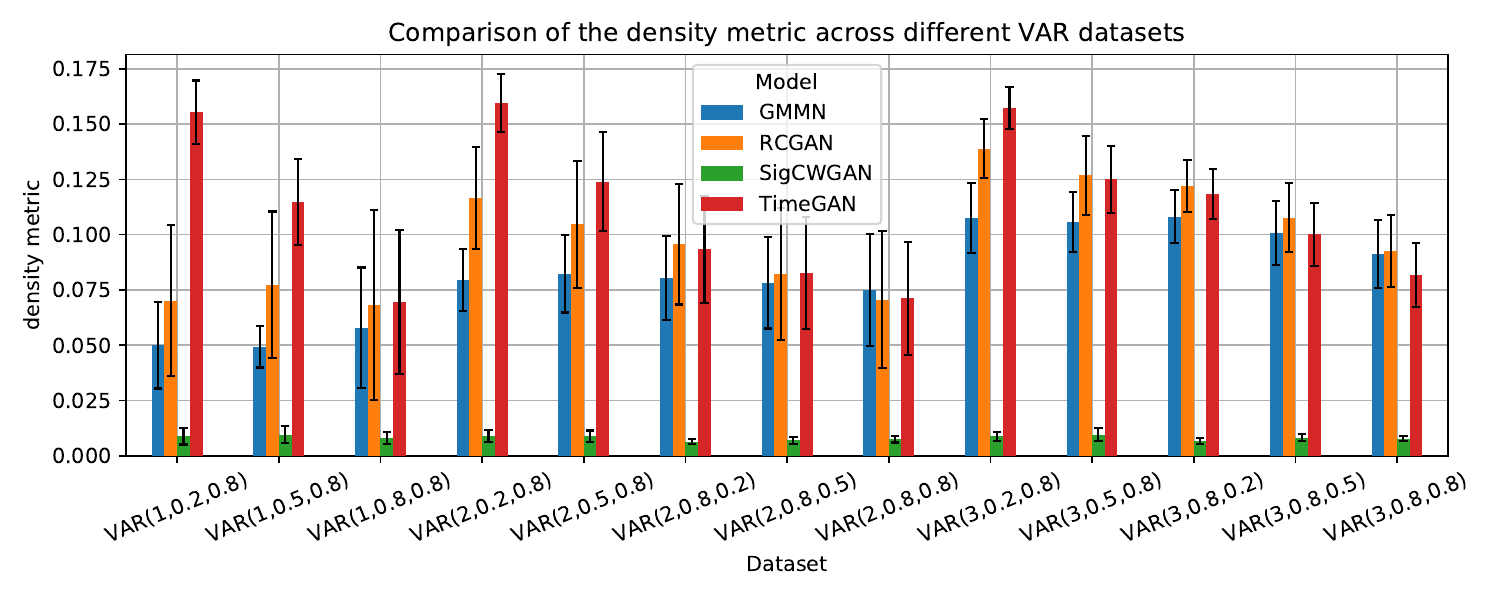}
    \caption{Comparison of the performance on the density metric across all algorithms and benchmarks.}
    \label{fig:var_only_comparison_density}
\end{figure}
\begin{figure}[b]
    \centering
    \includegraphics[width=0.9\textwidth]{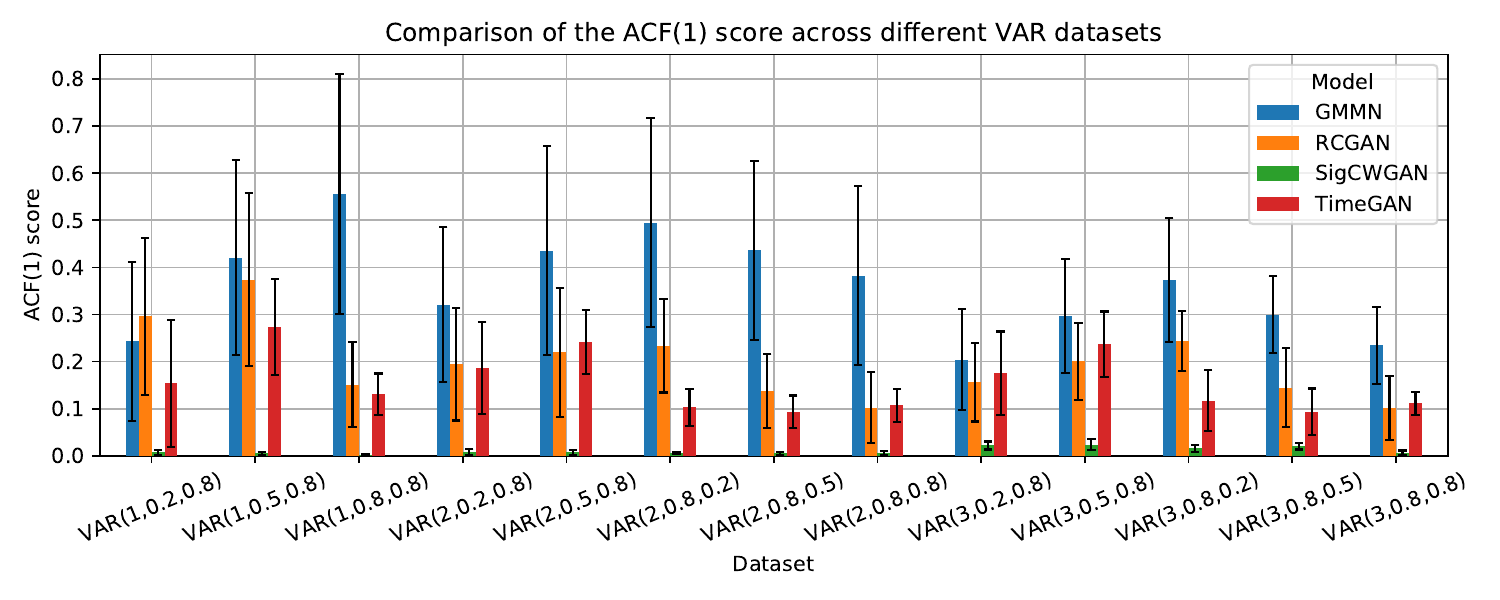}
    \caption{Comparison of the performance on the absolute difference of lag-1 autocorrelation across all algorithms and benchmarks.}
    \label{fig:var_only_comparison_acf}
\end{figure}
\begin{figure}[t]
    \centering
    \includegraphics[width=0.9\textwidth]{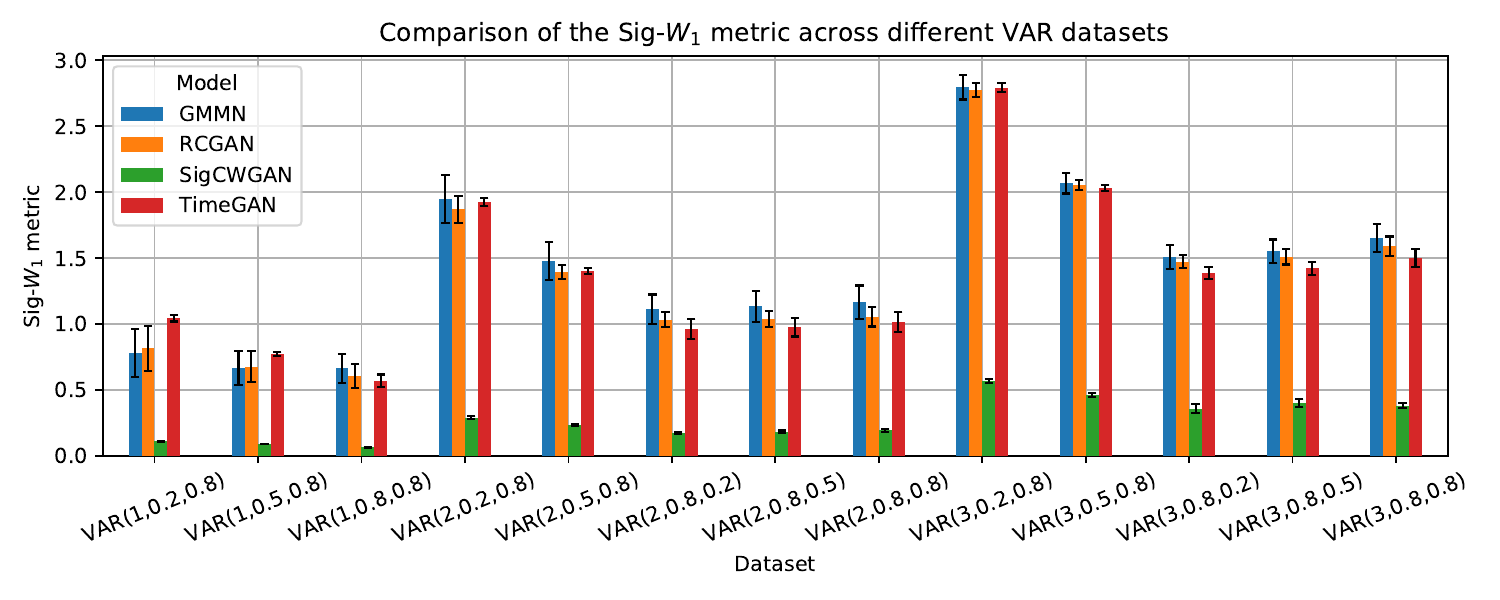}
    \caption{Comparison of the performance on the Sig-$W_1$ metric across all algorithms and benchmarks.}
    \label{fig:var_only_comparison_sig_w1}
\end{figure}
\begin{figure}[t]
    \centering
    \includegraphics[width=0.9\textwidth]{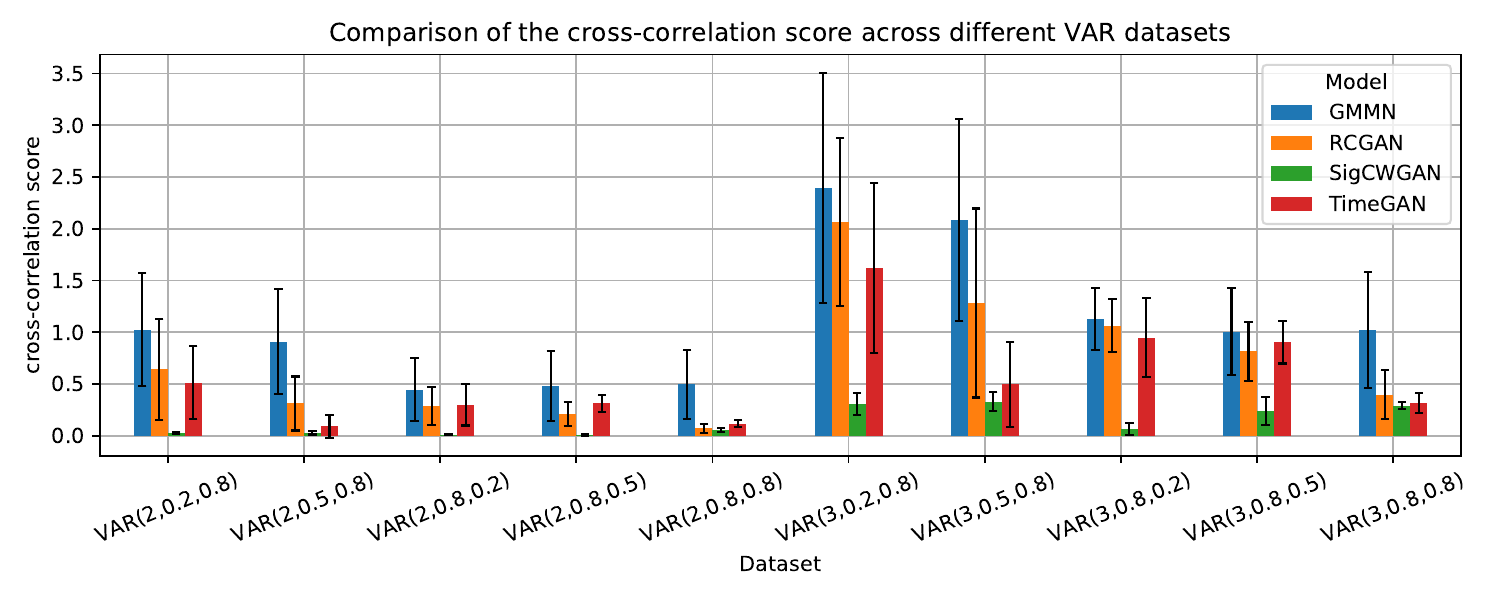}
    \caption{Comparison of the performance on the cross-correlation metric across all algorithms and benchmarks.}
    \label{fig:var_only_comparison_cross_correl}
\end{figure}

\clearpage

\begin{figure}
	\centering
	\begin{subfigure}{\linewidth}
		\centering
		\includegraphics[width=\textwidth]{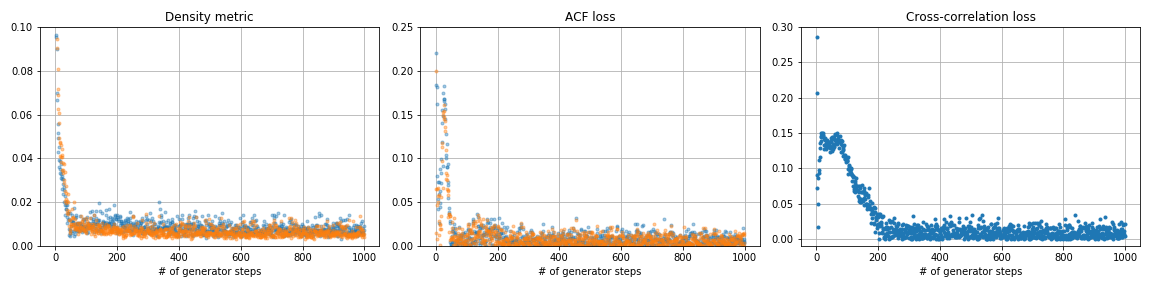}
		\caption{SigCWGAN}
	\end{subfigure}
	\begin{subfigure}{\linewidth}
		\centering
		\includegraphics[width=\textwidth]{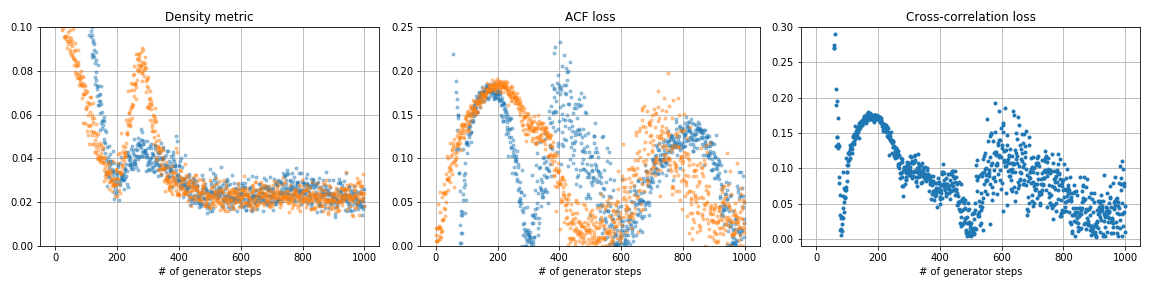}
		\caption{TimeGAN}
	\end{subfigure}
	\begin{subfigure}{\linewidth}
		\centering
		\includegraphics[width=\textwidth]{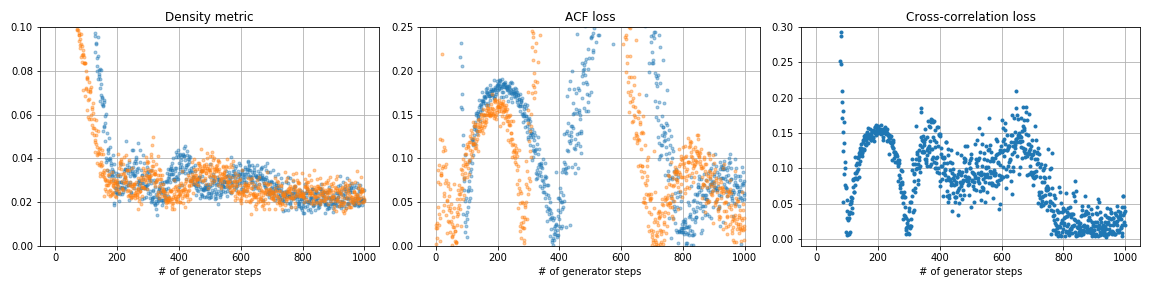}
		\caption{RCGAN}	
	\end{subfigure}
	\begin{subfigure}{\linewidth}
		\centering
		\includegraphics[width=\textwidth]{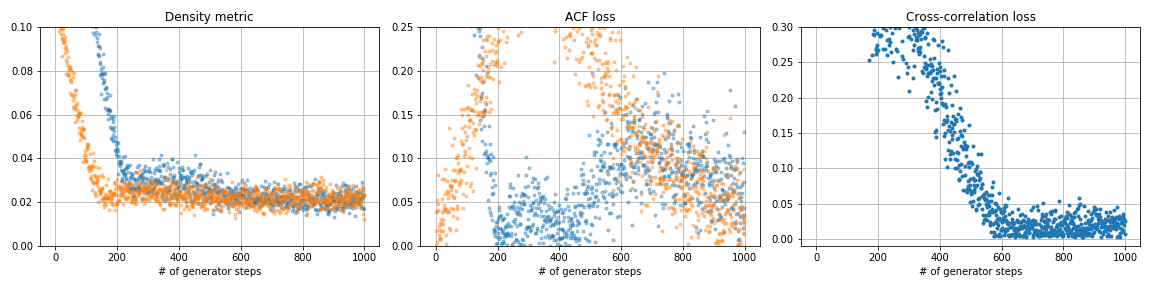}
		\caption{GMMN}
	\end{subfigure}
\caption{Exemplary development of the considered distances and score functions during training for the $2$-dimensional $\operatorname{VAR}(1)$ model with autocorrelation coefficient $\phi=0.8$ and co-variance parameter $\sigma=0.8$. The colours blue and orange indicate the relevant distance / score for each dimension.}
\label{fig:VAR(1) dim2 score metrics}
\end{figure}
\clearpage
\begin{figure}
	\centering
	\begin{subfigure}{\linewidth}
		\centering
		\includegraphics[width=\textwidth]{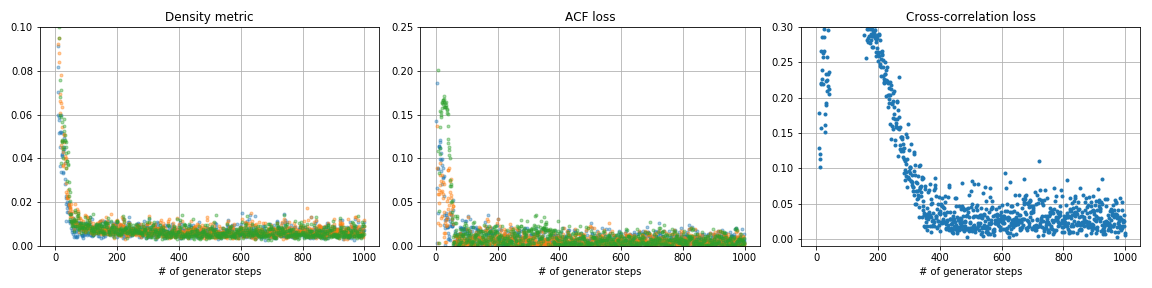}
		\caption{SigCWGAN}
	\end{subfigure}
	\begin{subfigure}{\linewidth}
		\centering
		\includegraphics[width=\textwidth]{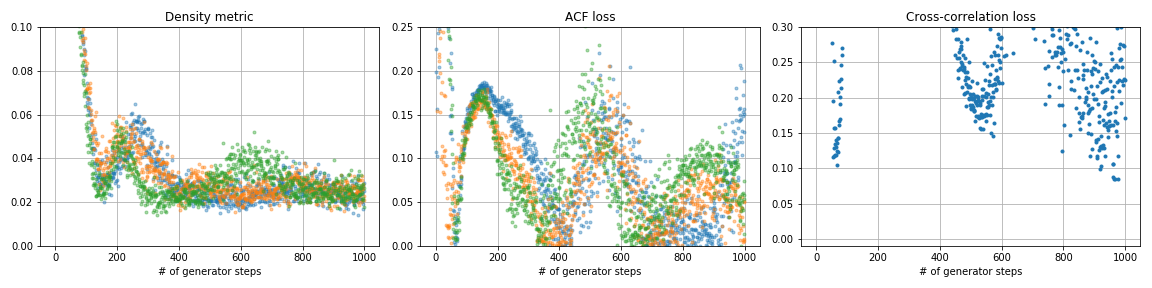}
		\caption{TimeGAN}
	\end{subfigure}
	\begin{subfigure}{\linewidth}
		\centering
		\includegraphics[width=\textwidth]{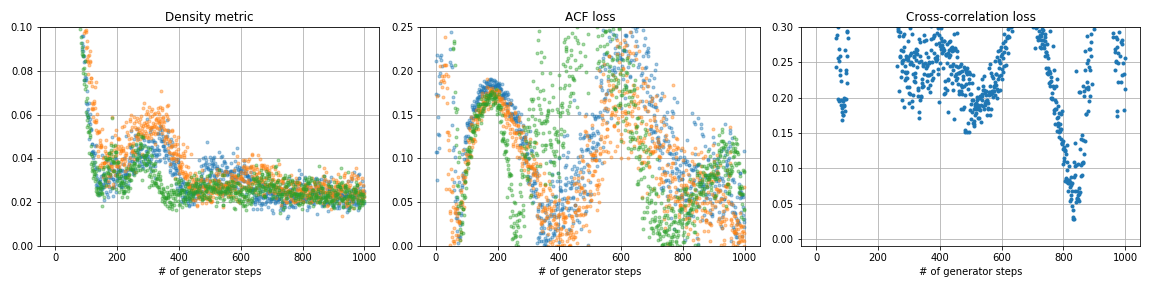}
		\caption{RCGAN}	
	\end{subfigure}
	\begin{subfigure}{\linewidth}
		\centering
		\includegraphics[width=\textwidth]{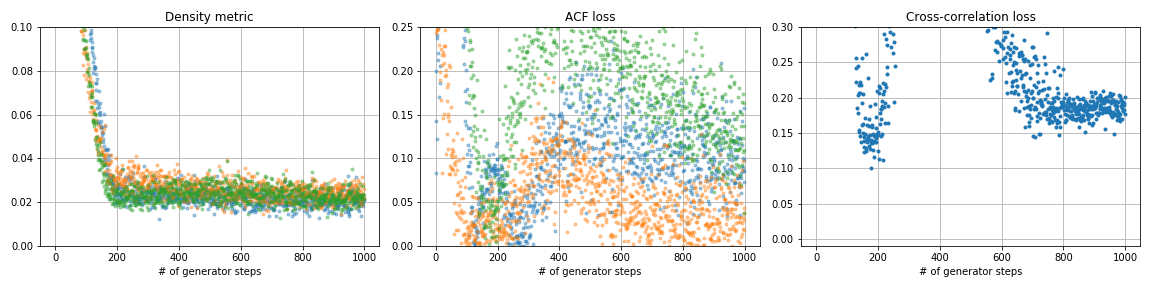}
		\caption{GMMN}
	\end{subfigure}
	\caption{Exemplary development of the considered distances and score functions during training for the $3$-dimensional $\operatorname{VAR}(1)$ model with autocorrelation coefficient $\phi=0.8$ and co-variance parameter $\sigma=0.8$. The colours blue, orange and green indicate the relevant distance / score for each dimension.}
	\label{fig:VAR(1) dim3 score metrics}
\end{figure}
\clearpage
\subsection{ARCH(p)}
We implement extensive experiments on ARCH(p) with different $p-$lag values, i.e. $p\in\{2,3,4\}$. We choose the optimal degree of signature 3. The numerical results are summarized in Table~\ref{table_arch}. The best results among all the models are highlighted in bold.

\newcolumntype{Y}{>{\centering\arraybackslash}X}
\begin{table}[!ht]
\caption{Numerical results of the ARCH(p) datasets.}
\centering 
\label{table_arch}
\begin{tabularx}{.7\textwidth}{l|Y || Y|| Y}
\toprule
Settings & \multicolumn{1}{c||}{$p=2$} & \multicolumn{1}{c||}{$p=3$} & \multicolumn{1}{c}{$p=4$}\\ \midrule
\multicolumn{4}{c}{Metric on marginal distribution}                 \\\midrule
SigCWGAN   & 0,00918                 & 0,00880    &     \textbf{0.01142}     \\
TimeGAN   & 0,02569                 & 0,02119               &     0.2191   \\
RCGAN     & 0,01069                 & 0,01612               &     0.01182   \\
GMMN      & \textbf{0,00744}        & \textbf{0,00783}               &    0.01259    \\\midrule
\multicolumn{4}{c}{Absolute difference of lag-1 autocorrelation}                                           \\\midrule
SigCWGAN   & \textbf{0,00542}                 & \textbf{0,00852}      &     \textbf{0.01106}   \\
TimeGAN   & 0,01714                 & 0,02401                &    0.03267   \\
RCGAN     & 0,05372                 & 0,01685                &     0.04879  \\
GMMN      & 0,02056        & 0,00859                &    0.01441   \\\midrule
\multicolumn{4}{c}{$L_1$-norm of real and generated cross correlation matrices}                               \\\midrule
SigCWGAN   & 0,00462        & \textbf{0,00546}      &    0.00489    \\
TimeGAN   & \textbf{0,00315}                 & 0,06551               &    0.04408    \\
RCGAN     & 0,01604                 & 0,08823              &     \textbf{0.00235}   \\
GMMN      & 0,04326                 & 0,03930               &    0.01603    \\\midrule
\multicolumn{4}{c}{$R^2$ obtained from TSTR. (TRTR first row.)}                   
\\\midrule
TRTR      & {0,32168}        & {0,32615}       &   0.33305    \\ \midrule
SigCWGAN   & \textbf{0,31623}                 & \textbf{0,31913}        &     \textbf{0.31642}  \\
TimeGAN   & 0,30835                 & 0,30556                &     0.30240  \\
RCGAN     & 0,31146        & 0,30727                &    0.30924   \\
GMMN      & 0,27982                 & 0,28072                &    0.30742   \\\midrule
\multicolumn{4}{c}{Sig-$W_1$ distance}                                          \\\midrule
SigCWGAN   & \textbf{0,12210}        & \textbf{0,14682}      &     \textbf{0.14098}   \\
TimeGAN   & 0,20228                 & 0,22761               &    0.23398    \\
RCGAN     & 0,18781                 & 0,20943               &    0.21876    \\
GMMN      & 0,26797                 & 0,26853               &     0.25811   \\ \bottomrule
\end{tabularx}
\end{table}
\clearpage

\subsection{SPX and DJI dataset}
We provide the supplementary results on the SPX and DJI dataset. The summary of test metrics of different models is given by Table \ref{table_stocks}. The test metrics over the training process of each method on (1) SPX dataset and (2) SPX and DJI dataset can be found in Figure \ref{Fig:loss_SPX} and Figure \ref{fig:loss_spx_dji}. The fitting of different models in terms of the cross-correlation matrix of the log-return and log-realized volatility of SPX are presented in Figure \ref{Fig:Correlation_Comparison_SPX}.

\newcolumntype{Y}{>{\centering\arraybackslash}X}
\begin{table}[!ht]
\caption{Numerical results of the stocks datasets.}
\centering 
\label{table_stocks}
\resizebox{0.65\columnwidth}{!}{
\begin{tabularx}{.7\textwidth}{l| Y || Y}
\toprule
Data type & \multicolumn{1}{c||}{SPX} & \multicolumn{1}{c}{SPX + DJI} \\ \midrule
\multicolumn{3}{c}{Metric on marginal distribution}                 \\\midrule
SigCWGAN  & 0.0142 & \textbf{0.0093} \\
CWGAN  & 0.0111 & 0.0151 \\
TimeGAN  & 0.0089 & 0.0156 \\
RCGAN  & \textbf{0.0088} & 0.0140 \\
GMMN  & 0.0107 & 0.0181 \\ \midrule
\multicolumn{3}{c}{Absolute difference of lag-1 autocorrelation}                                           \\\midrule
SigCWGAN  & 0.0302 & 0.0447 \\
CWGAN  & 0.1617 & 0.0571 \\
TimeGAN  & \textbf{0.0180} & 0.0232 \\
RCGAN  & 0.0350 & 0.0515 \\
GMMN  & 0.0273 & \textbf{0.0106} \\ \midrule
\multicolumn{3}{c}{$L_1$-norm of real and generated cross correlation matrices}                               \\\midrule
SigCWGAN  & 0.0503 & \textbf{0.0747} \\
CWGAN  & \textbf{0.0131} & 0.3908 \\
TimeGAN  & 0.0793 & 0.5401 \\
RCGAN  & 0.0654 & 0.3959 \\
GMMN  & 0.0409 & 0.2103 \\ \midrule
\multicolumn{3}{c}{$R^2$ obtained from TSTR. (TRTR first row.)}                   
\\\midrule
TRTR      & 0.3689 & 0.3731 \\ \midrule
SigCWGAN  & \textbf{0.3576} & 0.3466 \\
CWGAN  & 0.2744 & 0.2694 \\
TimeGAN  & 0.3551 & \textbf{0.3602} \\
RCGAN  & 0.3037 & 0.3532 \\
GMMN  & 0.3375 & 0.3368 \\ \midrule
\multicolumn{3}{c}{Sig-$W_1$ distance}                                          \\\midrule
SigCWGAN  & \textbf{0.0985} & \textbf{0.1307} \\
CWGAN  & 0.1684 & 0.2881 \\
TimeGAN  & 0.1265 & 0.2321 \\
RCGAN  & 0.1462 & 0.2353 \\
GMMN  & 0.1257 & 0.2448 \\ \bottomrule
\end{tabularx}}
\end{table}

\clearpage

\begin{figure}
	\centering
	\begin{subfigure}{\linewidth}
		\centering
		\includegraphics[width=\textwidth]{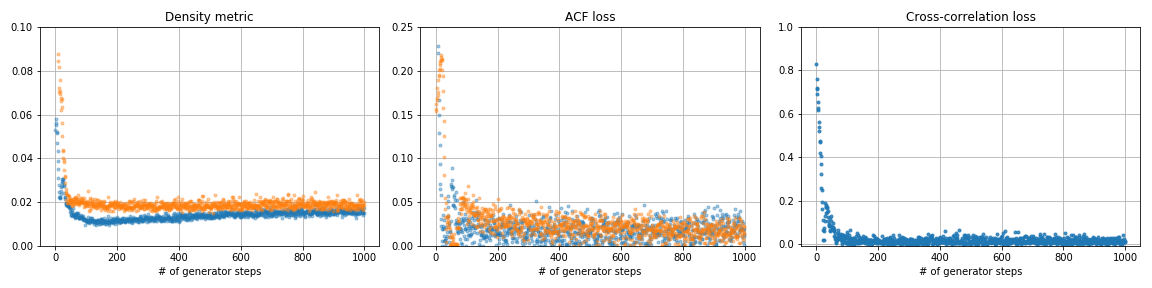}
		\caption{SigCWGAN}
	\end{subfigure}
	\begin{subfigure}{\linewidth}
		\centering
		\includegraphics[width=\textwidth]{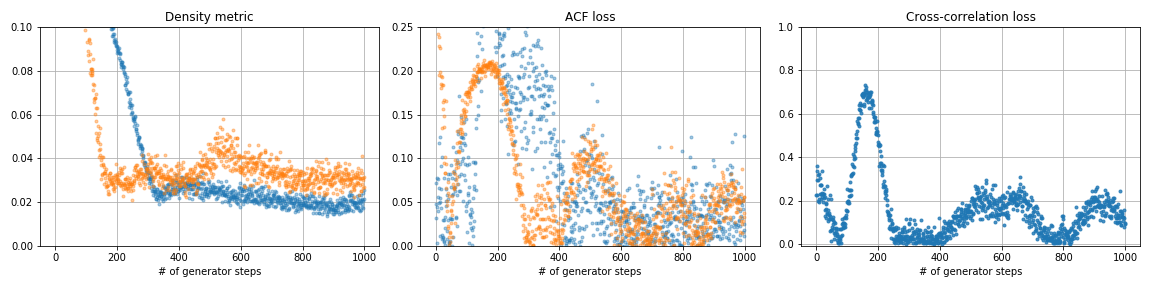}
		\caption{TimeGAN}
	\end{subfigure}
	\begin{subfigure}{\linewidth}
		\centering
		\includegraphics[width=\textwidth]{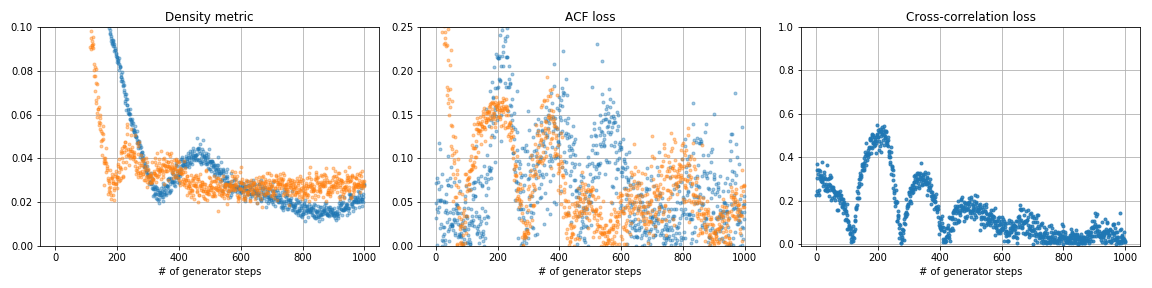}
		\caption{RCGAN}	
	\end{subfigure}
	\begin{subfigure}{\linewidth}
		\centering
		\includegraphics[width=\textwidth]{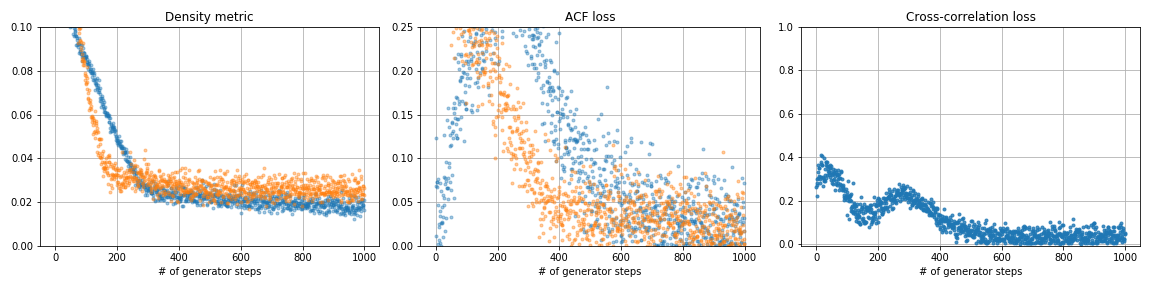}
		\caption{GMMN}
	\end{subfigure}
\caption{Exemplary development of the considered distances and score functions during training for SPX data.}\label{Fig:loss_SPX}
\end{figure}

\clearpage

\begin{figure}
	\centering
	\begin{subfigure}{\linewidth}
		\centering
		\includegraphics[width=\textwidth]{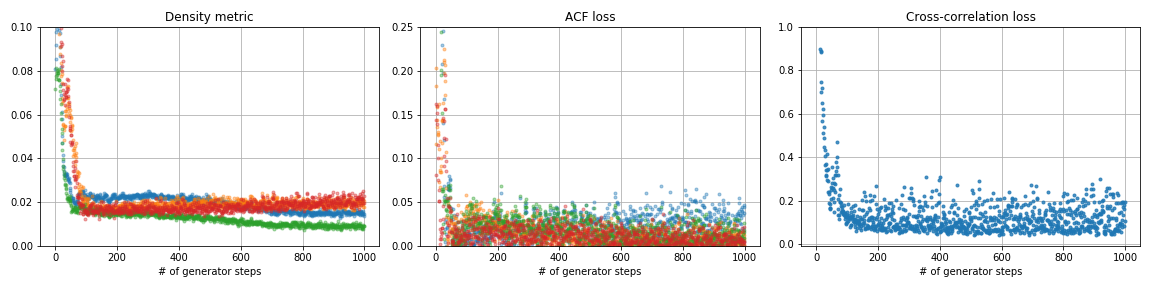}
		\caption{SigCWGAN}
	\end{subfigure}
	\begin{subfigure}{\linewidth}
		\centering
		\includegraphics[width=\textwidth]{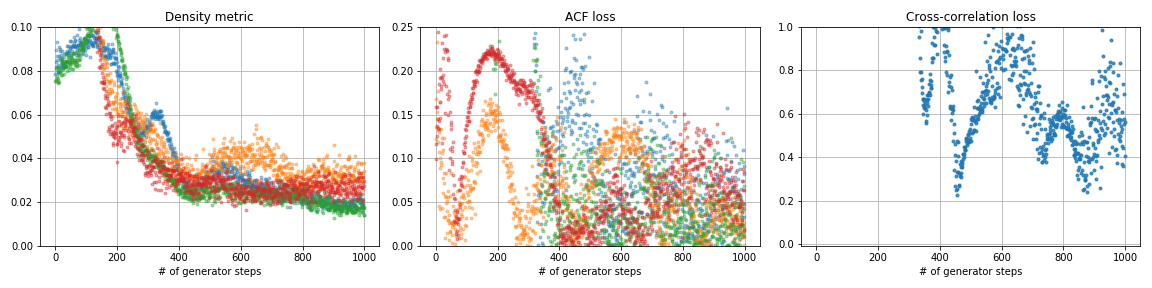}
		\caption{TimeGAN}
	\end{subfigure}
	\begin{subfigure}{\linewidth}
		\centering
		\includegraphics[width=\textwidth]{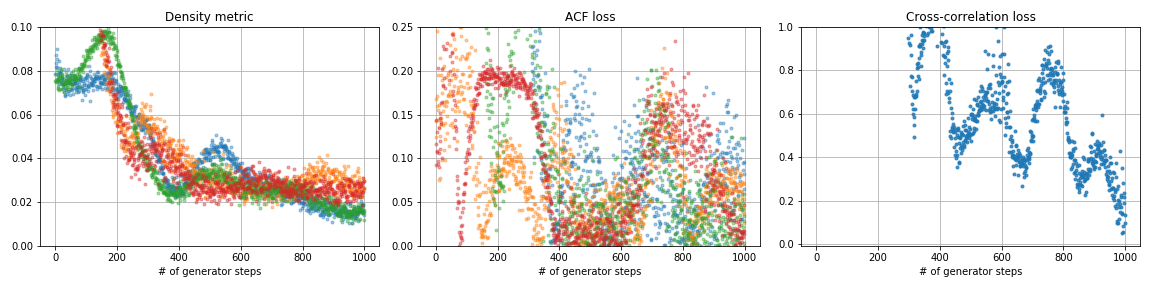}
		\caption{RCGAN}	
	\end{subfigure}
	\begin{subfigure}{\linewidth}
		\centering
		\includegraphics[width=\textwidth]{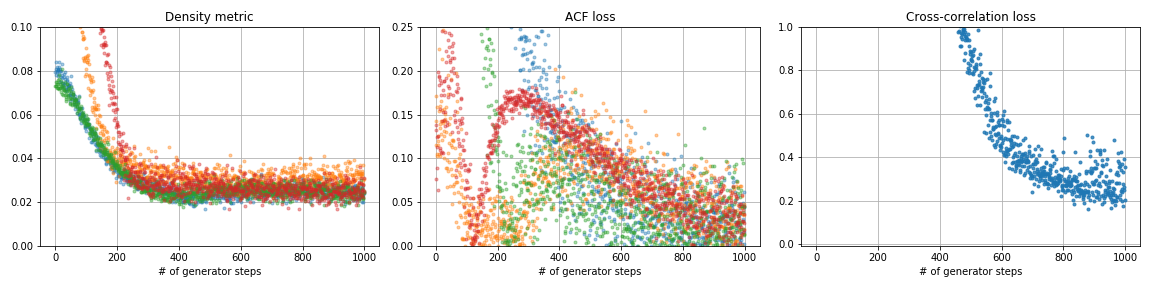}
		\caption{GMMN}
	\end{subfigure}
\caption{ExemplARCHry development of the considered distances and score functions during training for SPX and DJI data.}\label{fig:loss_spx_dji}
\end{figure}

\clearpage
\begin{figure}[!ht]
\centering
\includegraphics[width=\textwidth]{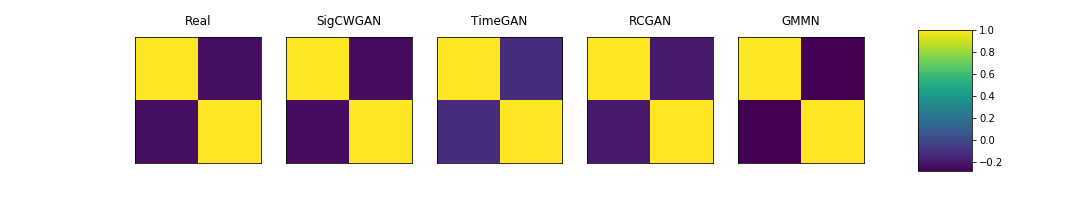}
\caption{Comparison of real and synthetic cross-correlation matrices for SPX data. On the far left the real cross-correlation matrix from SPX log-return and log-volatility data is shown. $x$/$y$-axis represents the feature dimension while the color of the $(i, j)^{th}$ block represents the correlation of $X_{t}^{(i)}$ and $X_{t}^{(j)}$. The colorbar on the far right indicates the range of values taken. Observe that the historical correlation between log-returns and log-volatility is negative, indicating the presence of leverage effects, i.e. when log-returns are negative, log-volatility is high.}\label{Fig:Correlation_Comparison_SPX}
\end{figure}

\begin{table}[!ht]
\centering
\resizebox{\columnwidth}{!}{%
\begin{tabularx}{1.4\textwidth}{||l|| Y || Y || Y || Y  || Y|| Y}
\toprule
\backslashbox{Model}{~Lag}&
\multicolumn{1}{c||}{1 }& \multicolumn{1}{c||}{2 } &   \multicolumn{1}{c||}{4}& 
 \multicolumn{1}{c||}{6} & \multicolumn{1}{c||}{8}    
\\ \bottomrule
SigCWGAN  & 2.996, 7.948 & \textbf{3.510}, 5.928
 & \textbf{3.801}, \textbf{7.439} & \textbf{3.944}, \textbf{9.103} & \textbf{5.534}, \textbf{10.742} \\\midrule
 TimeGAN  & 5.955, 8.586 & 8.470, 9.925 &  10.838, 14.816  & 13.163, 20.139 & 16.922, 22.870\\\midrule
RCGAN   & \textbf{2.788}, \textbf{7.190} & 3.701, \textbf{5.425} & 5.090, 9.407& 6.033, 12.424 & 9.380, 16.599 \\\midrule
GMMN & 9.049, 7.384 & 11.275, 9.150 &19.302, 14.466 & 25.832, 21.690 & 28.269, 24.778 \\\midrule
GARCH   &104.776, 100.749 & 99.359, 109.313  & 103.137, 109.53 & 102.939, 107.669 & 102.527, 104.779 \\\midrule
\end{tabularx}}
\caption{$R^{2}$ metric (\%) of the stock datasets for different lag values. In each cell, the left/right number are the result for the SPX data/ the SPX and DJI data respectively. }\label{table_relativeR2stock}
\end{table}

\begin{table}[!ht]
\centering
\resizebox{\columnwidth}{!}{%
\begin{tabularx}{1.4\textwidth}{||l|| Y || Y || Y || Y  || Y|| Y}
\toprule
\backslashbox{Model}{~Lag}&
\multicolumn{1}{c||}{1 }& \multicolumn{1}{c||}{2 } &   \multicolumn{1}{c||}{3}& 
 \multicolumn{1}{c||}{4} & \multicolumn{1}{c||}{5}    
\\ \bottomrule
SigCWGAN  & 0.01342, \textbf{0.01192} & 0.02234, 0.02576
 & 0.05744, \textbf{0.03592} & 0.07389, 0.08646 & 0.12571, 0.1057\\\midrule
 TimeGAN  & 0.05792, 0.03035 & 0.06070, 0.03182&  0.06823, 0.09887   & 0.05735, 0.10609 & \textbf{0.08387}, 0.15083\\\midrule
RCGAN   & 0.03362, 0.04075 &  0.03134, 0.03977 & 0.06692, 0.08859 & \textbf{0.05641}, 0.07687 & 0.09089, 0.11083 \\\midrule
GMMN & \textbf{0.01283}, 0.02676 & \textbf{0.0177}, \textbf{0.0253} & \textbf{0.04293}, 0.06476 & 0.06740, \textbf{0.06952} & 0.09589, \textbf{0.09906} \\\midrule
GARCH   & 0.4721, 0.4559& 0.4661, 0.4741  & 0.6198,0.6282& 0.7292,0.7312 & 0.8250,0.8212 \\\midrule
\end{tabularx}}
\caption{Autocorrelation metric for the stock datasets for different lag values. In each cell, the left/right number are the result for the SPX data/ the SPX and DJI data respectively.}\label{table_ACFstock}
\end{table}

\subsection{Bitcoin dataset}
We provide the additional numerical results on the Bitcoin dataset as follows.
\begin{table}[!ht]
\centering
\resizebox{\columnwidth}{!}{%
\begin{tabularx}{1.4\textwidth}{||l|| Y || Y || Y || Y  || Y|| Y}
\toprule
\backslashbox{Model}{~Lag}&
\multicolumn{1}{c||}{1 }& \multicolumn{1}{c||}{2 } &   \multicolumn{1}{c||}{3}& 
 \multicolumn{1}{c||}{4} & \multicolumn{1}{c||}{5}    
\\ \bottomrule
SigCWGAN  & 0.0911 & 0.2814 & 0.3 & 0.3021 & 0.3028  \\\midrule
TimeGAN   & 0.1203 & 0.2170 & 0.2312 & 0.2329 & 0.2568  \\\midrule
RCGAN   & 0.0533 & 0.1486 & 0.15 & 0.1654 & 0.1751  \\\midrule
GMMN & 0.2093 & 0.3436 & 0.3448 & 0.3478 & 0.4333 \\\midrule
GARCH & 0.0872 & 0.1314 & 0.1378 & 0.1471 & 0.1501 \\\midrule
\end{tabularx}}
\caption{Autocorrelation metric for the BTC dataset for different lag values}\label{table:btc autocorrelation lag}
\end{table}


\begin{table}[!ht]
\centering
\resizebox{\columnwidth}{!}{%
\begin{tabularx}{1.4\textwidth}{||l|| Y || Y || Y || Y  || Y}
\toprule
\backslashbox{Model}{~Lag}&
\multicolumn{1}{c||}{1 }& \multicolumn{1}{c||}{2 } &   \multicolumn{1}{c||}{4}& 
 \multicolumn{1}{c||}{6}
\\ \bottomrule
SigCWGAN  & 0.3320 & 0.2681 & 0.3843 & 0.3128  \\\midrule
TimeGAN   & 0.7582 & 0.5962 & 0.6394 & 0.6018  \\\midrule
RCGAN   & 0.3165 & 0.2191 & 0.2898 & 0.2261  \\\midrule
GMMN & 0.3904 & 0.3436 & 0.2583 & 0.3426  \\\midrule
GARCH & 123.77 & 87.35 & 96.47 & 117.34 \\\midrule
\end{tabularx}}
\caption{$R^{2}$ metric (\%) of the BTC dataset for different lag values. }\label{table:btc r2 lag}
\end{table}

\end{document}